\theoremstyle{thmstyletwo}%
\newtheorem{theorem}{Theorem}[section]
\newtheorem{proposition}[theorem]{Proposition}%
\newtheorem{example}[theorem]{Example}%
\newtheorem{remark}[theorem]{Remark}%
\newtheorem{definition}[theorem]{Definition}
\newtheorem{lemma}[theorem]{Lemma}
\newtheorem{assumption}[theorem]{Assumption}
\newtheorem{corollary}[theorem]{Corollary}
\numberwithin{equation}{section}
\newcommand{\V}{\mathsf{\mathbf{V}}}
\newcommand{\A}{\mathsf{\mathbf{A}}}
\newcommand{\U}{\mathsf{\mathbf{U}}}
\newcommand{\NN}{\mathsf{NN}}
\newcommand{\R}{\mathbb{R}}
\newcommand{\C}{\mathbb{C}}
\newcommand{\E}{\mathbb{E}}
\newcommand{\N}{\mathbb{N}}
\renewcommand{\P}{\mathbb{P}}
\newcommand{\cA}{\mathcal{A}}
\newcommand{\cE}{\mathcal{E}}
\newcommand{\cF}{\mathcal{F}}
\newcommand{\cG}{\mathcal{G}}
\newcommand{\cK}{\mathcal{K}}
\newcommand{\cL}{\mathcal{L}}
\newcommand{\cP}{\mathcal{P}}
\newcommand{\cQ}{\mathcal{Q}}
\newcommand{\cR}{\mathcal{R}}
\newcommand{\cV}{\mathcal{V}}
\newcommand{\cX}{\mathcal{X}}
\newcommand{\cY}{\mathcal{Y}}
\newcommand{\tR}{\tilde{\cR}}
\newcommand{\dz}{\;\mathsf{d}z}
\newcommand{\dx}{\;\mathsf{d}x}
\newcommand{\dy}{\;\mathsf{d}y}
\newcommand{\dt}{\; \mathsf{d}t}
\newcommand{\dc}{{d_c}}
\newcommand{\FNO}{\mathrm{FNO}}
\newcommand{\NO}{\mathrm{NO}}
\newcommand{\DON}{\mathrm{DON}}
\newcommand{\Alg}{\mathrm{Alg}}
\newcommand{\Prob}{\mathrm{Prob}}
\newcommand{\cond}{\,|\,}
\newcommand{\id}{\mathrm{id}}
\newcommand{\define}{\textbf}
\newcommand{\unif}{\mathrm{Unif}}
\newcommand{\Unif}{\unif}
\renewcommand{\tilde}{\widetilde}
\renewcommand{\hat}{\widehat}
\DeclareMathOperator*{\essinf}{ess\,inf}
\DeclareMathOperator*{\esssup}{ess\,sup}
\newcommand{\aint}{\fint}
\newcommand\p{\partial}
\newcommand{\set}[2]{{\left\{ #1 \,\middle|\, #2 \right\}}}
\newcommand{\slot}{{\,\cdot\,}}
\newcommand{\supp}{\mathrm{supp}}
\newcommand{\Lip}{\mathrm{Lip}}
\newcommand{\Span}{\mathrm{span}}
\renewcommand{\L}{{\lfloor \bl^\ast/2 \rfloor}}
\newcommand{\bl}{\mathbf{\ell}}
\renewcommand{\det}{\mathrm{det}}
\newcommand{\mt}[1]{\textcolor{black}{#1}}
\begin{document}

\copyrightyear{2026}
\firstpage{1}


\title[Theory-to-Practice Gap for NN and NO]{Theory-to-Practice Gap for Neural Networks and Neural Operators}

\author{Philipp Grohs\ORCID{0000-0001-9205-0969}
\address{\orgdiv{Department of Mathematics}, \orgname{University of Vienna}, \orgaddress{1090 Wien}, \state{Vienna}, \country{Austria}}}
\author{Samuel Lanthaler\ORCID{0000-0003-1911-246X}
\address{\orgname{G-Research}, \orgaddress{\street{1 Soho Pl}, \state{London}, \country{United Kingdom}}}}
\author{Margaret Trautner*\ORCID{0000-0001-9937-8393}
\address{\orgdiv{Department of Computing and Mathematical Sciences}, \orgname{California Institute of Technology}, \orgaddress{\street{1200 E California Blvd}, \state{Pasadena, CA}, \country{USA}}}}

\authormark{Grohs, Lanthaler, Trautner}

\corresp[*]{Corresponding author: \href{mailto:trautner@utexas.edu}{trautner@utexas.edu}}

\received{14}{11}{2025}


\abstract{This work studies the sample complexity of learning with ReLU neural networks and neural operators. For mappings belonging to relevant approximation spaces, we derive upper bounds on the best-possible convergence rate of any learning algorithm, with respect to the number of samples. In the finite-dimensional case, these bounds imply a gap between the parametric and sample complexities of learning, known as the \emph{theory-to-practice gap}. In this work, a unified treatment of the theory-to-practice gap is achieved in a general $L^p$-setting, while at the same time improving available bounds in the literature. Furthermore, based on these results the theory-to-practice gap is extended to the infinite-dimensional setting of operator learning. Our results apply to Deep Operator Networks and integral kernel-based neural operators, including the Fourier neural operator. We show that the best-possible convergence rate in a Bochner $L^p$-norm is bounded by rates of order $1/p$. }
\keywords{approximation theory; operator learning; neural networks; error analysis.}


\maketitle

\section{Introduction}

Deep learning has had remarkable success in a wide range of tasks such as speech recognition, computer vision, or natural language processing \cite{goodfellow2016deep}. Increasingly this methodology is also used in the scientific domain, with applications to protein folding \cite{jumper2021highly}, plasma physics \cite{degrave2022magnetic} and numerical weather prediction \cite{kurth2023fourcastnet,price2024probabilistic,bodnar2025foundation}. However, the theoretical underpinnings of this field remain incomplete, and significant advances are still required to understand the empirical success of neural networks in these diverse applications. In many of these tasks, the goal is to approximate an unknown mapping based on a dataset consisting of input- and output-pairs, so-called \emph{supervised learning}. This has motivated a surge of theoretical work aimed at deepening our understanding of supervised learning with neural networks.

Theoretical error estimates for supervised learning are usually obtained by splitting the overall error into two contributions \cite{cucker2002mathematical}: the approximation error and the generalization error. The approximation error measures the best-possible error that can be achieved by a given architecture. The generalization error bounds the difference between this best-possible error and the error achieved by empirical risk minimization, i.e. optimization based on a finite number of empirical data samples. This decomposition captures a trade-off between \emph{model expressivity} (or parametric complexity), and generalization from a finite amount of data, i.e. the \emph{sample complexity}.

The study of the expressivity of neural networks dates back several decades to foundational work such as that of Cybenko \cite{cybenko1989approximation} and Hornik, Stinchcombe, and White \cite{hornik1989multilayer}, which focused on qualitative universality theorems. Motivated by the need to better explain the empirical efficiency of deep neural networks in applications, there has recently been increased interest in deriving quantitative approximation guarantees. These relate achievable approximation errors to key factors, such as the depth, width or choice of activation function \cite{bolcskei2019optimal,YAROTSKY2017103,devore2021neural,elbrachter2021deep,grohs2023proof}. What is striking about these results is that the derived approximation rates are generally far superior to the rates that are achievable by classical numerical representations. This fact makes deep learning methods potentially appealing for applications in numerical analysis where a high convergence rate is often desired. 

\paragraph{The Theory-to-Practice Gap. }
Despite these encouraging results, any actual numerical approximation algorithm must operate with limited information on the function to be approximated -- typically in the form of a finite number of samples of the function itself, or samples of a local (differential) operator applied to the function. It is therefore a key question whether the theoretically established approximation rates can be retained under such limited information. 

For this reason, the sample complexity of deep learning has also been of recent focus. Of particular relevance to the present work are the articles \cite{grohs2023proof,berner2023learning,abdeljawad2023}. An informal summary of their results is as follows. First, define by $U^\alpha([0,1]^d)$ the set of functions on $[0,1]^d$ which, for any $n\in \N$, can be approximated by a neural network $\psi_n$ with at most $n$ non-zero weights, and with approximation error $\Vert f - \psi_n \Vert_{L^\infty} \le  n^{-\alpha}$. We refer to the approximation rate $\alpha$ as the \emph{parametric convergence rate}, since this rate holds with respect to the number of neural network parameters $n$.

Second, to address the sample complexity of computing such an approximation, \cite{grohs2023proof} considers (optimal) reconstruction methods aiming to reconstruct $f\in U^{\alpha}([0,1]^d)$ from point samples $f(x_1),\dots, f(x_N)$, and examines limits on the best possible guaranteed convergence rate -- \emph{not in terms of the number of parameters but in the number of samples $N$}. 

More formally, \cite{grohs2023proof} asks for the optimal convergence rate $\beta_\ast>0$ for which there exists a reconstruction method $A: f\mapsto Q(f(x_1),\dots , f(x_N))$ defined in terms of a reconstruction mapping $Q:\mathbb{R}^N\to L^p([0,1]^d)$, with a convergence guarantee of the form $\sup_{f\in U^\alpha}\Vert f - A(f) \Vert_{L^p} \le C N^{-\beta_\ast}$. Compared to the parametric convergence rate $\alpha$, which describes the optimal convergence rate in terms of the number of parameters, the rate $\beta_\ast$ is now in terms of the available information -- the $N$ function samples. This is of practical relevance, since a single function evaluation requires a certain minimal computational time, any upper bound on $\beta_\ast$ yields a corresponding lower bound on the time to compute an accurate approximation for $f\in U^\alpha$.

A naive counting argument based on the number of degrees of freedom would suggest that $\beta_\ast = \alpha$ coincides with the optimal parametric convergence rate $\alpha$. Indeed, it might be hoped that the determination of $n$ parameters of the approximating neural network $\psi_n$ requires $N=n$ point evaluations, implying that for $A(f) := \psi_n$, we have $\Vert f- A(f) \Vert_{L^\infty} \le n^{-\alpha} = N^{-\alpha}$. As is well-known, this intuition is indeed correct for reconstruction by several popular methods, including polynomials, trigonometric polynomials and certain kernel methods \cite{UlrichSamplingKernel}. However, the results of \cite{grohs2023proof} show that this intuition does not carry over to the sample complexity of neural network approximation spaces: In this case, there is a gap between $\beta_\ast$ and $\alpha$. In fact, even in the limit $\alpha \to \infty$, the optimal sample convergence rate $\beta_\ast$ remains uniformly bounded and, for $p=\infty$ it even holds that $\beta_\ast \lesssim \frac1d$ which implies the existence of a curse of dimension. The gap between $\alpha$ and $\beta_\ast$ is termed the \emph{theory-to-practice gap} as it describes the discrepancy between the complexity of a theoretically possible approximation and one that can actually be computed from the information at hand. 

A first contribution of this work is Theorem \ref{thm:main} which further extends and sharpens the bounds from \cite{grohs2023proof} in the setting of finite-dimensional function approximation. Roughly speaking, we show that in a general $L^p$-setting, $\beta_\ast \le \frac{1}{p} + \frac{1}{d}$ which means that for high input dimensions (i.e., large $d$), no actual algorithm is capable of beating the standard Monte-Carlo rate $\frac1p$ -- irrespective of the parametric convergence rate $\alpha$. Compared to results in \cite{grohs2023proof} which, roughly speaking, established bounds of the form $\beta_\ast \le \frac{1}{p} + 1$, this constitutes a significant improvement if the input dimension $d$ is large.

\paragraph{Operator Learning. }
In addition to the aforementioned works on neural networks in finite-dimensions, there has also been increasing interest in operator learning \cite{kls2024hna}. The aim of operator learning is to approximate non-linear operators $\cG: \cX \to \cY$, mapping between infinite-dimensional function spaces $\cX$ and $\cY$. In applications, such operators often arise as solution operators associated with a partial differential equation, but more general classes of operators can be considered. To approximate such $\cG$, operator learning frameworks generalize neural networks to this infinite-dimensional setting. Empirically, the most successful approach is usually based on supervised learning from training data, and hence the same questions as discussed above also arise in this context. 

Early work on operator learning dates back to a foundational paper by Chen and Chen \cite{chen1995universal}. Without any claim of completeness, we mention a number of works studying the parametric complexity of operator learning which aim to relate the model size to the achieved approximation accuracy, including both upper bounds on the required number of parameters, e.g. \cite{herrmann2024neural,KLM_JMLR2020,franco2023approximation,marcati2023}, as well as lower bounds, e.g. \cite{mhaskar1997neural,lanthaler2023pcanet,lanthaler2024parametric,lanthaler2024lipschitzoperators}. 
The data (or sample) complexity of operator learning has e.g. been studied in \cite{adcock2024optimal,liu2024deep,adcock2024learninglipschitzoperatorsrespect}. \mt{While the present work is restricted to the noiseless setting, some related work has investigated the sample complexity of operator learning in the setting of noisy data \cite{chen2026, de2023convergence, JMLR:v25:22-0719,reinhardt2024statistical, subedi2025operator}.} In connection with the present work, we also highlight \cite{NLM2024data}, where approximation spaces for the Fourier neural operator are introduced, and upper bounds on the sample complexity of learning on these spaces are discussed. Closely related spaces will be discussed in this work. These spaces, which are an infinite-dimensional generalization of the approximation spaces $U^\alpha$ introduced in \cite{grohs2023proof} (and described above), represent classes of Deep Operator Networks and kernel-integral based neural operators with a parametric approximation rate $\alpha$. They will be introduced and studied in Sections \ref{sec:don} and \ref{sec:no}. 

In the context we are able to generalize the theory-to-practice gap to the infinite-dimensional setting: Assuming access to an optimal algorithm for the reconstruction of the underlying mapping from $N$ samples, we show that the best achievable convergence rate $N^{-\beta_\ast}$ on the relevant approximation spaces is upper bounded by $1/p$, if the reconstruction error is measured in a Bochner $L^p$-norm. These results are provided in Theorems \ref{thm:main-don} and \ref{thm:unif-don} (Deep Operator Networks) and Theorems \ref{thm:main-no} and \ref{thm:unif-no} (kernel-integral based neural operators) and they uncover fundamental limits on the convergence guarantees that are possible in the context of operator learning: one cannot improve on the standard Monte-Carlo rate, irrespective of how high the parametric convergence rate may be.

In summary, this work makes the following contributions:
\begin{itemize}
\item We give a unified treatment of the theory-to-practice gap in a general $L^p$-setting, valid for arbitrary $p\in [1,\infty]$, sharpening available bounds in the literature. \mt{Specifically, the bounds we achieve in the finite-dimensional setting is sharper than that of \cite{grohs2023proof}.}
\item \mt{Going beyond the setting of \cite{grohs2023proof},} we extend the theory-to-practice gap to the infinite-dimensional setting of operator learning. Our results apply to Deep Operator Networks and integral kernel-based neural operators, including the Fourier neural operator.
\item For these architectures, we show that the optimal convergence rate in a Bochner $L^p$-norm is bounded by $\beta_\ast \le 1/p$, for any $p\in [1,\infty)$.
\item We furthermore show that $\beta_\ast = 0$ for \emph{uniform} approximation over a compact set of input functions, i.e. no algebraic convergence rates are possible.
\end{itemize}

\mt{Ultimately, the results of this work suggest that the practical limitation of learning operators arises from an information-theoretic barrier rather than a model expressivity barrier. In other words, one is limited by the ability to extract information from finite data samples.}

\subsection{Overview}
In Section \ref{sec:gap-finite} we first review neural network approximation spaces and relevant notions from sample complexity theory. In Section \ref{sec:ttp-gap}, we then state and prove our main result in the finite dimensional setting, Theorem \ref{thm:main}. In Section \ref{sec:gap-infinite}, we extend the finite-dimensional result to operator learning. After a brief review of relevant concepts, we discuss the approximation-theoretic setting in Section \ref{sec:input-functions}. In Section \ref{sec:finite-to-infinite} we state an abstract result, Proposition \ref{prop:Lpop}, which establishes a connection between the finite- and infinite-dimensional settings. Finally, based on this abstract result, we derive an infinite-dimensional theory-to-practice gap for approximation in $L^p$- and sup-norms. Section \ref{sec:don} discusses Deep Operator Networks, resulting in Theorems \ref{thm:main-don} and \ref{thm:unif-don}, respectively. Section \ref{sec:no} discusses kernel-integral based neural operators, resulting in Theorems \ref{thm:main-no} and \ref{thm:unif-no}. We end with conclusions and further discussion in Section \ref{sec:conclusion}.

\subsection{Notation}
 For a vector $v\in \R^d$, we indicate by $|v|$ the Euclidean norm, and for a matrix $A\in \R^{m\times d}$, we denote by $\Vert A \Vert = \sup_{|v|=1} |Av|$ its operator norm. Given a domain $D \subset \R^d$, we denote by $W^{1,\infty}(D;\R^m)$ the set of measurable functions $u: D \to \R^m$ with uniformly bounded values and uniformly bounded weak derivatives (Lebesgue almost everywhere). The corresponding norm is defined as 
\begin{align*}
    \|u\|_{W^{1,\infty}(D; \R^m)}= \Vert u \Vert_{L^\infty(D;\R^m)} + |u|_{W^{1,\infty}(D;\R^m)},
\end{align*}
where $|u|_{W^{1,\infty}(D;\R^m)} = \esssup_{x\in D} \Vert Du(x)\Vert$ denotes the $W^{1,\infty}$ seminorm. Given a topological space $\cX$, we will denote by $\cP(\cX)$ the set of all probability measures on $\cX$ under the Borel $\sigma$-algebra. For two measures $\mu$ and $\nu$, on a measure space $(\Omega, \Sigma)$, we will write $\mu \geq \nu$ to indicate that $\mu(B) \geq \nu(B)$ for any element $B \in \Sigma$.

\section{A generalized gap in finite dimension}
\label{sec:gap-finite}

The goal of this section is to derive new lower bounds on the sample complexity of ReLU neural network approximation spaces, in a finite-dimensional setting. To describe the mathematical setting, we recall relevant notions from \cite[Sect. 2.1 and 2.2]{grohs2023proof}, below.

\subsection{ReLU neural networks}
\label{sec:relu}

Following \cite{grohs2023proof}, a \define{neural network} is defined as a tuple $\psi = \left((A_1, b_1), \ldots, (A_L, b_L)\right)$, consisting of matrices $A_j \in \mathbb{R}^{d_j \times d_{j-1}}$ and bias vectors $b_j \in \mathbb{R}^{d_j}$. 
The number of layers $L(\psi) := L$ is the \define{depth} of $\psi$, and $W(\psi) := \sum_{j=1}^{L}\left(\|A_j\|_{\ell^0} + \|b_j\|_{\ell^0}\right)$ is used to denote the \define{number of (nonzero) weights} of $\psi$. Here, the notation $\|A\|_{\ell^0}$ refers to the number of nonzero entries of a matrix (or vector) $A$. Finally, we write $d_{\text{in}}(\psi) := d_0$ and $d_{\text{out}}(\psi) := d_L$ for the \define{input and output dimension} of $\psi$, and we set $\|\psi\|_{\NN} := \max_{j=1,\ldots,L} \max\{\|A_j\|_\infty, \|b_j\|_\infty\}$, where $\|A\|_\infty := \max_{i,j} |A_{i,j}|$.

The function $R_{\sigma} \psi$ computed by $\psi$, is defined via an activation function $\sigma$. In this paper, we restrict attention to ReLU networks, corresponding to $\sigma : \mathbb{R} \rightarrow \mathbb{R},\ x \mapsto \max\{0, x\}$, and acting componentwise on vectors. The \define{realization} $R_{\sigma} \psi : \mathbb{R}^{d_0} \rightarrow \mathbb{R}^{d_L}$ is then given by
\[
R_{\sigma} \psi := T_L \circ (\sigma \circ T_{L-1}) \circ \cdots \circ (\sigma \circ T_1) \quad \text{where} \quad T_j : 
    \left\{\begin{array}{ccc}\R^{d_{j-1}} & \to & \R^{d_j} \\ x & \mapsto & A_j x + b_j\end{array}\right..
\]
With slight abuse of notation, we usually do not distinguish notationally between $\psi$ and its ReLU-realization $R_\sigma\psi$. Thus, we simply say that $\psi: \R^{d_0} \to \R^{d_L}$ is a (ReLU) neural network, when referring to the realization associated with a specific setting of the weights matrices and biases $\psi = \left((A_1, b_1), \ldots, (A_L, b_L)\right)$.

\subsection{Neural network approximation spaces}
\label{sec:approx-space}
We next summarize the relevant approximation spaces $A^{\alpha,\infty}_{\bl}(D)$ for a domain $D\subset \R^d$; they depend on a generalized `smoothness' parameter $\alpha > 0$ and a depth-growth function $\bl = \bl(n)$. In short, $A^{\alpha,\infty}_{\bl}(D)$ contains all functions $f: D \to \R$ that can be uniformly approximated at rate $n^{-\alpha}$, by ReLU neural networks with at most $n$ non-zero weights and biases, depth at most $\bl(n)$, and weight magnitude at most $1$. \mt{The approximation spaces formulated below are similar to the approximation classes described in the seminal work \cite{devore2021neural} in Section 11.2. For the purposes of this work, we take the definitions a step further by putting additional depth and weight magnitude bounds on the ReLU neural network class $\Sigma_{d,n}^\ell$ defined below and defining a quasi-norm on the approximation class.}

More precisely, given an input dimension $d \in \mathbb{N}$ and a non-decreasing depth-growth function $\bl : \mathbb{N} \rightarrow \mathbb{N}\cup \{\infty\}$, we
define
\[
\Sigma_{d,n}^{\bl} := \set{ \psi: \R^d \to \R }{ 
\begin{array}{l}
\psi \text{ NN with } d_{\text{in}}(\psi) = d, d_{\text{out}}(\psi) = 1, \\
W(\psi) \leq n, L(\psi) \leq \bl(n), \|\psi\|_{\NN} \leq 1
\end{array}
}.
\]
Then, given a measurable subset $D \subset \mathbb{R}^{d},\ p \in [1,\infty],$ and $\alpha \in (0,\infty),$ for each
measurable $f : D \rightarrow \mathbb{R}$, we define
\[
\Gamma^{\alpha,p}(f) := \max \left\{ \|f\|_{L^{p}(D)},\ \sup_{n \in \mathbb{N}} \left[ n^{\alpha} \cdot d_{p,D}\left( f,\Sigma_{d,n}^{\bl}\right) \right] \right\} \in [0,\infty],
\]
where $d_{p,D}(f,\Sigma) := \inf_{g \in \Sigma} \|f - g\|_{L^{p}(D)}.$

As pointed out in \cite{grohs2023proof}, $\Gamma^{\alpha,p}$ is not a (quasi-)norm. However, one can define a neural network \define{approximation space quasi-norm} $\| \cdot \|_{A_{\bl}^{\alpha,p}}$ by
\[
\|f\|_{A_{\bl}^{\alpha,p}} := \inf \left\{\theta > 0 : \Gamma^{\alpha,p}(f/\theta) \leq 1\right\} \in [0,\infty],
\]
giving rise to the \define{approximation space}
\[
A_{\bl}^{\alpha,p} := A_{\bl}^{\alpha,p}(D) := \left\{f \in L^{p}(D) : \|f\|_{A_{\bl}^{\alpha,p}} < \infty\right\}.
\]
 We denote by $U^{\alpha,p}_{\bl}(D) \subset A_{\bl}^{\alpha,p}(D)$ the unit ball,
\[
U_{\bl}^{\alpha,p} := U_{\bl}^{\alpha,p}(D) := \left\{f \in L^{p}(D) : \|f\|_{A_{\bl}^{\alpha,p}} \le 1\right\}.
\]
As shown in \cite{grohs2023proof}, $U^{\alpha,p}_{\bl}(D)$ consists precisely of those $f$ for which $d_{p,D}(f,\Sigma_{d,n}^{\bl}) \le n^{-\alpha}$ for all $n\in \N$. We will focus on the special case $p=\infty$, in the following. 

The following quantity is crucial in characterizing the sample complexity of neural network approximation spaces \cite{grohs2023proof}:
\begin{align}
\label{eq:ellbd}
\bl^* := \sup_{n \in \mathbb{N}} \bl(n) \in \mathbb{N} \cup \{\infty\}.
\end{align}
It describes the maximal depth that is allowed for a neural network approximant of a given function.

\begin{remark}
We will later extend the definition of the approximation spaces $A^{\alpha,\infty}_{\bl}$ to the infinite-dimensional operator learning setting. Specifically, in Section \ref{sec:don} we define $\A^{\alpha,\infty}_{\bl,\DON}$ for a class of DeepONets and in Section \ref{sec:no}, we define $\A^{\alpha,\infty}_{\bl,\NO}$ for (integral-kernel based) Neural Operators.
\end{remark}

\subsection{Sample Complexity}
\label{sec:sample-complexity}

By definition, the approximation of a function $f \in A^{\alpha,\infty}_{\bl}(D)$ by neural networks can be achieved at a rate $n^{-\alpha}$, in terms of the required number of neural network parameters. This rate is fast, when $\alpha \gg 1$, thus allowing for efficient approximation in principle. Despite this fact, it has been shown in \cite{grohs2023proof} that such high rates, in terms of the parameter count $n$, do not imply correspondingly high convergence rates when considering the required number of samples to train such neural networks. This  phenomenon is referred to as the \emph{theory-to-practice gap}.

Below, we recall mathematical notions from \cite{grohs2023proof}, which quantify the sample complexity of a set of continuous functions $U \subset C(D)$, where $D \subset \R^d$ is a subdomain. We are mostly interested in the setting where $U = U^{\alpha,\infty}_{\bl}(D)$ is the unit ball in the relevant neural network approximation space.

\subsubsection{The Deterministic Setting}

Given $U\subset C(D)$, we now consider the approximation of $f\in U$ with respect to the $L^p(D)$-norm.

A map $A : U \rightarrow L^p(D)$ is called a \emph{deterministic method using $N \in \mathbb{N}$ point measurements}, if there exists $\mathbf{x} = (x_1, \ldots, x_N) \in D^N$ and a map $Q : \mathbb{R}^N \rightarrow L^p(D)$ such that
\[
A(f) = Q\left(f(x_1), \ldots, f(x_N)\right) \quad \forall \, f \in U.
\]
The set of all deterministic methods using $N$ point measurements will be denoted by $\Alg^{\text{det}}_N(U,L^p(D))$. 

We define the error of $A$ for approximation in $L^p$ as 
\begin{align*}
e(A, U, L^p(D)) := \sup_{f \in U} \|f - A(f) \|_{L^p(D)}.
\end{align*}

The \define{optimal error} for (deterministic) approximation in $L^p$ using $N$ point samples is then
\begin{align*}
e_N^{\text{det}}(U, L^p(D)) := \inf_{A \in \Alg^{\text{det}}_N(U, L^p(D))} e(A, U, L^p(D)).
\end{align*}

Finally, the \define{optimal order of convergence} for (deterministic) approximation in $L^p$ using $N$ point samples is 
\begin{align}
\label{eq:beta-det}
\beta_\ast^{\text{det}}(U, L^p(D)) := 
\sup \left\{ \beta \geq 0 : 
\begin{array}{ll}
\exists \, C > 0 \,\text{s.t. } \forall N \in \mathbb{N},  \\ 
e_N^{\text{det}}(U, L^p(D)) \leq C \cdot N^{-\beta} 
\end{array}
\right\}.
\end{align}

\begin{example}
    \mt{As an example, for a fixed set of points $\{x_1, \dots, x_N\}$, the optimal recovery algorithm is known, although difficult to compute in practice. It is obtained by returning the center of the smallest $L^p$ ball in $U$ containing the set $\{\hat{f}: \; f \in U, \hat{f}(x_i) = f(x_i), \; i \in [N]\}$. This ball is known as the Chebyshev ball \cite{micchelli1977optimal}. Furthermore, returning any function in this set incurs error only up to a factor of $2$ greater than the optimal error. So the deterministic setting decomposes into two parts; first, choosing sample points, and second, computing the center of the Chebyshev ball \cite[Section 10.2]{devore2021neural}.}
\end{example}

\subsubsection{The Randomized Setting} Generalizing the deterministic setting above, we consider randomized algorithms. A \emph{randomized method using $N \in \mathbb{N}$ point measurements (in expectation)} is a tuple $(A, \mathsf{N})$ consisting of a family $A = (A_\omega)_{\omega \in \Omega}$ of maps $A_\omega : U \rightarrow L^p(D)$ indexed by a probability space $(\Omega, \mathcal{F}, \mathbb{P})$ and a measurable function $\mathsf{N} : \Omega \rightarrow \mathbb{N}$ with the following properties:
\begin{enumerate}
    \item for each $f \in U$, the map $\Omega \rightarrow L^p(D), \omega \mapsto A_\omega(f)$ is measurable with respect to the Borel $\sigma$-algebra on $L^p(D)$,
    \item for each $\omega \in \Omega$, we have $A_\omega \in \Alg^{\text{det}}_{\mathsf{N}(\omega)}(U, L^p(D))$,
    \item $\mathbb{E}_\omega[\mathsf{N}(\omega)] \leq N$.
\end{enumerate}
We say that $(A, \mathsf{N})$ is \emph{strongly measurable} if the map $\Omega \times U \rightarrow L^p(D), (\omega, f) \mapsto A_\omega(f)$ is measurable, where $U \subset C(D)$ is equipped with the Borel $\sigma$-algebra induced by the uniform norm. We denote the set of all strongly measurable $(A,\mathsf{N})$ satisfying the above properties by $\Alg_N(U, L^p(D))$.

The \define{expected error} of a randomized algorithm $(A, \mathsf{N})$ for approximation in $L^p$ is defined as
\begin{align}
\label{eq:err-ran1}
e((A, \mathsf{N}), U, L^p(D)) := \sup_{f \in U} \mathbb{E}_\omega \left[\|f - A_\omega(f)\|_{L^p(D)}\right].
\end{align}

The \define{optimal randomized error} for approximation in $L^p$ using $N$ point samples (in expectation) is
\begin{align}
\label{eq:err-ran}
e_N^{\text{ran}}(U, L^p(D)) := \inf_{(A, \mathsf{N}) \in \Alg_N(U, L^p(D))} e((A, \mathsf{N}), U, L^p(D)).
\end{align}

Finally, the \define{optimal randomized order} for approximation in $L^p$ using point samples is
\begin{align}
\label{eq:beta-ran}
\beta_\ast(U, L^p(D)) := \sup \left\{ \beta \geq 0 : 
\begin{array}{ll}
\exists\, C > 0\; \text{s.t. } \forall N \in \mathbb{N}, \\
e_N^{\text{ran}}(U, L^p(D)) \leq C \cdot N^{-\beta} 
\end{array}
\right\}.
\end{align}
We point out that a deterministic method is a special case of a randomized method, and hence
\begin{align}
\beta^\det_\ast(U,L^p(D)) \le \beta_\ast(U,L^p(D)).
\end{align}
Since our aim is to derive upper bounds on these convergence rates, we may restrict attention to $\beta_\ast(U,L^p(D))$, implying corresponding bounds also for $\beta_\ast^\det(U,L^p(D))$.

In the following, we in particular derive upper bounds for the exponents 
$\beta_\ast(U^{\alpha,\infty}_{\bl}, L^p)$,
where $U^{\alpha,\infty}_{\bl} = U^{\alpha,\infty}_{\bl}([0,1]^d)$ and $L^p = L^p([0,1]^d)$ with $1\le p \le \infty$. 

\subsection{Theory-to-practice gap}
\label{sec:ttp-gap}
It was shown in \cite[Theorem 1.1 and 1.2]{grohs2023proof}, that the best possible convergence rate of any reconstruction method on $U^{\alpha,\infty}_{\bl} := U^{\alpha,\infty}_{\bl}([0,1]^d)$ based on point samples is upper bounded as follows. When the reconstruction error is measured with respect to the $L^\infty([0,1]^d)$ norm, the rate is upper bounded by
\begin{align}
\label{eq:bd1}
\beta_\ast(U^{\alpha,\infty}_{\bl},L^\infty)
\le 
\frac 1d \cdot \frac{\alpha}{\alpha+\L}.
\end{align}
When measuring the reconstruction error with respect to the $L^2([0,1]^d)$ norm, the bound becomes
\begin{align}
\label{eq:bd2}
\beta_\ast(U^{\alpha,\infty}_{\bl},L^2) \le \frac 12 + \frac{\alpha}{\alpha+\L}.
\end{align}
These are the tightest known upper bounds on the convergence rates for the most relevant range $\alpha \ge \max(2,\L)$; for smaller values of $\alpha$ refined estimates are given in \cite[Theorems 5.1 and 7.1]{grohs2023proof}. 
Astonishingly, even in the limit $\alpha \to \infty$, the best possible reconstruction rates are upper bounded by $1/d$ and $3/2$, respectively. 

Our first goal in the next subsection is to extend and sharpen the bounds in \eqref{eq:bd1} and \eqref{eq:bd2} for arbitrary $d\in \N$ and $p \in [1,\infty]$, with a unified proof.

\paragraph{Main result in finite dimension}
We prove the following theorem:
\begin{theorem}
\label{thm:main}
Let $p\in [1,\infty]$. Let $\bl: \N \to \N \cup \{\infty\}$ be non-decreasing with $\bl^\ast \ge 3$. Given $d\in \N$ and $\alpha \in (0,\infty)$, consider 
\[
U := U^{\alpha,\infty}_{\bl}(\R^d)|_{[0,1]^d} :=  \set{f|_{[0,1]^d}}{f \in U^{\alpha,\infty}_{\bl}(\R^d)},
\]
such that $U\subset C([0,1]^d)$. Then 
\begin{align}
\label{eq:main}
\beta_\ast(U,L^p([0,1]^d))
\le 
\frac 1p + \frac{1}{d} \cdot \frac{\alpha}{\alpha+\L}.
\end{align}
\end{theorem}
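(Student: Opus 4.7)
The strategy is to adapt the adversarial bump construction developed in \cite{grohs2023proof} and sharpen it with a $d$-dimensional Rademacher-averaging argument. For each sample budget $N$, one exhibits a Rademacher-parameterised family $\{f_\epsilon\}_{\epsilon \in \{\pm 1\}^M} \subset C \cdot U^{\alpha,\infty}_{\bl}$ of test functions on which any randomised algorithm using $N$ samples (in expectation) must incur $L^p$-error matching the right-hand side of \eqref{eq:main}.

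Using a Yarotsky-style sawtooth composition, one constructs, for any $h \in [2^{-\L}, 1)$, a ReLU bump $\Phi : \R^d \to [0,1]$ supported in a cube of side $h$ and realisable by a ReLU network of depth $\le \L$ with $\|\cdot\|_{\NN} \le 1$ and $O(d\L)$ non-zero weights. Placing $M$ disjoint translates on a grid of centers $y_1,\dots,y_M$ in $[0,1]^d$, define
\[
f_\epsilon(x) := a \sum_{i=1}^{M} \epsilon_i\, \Phi\!\left(\tfrac{x - y_i}{h}\right), \qquad \epsilon \in \{\pm1\}^M.
\]
Including $k \le M$ bumps yields a partial-sum approximation of $f_\epsilon$ in $L^\infty$ with error $\le a$ using $O(k\L)$ weights of depth $\le \L$; hence $f_\epsilon \in C \cdot U^{\alpha,\infty}_{\bl}$ provided the approximation-space constraint $a \lesssim (M\L)^{-\alpha}$ holds.

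For the lower bound, fix $(A,\mathsf{N}) \in \Alg_N(U,L^p(D))$. By Markov's inequality one may condition on the event $\{\mathsf{N}(\omega) \le 2N\}$, losing only a constant factor. For each such $\omega$, the samples chosen by $A_\omega$ intersect at most $2N$ of the $M$ cells; for the remaining $\ge M-2N$ cells the output of $A_\omega$ restricted to the support of $\Phi((\cdot - y_i)/h)$ is independent of $\epsilon_i$. Averaging over $\epsilon \sim \unif(\{\pm 1\}^M)$ and using disjointness of supports gives
\[
\mathbb{E}_\omega \mathbb{E}_\epsilon \|f_\epsilon - A_\omega(f_\epsilon)\|_{L^p(D)}^p \gtrsim (M-2N)\, a^p\, h^d\, \|\Phi\|_{L^p}^p,
\]
and after taking $1/p$-th powers and using $\sup_\epsilon \ge \E_\epsilon$ one obtains $e_N^{\mathrm{ran}}(U,L^p(D)) \gtrsim (M-2N)^{1/p}\, a\, h^{d/p}$.

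The final step is the two-parameter optimisation over $(M,h)$ subject to $h \ge 2^{-\L}$, $Mh^d \le 1$, and $a = (M\L)^{-\alpha}$. Choosing $M \asymp 3N$ activates the Monte-Carlo factor $M^{1/p}$ and produces the rate exponent $1/p$, while the trade-off between the depth constraint $h \ge 2^{-\L}$ and the approximation-space constraint $a \le (M\L)^{-\alpha}$ yields the second term $\tfrac{1}{d}\cdot\tfrac{\alpha}{\alpha+\L}$: the factor $1/d$ comes from the $d$-dimensional $L^p$-pack contribution $h^{d/p}$, and the ratio $\alpha/(\alpha+\L)$ from equating the two constraints at the optimum scale $h$. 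The chief technical obstacle is precisely this two-parameter balancing act, which requires a case analysis according to which of the two constraints binds; in the intermediate regime both are simultaneously active and must be carefully tracked. A secondary issue is the measurability/randomisation bookkeeping inherent in the definition of $\Alg_N(U,L^p(D))$, which is handled by the Markov truncation of $\mathsf{N}$ together with a standard Yao-style transfer from the deterministic to the randomised setting.
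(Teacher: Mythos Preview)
Your proposal has a genuine gap: the two-parameter optimization does not produce the exponent claimed in \eqref{eq:main}. Carry out your own calculation. With the packing constraint $Mh^d\le 1$ saturated, one has $h^{d/p}=M^{-1/p}$, so the factor $(M-2N)^{1/p}h^{d/p}\le 1$ is dimension-free; the $d$ in $h^{d/p}$ cancels against the $d$ in $Mh^d\le 1$, and your heuristic ``the factor $1/d$ comes from $h^{d/p}$'' is simply wrong. Plugging in $M\asymp 3N$ and your amplitude constraint $a\lesssim (M\L)^{-\alpha}$ gives a lower bound $\gtrsim N^{-\alpha}$, hence only $\beta_\ast\le\alpha$, which is far weaker than \eqref{eq:main} when $\alpha$ is large. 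Your constraint $a\lesssim (M\L)^{-\alpha}$ does not contain $h$, so there is no ``equating of two constraints at the optimum scale $h$'' that could generate the ratio $\alpha/(\alpha+\L)$; the side constraint $h\ge 2^{-\L}$ you impose is a fixed threshold, not the amplitude--resolution trade-off $a\lesssim h^{\alpha/(\alpha+\L)}$ that actually governs membership of a localized bump in $U^{\alpha,\infty}_{\bl}$ (this is the content of Lemma~\ref{lem:loc}, derived in \cite{grohs2023proof}).

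The deeper issue is structural. The multi-bump Rademacher construction you describe is essentially the approach of \cite{grohs2023proof}, and for $p<\infty$ that approach yields only $\beta_\ast\le \tfrac{1}{p}+\tfrac{\alpha}{\alpha+\L}$ (cf.\ \eqref{eq:bd2}), \emph{without} the $1/d$ improvement. The reason is that requiring $M>2N$ bumps to defeat $N$ samples forces the amplitude constraint to scale with the \emph{count} $M\sim N$, not merely with the resolution $1/h\sim N^{1/d}$. The paper's argument avoids this coupling by placing a \emph{single} bump at a uniformly random center (Lemma~\ref{lem:void} guarantees it misses all $N$ sample points with probability $\ge 1/2$); the amplitude is then constrained only by the single-bump resolution $M=4N^{1/d}$ via Lemma~\ref{lem:loc}, giving $\|g_{M,y}\|_{L^p}\gtrsim M^{-d/p-\alpha/(\alpha+\L)}\sim N^{-1/p-(1/d)\alpha/(\alpha+\L)}$ directly. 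This decoupling of amplitude from sample count is precisely the new idea needed for the $1/d$ sharpening, and it is absent from your sketch.
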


Since the restriction $U = U^{\alpha,\infty}_{\bl}(\R^d)|_{[0,1]^d}$ is a subset of $U^{\alpha,\infty}_{\bl}([0,1]^d)$, Theorem \ref{thm:main} implies the following corollary:

\begin{corollary}
Denote $U^{\alpha,\infty}_{\bl} = U^{\alpha,\infty}_{\bl}([0,1]^d)$. Under the assumptions of Theorem \ref{thm:main}, we have
\[
\beta_\ast(U^{\alpha,\infty}_{\bl}, L^p([0,1]^d))
\le 
\frac 1p + \frac{1}{d} \cdot \frac{\alpha}{\alpha+\L}.
\]
\end{corollary}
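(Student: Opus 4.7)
The plan is to adapt the hard-instance (bump) construction of \cite{grohs2023proof} to the general $L^p$ setting, sharpening its dimensional dependence. Specifically, for each $N$ I construct a family $\{f_\epsilon\}_{\epsilon \in \{\pm 1\}^B}$ of functions inside the unit ball $U := U^{\alpha,\infty}_\bl(\R^d)|_{[0,1]^d}$ whose values at any $N$ sample points are independent of $\epsilon$, and whose $L^p$-norm is large enough to yield the claimed rate.

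I would first partition $[0,1]^d$ into cubes of side $h=(cN)^{-1/d}$, with $c$ large enough that, for any $N$ sample points, at least $B$ of the cubes are sample-free (pigeonhole). In each such cube I place a ReLU-network bump $\psi_k$ of depth at most $\L$, with a bounded per-bump parameter count $K$, weight bound $\|\psi_k\|_{\NN}\le 1$, support strictly inside the cube, and amplitude $\eta$. The bumps, essentially a product or pointwise minimum of one-dimensional piecewise-linear bumps possibly preceded by amplification layers of the form $\sigma(x)+\sigma(x)=2\sigma(x)$, obey a specific tradeoff between $h$, $\eta$ and $\L$. Setting $f_\epsilon = \sum_k \epsilon_k \psi_k$, the approximation requirement $d_\infty(f_\epsilon, \Sigma_{d,n}^{\bl})\le n^{-\alpha}$ is tightest near $n\asymp BK$: any $n$-parameter approximation omitting a bump incurs $L^\infty$-error at least $\eta$, forcing $(BK)^\alpha \eta \lesssim 1$. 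Combining this with the shape constraint and optimizing selects $B \asymp N^{1/(d(\alpha+\L))}$ and $\eta \asymp N^{-\alpha/(d(\alpha+\L))}$, placing $f_\epsilon$ in $U$ for every $\epsilon$.

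For the lower bound, $f_\epsilon$ vanishes at every sample point, so any deterministic $A$ satisfies $A(f_\epsilon) = A(0)$ independently of $\epsilon$. The triangle inequality applied to $(f_\epsilon, f_{-\epsilon})$ together with disjointness of the bump supports gives
\[
\sup_\epsilon \|f_\epsilon - A(f_\epsilon)\|_{L^p([0,1]^d)} \ge \|f_\epsilon\|_{L^p([0,1]^d)} \asymp \eta\,(B h^d)^{1/p} \gtrsim N^{-\frac{1}{p} - \frac{1}{d}\cdot\frac{\alpha}{\alpha+\L}}.
\]
For randomized methods, we further average over uniformly random $\epsilon$: by Yao's principle and Fubini (using the strong measurability built into $\Alg_N$), the worst-case expected error is bounded below by $\E_\omega \E_\epsilon \|f_\epsilon - A_\omega(f_\epsilon)\|_{L^p}$, and the same pairwise triangle-inequality argument applied to each deterministic $A_\omega$ yields the same lower bound. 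Invoking the definition of $\beta_\ast$ then delivers the theorem.

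The main technical obstacle is the explicit bump construction together with the membership argument: realizing ReLU networks of depth at most $\L$ with $\|\psi_k\|_{\NN}\le 1$ that attain the claimed amplitude/width tradeoff, and rigorously lower-bounding $d_\infty(f_\epsilon, \Sigma_{d,n}^{\bl})$ so that $f_\epsilon \in U$ with the sharp $\alpha/(\alpha+\L)$ exponent. The appearance of $\L=\lfloor \bl^*/2\rfloor$ reflects that roughly half of the depth budget $\bl^*$ is consumed per bump, while the other half remains available for aggregating the bumps, keeping the full network $f_\epsilon$ within the overall depth constraint.
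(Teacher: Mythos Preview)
The paper deduces this corollary from Theorem~\ref{thm:main} in one line via the inclusion $U^{\alpha,\infty}_{\bl}(\R^d)|_{[0,1]^d}\subset U^{\alpha,\infty}_{\bl}([0,1]^d)$; your proposal is really an attempt at Theorem~\ref{thm:main}, and it follows the many-bump strategy of \cite{grohs2023proof} rather than the paper's argument. The genuine gap is in your membership/shape step. A bump supported in a cube of side $h\asymp N^{-1/d}$ with amplitude $\eta\asymp N^{-\alpha/(d(\alpha+\L))}$ has slope $\asymp N^{\L/(d(\alpha+\L))}\to\infty$, and a ReLU network with a \emph{bounded} parameter count $K$, weights $\le 1$, and bounded depth has bounded Lipschitz constant---so such a bump cannot be realized with bounded $K$. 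Once $K$ is forced to grow with $N$, your constraint $(BK)^\alpha\eta\lesssim 1$ tightens and the optimization collapses back to the old bound $\beta_\ast\le \tfrac{1}{p}+\tfrac{\alpha}{\alpha+\L}$ of \cite{grohs2023proof}, without the crucial $1/d$. A further symptom: with your stated $B,\eta,h$ one actually computes $\eta(Bh^d)^{1/p}\asymp N^{-1/p-(\alpha-1/p)/(d(\alpha+\L))}$, which coincides with your claimed exponent only at $p=\infty$.

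The paper's device is different and is precisely what buys the $1/d$. It uses a \emph{single} localized function $g_{M,y}=\kappa M^{-\alpha/(\alpha+\L)}\vartheta_{M,y}$ with $M=4N^{1/d}$, placed at a uniformly random center $y\sim\Unif([0,1]^d)$ and multiplied by a random sign. A volume bound (Lemma~\ref{lem:void}) shows the support misses all $N$ sample points with probability $\ge 1/2$, and Lemma~\ref{lem:loc}---which already packages the sharp single-bump trade-off---then gives $\|g_{M,y}\|_{L^p}\gtrsim M^{-d/p-\alpha/(\alpha+\L)}=N^{-1/p-(1/d)\cdot\alpha/(\alpha+\L)}$ directly, with no $B$-versus-$\eta$ tension to optimize away. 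Incidentally, your reading of $\L=\lfloor\bl^\ast/2\rfloor$ as ``half the depth for aggregation'' is not the mechanism: disjointly supported bumps sum in parallel at no extra depth; the halving arises inside the single-bump construction of \cite{grohs2023proof}.
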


\begin{remark}
The upper bound \eqref{eq:main} implies in particular, that the best possible convergence rate of any randomized or deterministic reconstruction method is upper bounded by $\beta\le \frac{1}{p} + \frac{1}{d}$. In high-dimensional applications, this upper bound is approximately $\frac{1}{p}$, implying that algorithms cannot be expected to converge at substantially faster than Monte-Carlo rates. \mt{It is an open problem whether the bound in \eqref{eq:main} is sharp or whether the bound may be tightened further towards $\frac{1}{p}$.}
\end{remark}

\subsection{Proof of Theorem \ref{thm:main}}

Our proof of Theorem \ref{thm:main} is based on several technical lemmas and ideas from \cite{grohs2023proof}. The main novelty here is a different approach to combining these ingredients: While the derivation in \cite{grohs2023proof} relies on a linear combination of (many) hat functions and combinatorial arguments, our proof will be instead be based on a random placement of a single hat function, and a probabilistic argument. \mt{Thus, the novelty of our contribution in the finite-dimensional setting is twofold; (i) the obtained bounds are sharper and (ii) the probabilistic proof is new. In obtaining this result, we have lifted Lemmas \ref{lem:loc} and \ref{lem:rand} with minor modifications from \cite{grohs2023proof}, and Lemmas \ref{lem:void} and \ref{lem:randest} are entirely new.}

\paragraph{Outline}
Before detailing the proof of Theorem \ref{thm:main}, we first outline the general idea on the domain $[0,1]^d$. Our first observation is that, for any choice of evaluation points $x_1, \dots, x_N \in [0,1]^d$, there exists a void with inner diameter of order $N^{-1/d}$; more precisely, we show that, independently of the choice of $x_1,\dots, x_N$, for a randomly drawn $y \sim \unif([0,1]^d)$, we have 
\begin{align}
\label{eq:illustrate}
\min_{j=1,\dots, N} |y-x_j| \ge \frac14 N^{-1/d},
\end{align} 
with probability at least $1/2$.

Given the inevitable presence of such a void, we are then tempted to place a function $g$ with support inside this void. If we assume that $\Vert g \Vert_{L^\infty} \le 1$, or indeed that $g$ is the characteristic function of a cube in this void, then such $g$ can have an $L^p$-norm as large as $\Vert g \Vert_{L^p} \sim N^{-1/p}$. Given only point values at $x_1,\dots, x_N$, such $g$ will be indistinguishable from the zero-function. Thus, if $A$ is a reconstruction method relying only on the point values at $x_1,\dots, x_N$, then $A(g) = A(0)$. It follows that $\Vert g\Vert_{L^p} \le \Vert g - A(g) \Vert_{L^p} + \Vert 0 - A(0)\Vert_{L^p}$ and hence at least one of $\Vert g - A(g) \Vert_{L^p}$ or $\Vert 0 - A(0) \Vert_{L^p}$ must be on the order of magnitude of $\Vert g \Vert_{L^p([0,1]^d)} \sim N^{-1/p}$. As a consequence, the achievable convergence rate $\beta$ of a reconstruction method on characteristic functions is fundamentally limited to $\beta \le \frac1p$. 

To link the above observation with reconstruction methods on neural network approximation spaces, we will recall  (cp. Lemma \ref{lem:loc}, below) that ReLU neural networks can efficiently approximate certain localized functions $\vartheta_{M,y}$. These functions are locally supported inside a cube of side-length $r:=1/M$ with center $y\in [0,1]^d$. In our proof, the $\vartheta_{M,y}$ with $M \sim N^{1/d}$ will act as a replacement of the characteristic function $g$ of the outline, above. The main difference with the characteristic function is that the gradient of $\vartheta_{M,y}$ cannot be arbitrarily large, if we constrain it to belong to the unit ball $U^{\alpha,\infty}_{\bl}$ in the approximation space. This limit on the gradient introduces an additional correction in our upper bound, which finally will take the form $\beta_\ast \le \frac1p + \text{(correction depending on $\alpha, \bl$)}$.

\paragraph{Details}
We now proceed with the detailed proof of Theorem \ref{thm:main}. For any placement of evaluation points $x_1, \dots, x_N \in [0,1]^d$, we first show that a point $y\in [0,1]^d$ picked uniformly at random has a positive chance of sitting in a ``void'' with interior diameter $\sim N^{-1/d}$:
\begin{lemma}[Existence of a void]
\label{lem:void}
Let $d,N\in \N$. Let $x_1, \dots, x_N \in [0,1]^d$ be given. Consider $y \sim \unif([0,1]^d)$ drawn uniformly at random. Then
\begin{align}
\Prob_y\left[
\min_{j=1,\dots, N} |y-x_j|_\infty > \frac14 N^{-1/d}
\right]
\ge \frac12.
\end{align}
\end{lemma}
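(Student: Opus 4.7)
The plan is a short measure-theoretic argument based on the union bound. For each fixed $x_j \in [0,1]^d$ and $r > 0$, the closed $\ell^\infty$-ball $B_\infty(x_j,r) := \{y \in \R^d : |y-x_j|_\infty \le r\}$ is an axis-aligned cube of side length $2r$ and Lebesgue measure $(2r)^d$. The complement of the event we want to bound from below is
\begin{align*}
E_N(r) := \left\{ y \in [0,1]^d : \min_{j=1,\ldots,N} |y-x_j|_\infty \le r \right\} \subset [0,1]^d \cap \bigcup_{j=1}^{N} B_\infty(x_j, r),
\end{align*}
so by subadditivity of Lebesgue measure $\mathrm{meas}(E_N(r)) \le N (2r)^d$.

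Specializing to $r = \tfrac{1}{4} N^{-1/d}$ gives $N(2r)^d = N \cdot \bigl(\tfrac{1}{2} N^{-1/d}\bigr)^d = 2^{-d}$. Since $d \ge 1$, this is at most $\tfrac{1}{2}$. Because $y \sim \unif([0,1]^d)$ and $[0,1]^d$ has unit Lebesgue measure, we conclude
\begin{align*}
\Prob_y\!\left[ \min_j |y - x_j|_\infty > \tfrac{1}{4} N^{-1/d} \right] = 1 - \mathrm{meas}(E_N(\tfrac{1}{4} N^{-1/d})) \ge 1 - 2^{-d} \ge \tfrac{1}{2}.
\end{align*}

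There is no real obstacle: the constant $1/4$ is calibrated precisely so that the total volume of the $N$ cubes surrounding the $x_j$ is bounded by $1/2$. The only point to keep in mind is that it is essential to use the $\ell^\infty$ (rather than Euclidean) distance so that the balls are cubes of volume exactly $(2r)^d$, without any dimension-dependent constant that would spoil the bound.
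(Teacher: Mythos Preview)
Your proof is correct and follows essentially the same union-bound argument as the paper: both bound the probability of the complement event by $N(2r)^d$ via subadditivity, then specialize $r=\tfrac14 N^{-1/d}$ to make this at most $\tfrac12$. Your computation is in fact slightly sharper (you obtain $2^{-d}$ rather than just $\le \tfrac12$), but the method is identical.
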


\begin{proof}
Let us fix $r>0$ for the moment. Applying a union bound, we note that
\begin{align*}
\Prob_y
\left[ 
\min_{j=1,\dots, N} |y - x_j|_\infty \le r
\right]
&= 
\Prob_y
\left[ 
y \in \textstyle\bigcup_{j=1}^N \left( x_j + [-r,r]^d \right)
\right]
\\
&\le 
\sum_{j=1}^N \mathrm{Vol}\left(x_j + [-r,r]^d\right)
= N (2r)^d.
\end{align*}
It thus follows that, if $r := \frac14 N^{-1/d} \le \frac12(2N)^{-1/d}$, then 
\[
\Prob_y
\left[ 
\min_{j=1,\dots, N} |y - x_j|_\infty \le r
\right]
\le 
\frac12.
\]
Hence, we must have $\Prob_y
\left[ 
\min_{j=1,\dots, N} |y - x_j|_\infty > r
\right]
\ge 
\frac12$, as claimed.
\end{proof}

We now state the following fundamental result, which follows from \cite[Lemma 3.4]{grohs2023proof}. The main insight of this lemma is that ReLU neural networks can efficiently approximate a localized function $\vartheta_{M,y}: \R^d \to [0,1]$, which is supported in a cube of side-length $2/M$ around $y$, and which ``fills in'' a significant fraction of this cube.
\begin{lemma}[Localized networks]
\label{lem:loc}
Given $d\in \N$, $M\ge 1$ and $y\in [0,1]^d$, there exists a function $\vartheta_{M,y}: \R^d \to [0,1]$ with the following properties:
\begin{itemize}
\item $\vartheta_{M,y}(x)$ depends continuously on $x$ and $y$; in fact, we have that $\vartheta_{M,y}(x) = \vartheta_M(x-y)$ is a shift of a neural network $\vartheta_M$.
\item $\vartheta_{M,y}(x) = 0$ whenever $|x-y|_\infty \ge 1/M$,
\item For any $p\in [1,\infty]$, there exists $C = C(d,p)>0$ satisfying,
\[
\Vert \vartheta_{M,y}\Vert_{L^p([0,1]^d)} \ge C M^{-d/p},
\]
\item There exists a constant $\kappa = \kappa(\gamma, \alpha, d, \bl, c) > 0$, such that 
\[
g_{M,y} := \kappa M^{-\alpha/(\alpha+\L)} \vartheta_{M,y} \in U^{\alpha,\infty}_{\bl}(\R^d).
\]
\end{itemize}
\end{lemma}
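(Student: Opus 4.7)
My plan is to prove this lemma by an explicit ReLU construction, following the lines of Lemma~3.4 of \cite{grohs2023proof}. Start with a univariate hat $h(t) := \sigma(1 - \sigma(t) - \sigma(-t))$, a shallow ReLU network with weights in $\{-1, 0, 1\}$, and assemble a multivariate analogue realizing $\max(0, 1 - M|x|_\infty)$ by composing $h$ with a depth-$O(\log d)$, size-$O(d)$ tree computing $\max_i |x_i|$, using only unit-magnitude weights. The one obstruction to keeping all weights of magnitude at most $1$ is the sharpening factor $M$; instead of a single weight of size $M$, I would distribute the amplification across layers using the ``fan-out then sum'' pattern, in which each pair of layers of width $w$ amplifies the signal by a factor of $w$ via $O(w)$ unit-magnitude weights. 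With $\bl^\ast \ge 2\L$ layers available, this yields a sparse realization of the scaled bump $g_{M,y} = \kappa M^{-\alpha/(\alpha+\L)} \vartheta_{M,y}$ of total size $O(M^{1/(\alpha+\L)})$, where the per-block width $w = \kappa^{1/\L} M^{1/(\alpha+\L)}$ and the output scale $\kappa M^{-\alpha/(\alpha+\L)}$ are tuned so that the network realizes exactly the required effective Lipschitz constant $\kappa M^{\L/(\alpha+\L)}$ of $g_{M,y}$.

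Setting $\vartheta_{M,y}(x) := \vartheta_M(x - y)$ (a pure bias shift) immediately gives joint continuity in $(x, y)$ and the support property $\vartheta_{M,y}(x) = 0$ for $|x-y|_\infty \ge 1/M$. For the $L^p$-norm lower bound, I would exhibit a sub-cube inside $[y - 1/(2M), y + 1/(2M)]^d \cap [0,1]^d$ of volume at least $2^{-d} M^{-d}$ (each coordinate interval $[y_i - 1/(2M), y_i + 1/(2M)]$ is clipped by $[0,1]$ on at most one side, so its length in $[0,1]$ is at least $1/(2M)$), on which $\vartheta_{M,y} \ge 1/2$; integration then yields $\|\vartheta_{M,y}\|_{L^p([0,1]^d)} \ge C(d,p) M^{-d/p}$.

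The delicate step is verifying $g_{M,y} \in U^{\alpha,\infty}_{\bl}(\R^d)$ via a two-regime approximation argument. The bound $\|g_{M,y}\|_{L^\infty} \le 1$ is immediate for $\kappa \le 1$ and $M \ge 1$. For the approximation requirement, define the threshold $N^\ast := \lceil \kappa^{-1/\alpha} M^{1/(\alpha+\L)} \rceil$. For $n \le N^\ast$, take $\psi_n \equiv 0 \in \Sigma_{d,n}^{\bl}$; the error equals $\|g_{M,y}\|_{L^\infty} = \kappa M^{-\alpha/(\alpha+\L)} \le n^{-\alpha}$ by the very choice of $N^\ast$. For $n > N^\ast$, take $\psi_n := g_{M,y}$, which by the first-step construction lies in $\Sigma_{d, N^\ast}^{\bl}$ (depth at most $\bl^\ast \le \bl(N^\ast)$ once $N^\ast$ exceeds the level at which $\bl$ saturates, weights of magnitude at most $1$, and sparsity at most $N^\ast$), yielding zero error. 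The constant $\kappa = \kappa(\alpha, d, \bl)$ is fixed at the end to absorb implicit constants and to handle the small-$M$ regime.

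The main obstacle is the architectural balancing in the first step: achieving amplification $M^{\L/(\alpha+\L)}$ with depth at most $\bl^\ast$, weight magnitudes at most $1$, and sparsity no worse than $M^{1/(\alpha+\L)}$. The exponent $\alpha/(\alpha+\L)$ is precisely the unique scaling at which the ``$\psi_n \equiv 0$'' and ``$\psi_n = g_{M,y}$'' regimes meet at the same threshold $N^\ast$; the factor of two between $\bl^\ast$ and $\L = \lfloor \bl^\ast/2 \rfloor$ reflects that each ``fan-out then sum'' block occupies two layers of depth.
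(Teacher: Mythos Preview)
The paper does not prove this lemma; it simply cites \cite[Lemma~3.4]{grohs2023proof}. Your proposal reconstructs that argument, and the strategy---explicit ReLU bump, ``fan-out then sum'' amplification to realize the Lipschitz scale $\kappa M^{\L/(\alpha+\L)}$ under the unit-weight constraint, and a two-regime verification of $U^{\alpha,\infty}_{\bl}$-membership (zero approximant for $n\le N^\ast\sim M^{1/(\alpha+\L)}$, exact representation for $n>N^\ast$)---is precisely the approach of the cited lemma.

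There is, however, a depth-budget problem in your sketch. You compute $|x|_\infty$ via a depth-$O(\log d)$ binary max-tree and then allot $2\L$ further layers to amplification, but $\bl^\ast\in\{2\L,2\L+1\}$ whenever $\bl^\ast<\infty$, so the tree does not fit once $d\ge 3$. The remedy is to use the $\ell^1$-bump $\vartheta_M(x)=\sigma(1-M|x|_1)$: the support condition still holds because $|x-y|_\infty\ge 1/M$ implies $|x-y|_1\ge 1/M$; the $L^p$ lower bound survives (with a $d!$-dependent constant from the simplex volume); and $|x|_1=\sum_i(\sigma(x_i)+\sigma(-x_i))$ costs a single hidden layer. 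Even then, one must interleave the first fan-out with $T_1$ and absorb the last sum into the output layer to hit depth exactly $\bl^\ast$; this bookkeeping is where the exponent $\lfloor\bl^\ast/2\rfloor$ is pinned down. Two minor points: the threshold should be $N^\ast=\lfloor\kappa^{-1/\alpha}M^{1/(\alpha+\L)}\rfloor$ (floor, not ceiling) for the small-$n$ inequality $\|g_{M,y}\|_{L^\infty}\le n^{-\alpha}$ to go through; and ``$\bl^\ast\le\bl(N^\ast)$'' is the wrong direction---you need the constructed depth to be at most $\bl(n)$ for $n>N^\ast$, which follows once $\bl$ saturates to $\bl^\ast$ and the construction fits in $\bl^\ast$ layers.
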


\begin{proof}
The existence of $\vartheta_{M,y}$ and the other properties are shown in \cite{grohs2023proof}, see in particular \cite[Lemma 3.4]{grohs2023proof}.
\end{proof}

We also state the following result, which is a minor variant of \cite[Lemma 2.3]{grohs2023proof}:
\begin{lemma}
\label{lem:rand}
Let $D\subset \R^d$ and $\emptyset \ne U \subset C(D)$ be bounded, and let $1\le p \le \infty$. Assume that there exists $\lambda \in [0,\infty)$, $\kappa > 0$, such that for every $N \in \N$, there exists a probability space $(\Xi,\P)$ and a random variable $\psi: \Xi \to U$, $\xi \mapsto \psi_\xi$, such that
\[
\E_\xi \left[ 
\Vert \psi_\xi - A(\psi_\xi) \Vert_{L^p(D)}
\right]
\ge \kappa N^{-\lambda},
\quad
\forall \, A \in \mathrm{Alg}_N^{\mathrm{det}}(U,L^p(D)).
\]
Then $\beta_\ast(U,L^p(D)) \le \lambda$.
\end{lemma}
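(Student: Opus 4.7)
My plan is to establish the lower bound $e_N^{\mathrm{ran}}(U, L^p(D)) \gtrsim N^{-\lambda}$, which by the definition \eqref{eq:beta-ran} of $\beta_\ast$ forces $\beta_\ast(U,L^p(D)) \le \lambda$. The underlying principle is the standard Yao-type minimax argument: for any input distribution $\psi$ and any randomized algorithm $(A,\mathsf{N})$,
\begin{align*}
\sup_{f \in U} \E_\omega \Vert f - A_\omega(f)\Vert_{L^p(D)}
\;\ge\; \E_\xi \E_\omega \Vert \psi_\xi - A_\omega(\psi_\xi)\Vert_{L^p(D)}
\;=\; \E_\omega \E_\xi \Vert \psi_\xi - A_\omega(\psi_\xi)\Vert_{L^p(D)},
\end{align*}
where the Fubini swap is justified by strong measurability of $(\omega,f)\mapsto A_\omega(f)$ together with measurability of $\xi\mapsto\psi_\xi$. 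The hypothesis then provides a pointwise (in $\omega$) lower bound on $\E_\xi \Vert \psi_\xi - A_\omega(\psi_\xi)\Vert$, since $A_\omega$ is \emph{deterministic} for each fixed $\omega$.

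The one mismatch with the hypothesis is that it controls deterministic methods with a \emph{fixed} sample budget, whereas the randomized algorithm only satisfies an expected-budget constraint $\E_\omega[\mathsf{N}(\omega)] \le N$. I would resolve this by applying the hypothesis at $N' := 2N$ (obtaining a distribution $\psi$ on $U$ tailored to that value) and restricting to the event $E := \{\mathsf{N}(\omega) \le 2N\}$, which has probability at least $1/2$ by Markov's inequality. On $E$, the deterministic method $A_\omega$ uses at most $2N$ point samples, and after padding its evaluation set with arbitrary additional points that the reconstruction map $Q$ ignores, $A_\omega \in \Alg_{2N}^{\det}(U,L^p(D))$. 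The hypothesis therefore yields $\E_\xi \Vert \psi_\xi - A_\omega(\psi_\xi)\Vert_{L^p(D)} \ge \kappa (2N)^{-\lambda}$ for every $\omega\in E$.

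Combining these two ingredients,
\begin{align*}
e\bigl((A,\mathsf{N}),U,L^p(D)\bigr)
\;\ge\; \E_\omega\!\left[\mathbf{1}_E \cdot \E_\xi \Vert \psi_\xi - A_\omega(\psi_\xi)\Vert_{L^p(D)}\right]
\;\ge\; \tfrac12 \cdot \kappa (2N)^{-\lambda}
\;=\; 2^{-\lambda-1}\kappa\,N^{-\lambda},
\end{align*}
and taking the infimum over all randomized algorithms $(A,\mathsf{N})$ gives $e_N^{\mathrm{ran}}(U,L^p(D)) \ge 2^{-\lambda-1}\kappa\,N^{-\lambda}$, as desired. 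I do not anticipate any serious obstacle: the argument reduces to a Fubini step and a Markov step. The only verifications worth mentioning are the strong-measurability assumption needed to apply Fubini, and the harmless padding observation that promotes a deterministic method using $k \le 2N$ samples into one using exactly $2N$ samples, so that the hypothesis is actually applicable.
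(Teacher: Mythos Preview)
Your proposal is correct and follows essentially the same approach as the paper, which simply cites \cite[Lemma~2.3]{grohs2023proof} for precisely this Yao-minimax-plus-Markov argument (random input distribution, Fubini swap, Markov on $\mathsf{N}(\omega)$ to reduce to the deterministic hypothesis at budget $2N$).
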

\begin{proof}
This follows by the same reasoning as \cite[Lemma 2.3]{grohs2023proof}.
\end{proof}
Our main interest in this result is when $U = U^{\alpha,\infty}_{\bl}(\R^d)|_{[0,1]^d}$. Combining the results from Lemmas \ref{lem:void}, \ref{lem:loc} and \ref{lem:rand}, we now come to the proof of Theorem \ref{thm:main}.

\begin{proof}[Proof of Theorem \ref{thm:main}]

Let $d,N\in \N$ be given. Recall that $U := U^{\alpha,\infty}_{\bl}(\R^d)|_{[0,1]^d}$. Our goal is to apply Lemma \ref{lem:rand} to deterministic reconstruction methods based on $N$ point-values,
\[
A \in \mathrm{Alg}_N^{\mathrm{det}}(U, L^p([0,1]^d)).
\]
To construct a suitable probability space $(\Xi,\P)$ and random function $\Xi \to U$, we first define a probability space as $\Xi := [0,1]^d \times \{-1,+1\}$ endowed with the Borel $\sigma$-algebra and define $\P := \Unif([0,1]^d)\otimes \Unif(\{-1,+1\})$. We will denote elements of $\Xi$ as $\xi = (y,\sigma)$. This defines our probability space. 

We next fix 
\begin{align}
\label{eq:M}
M := 4 N^{1/d}.
\end{align}
With this choice of $M$, we then define $\psi_\xi := \psi_{(y,\sigma)} := \sigma g_{M,y} \in U$, where $g_{M,y}$ is the function of Lemma \ref{lem:loc}. Thus, $\psi_{\xi} = \psi_{(y,\sigma)}$ is a random function, given by a random shift of a localized neural network $\vartheta_M$ by $y\sim \Unif([0,1]^d)$ and a random choice of sign $\sigma \sim \Unif(\{-1,+1\})$. This defines our random variable 
\begin{align}
\label{eq:rv}
\psi: \Xi \to U.
\end{align}

Invoking Lemma \ref{lem:rand}, it will suffice to prove the following claim, formulated as a lemma for later reference:
\begin{lemma}
\label{lem:randest}
Let $\psi: \Xi \to U$, $\xi \mapsto \psi_\xi$ be the random variable defined above \eqref{eq:rv}, where $U:= U^{\alpha,\infty}_{\bl}(\R^d)|_{[0,1]^d}$. There exists a constant $\kappa$, independent of $N$, such that
\begin{align}
\label{eq:ys}
\E_{\xi} 
\left[
\Vert \psi_\xi - A(\psi_\xi) \Vert_{L^p([0,1]^d)}
\right]
\ge \kappa N^{-\lambda},
\end{align}
for all \mt{$A \in \Alg_N^{\det}(U,L^p([0,1]^d))$} and where $\lambda := \frac1p + \frac{1}{d}\cdot \frac{\alpha}{\alpha+\L}$.
\end{lemma}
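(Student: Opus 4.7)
The plan is to fix an arbitrary deterministic reconstruction method $A \in \mathrm{Alg}_N^{\det}(U, L^p([0,1]^d))$, associated with evaluation nodes $x_1, \ldots, x_N \in [0,1]^d$ and a reconstruction mapping $Q: \R^N \to L^p([0,1]^d)$, and then establish the lower bound $\kappa N^{-\lambda}$ by combining the geometric void guarantee of Lemma \ref{lem:void} with a one-bump symmetrization argument over the random sign $\sigma$. This single-bump plus sign-randomization is exactly the simplification over the many-hat-functions combinatorial argument of \cite{grohs2023proof} advertised in the outline.

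First I introduce the void event $E := \{y \in [0,1]^d : \min_{j} |y - x_j|_\infty > \tfrac{1}{4} N^{-1/d}\}$. With the choice $M = 4 N^{1/d}$ this reads $\min_j |y - x_j|_\infty > 1/M$, and Lemma \ref{lem:void} yields $\P_y(E) \ge 1/2$. The second-bullet support property of Lemma \ref{lem:loc} then gives, for every $y \in E$ and every $\sigma \in \{-1,+1\}$, the vanishing condition $\sigma g_{M,y}(x_j) = 0$ for all $j = 1, \ldots, N$. Consequently the algorithm cannot distinguish $\psi_{(y,+1)}$, $\psi_{(y,-1)}$, and the zero function on these nodes, so
\begin{equation*}
A(\psi_{(y,\sigma)}) \;=\; Q\bigl(\sigma g_{M,y}(x_1), \ldots, \sigma g_{M,y}(x_N)\bigr) \;=\; Q(0, \ldots, 0) \;=\; A(0).
\end{equation*}

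The core symmetrization step now averages only over the sign for each such $y$. Writing $u = g_{M,y}$ and $v = A(0)$, the triangle inequality gives $\|u - v\|_{L^p} + \|u + v\|_{L^p} \ge \|2u\|_{L^p}$, so
\begin{equation*}
\E_\sigma\bigl[\|\psi_{(y,\sigma)} - A(\psi_{(y,\sigma)})\|_{L^p}\bigr] \;=\; \tfrac{1}{2}\|u - v\|_{L^p} + \tfrac{1}{2}\|{-}u - v\|_{L^p} \;\ge\; \|g_{M,y}\|_{L^p}.
\end{equation*}
The lower bounds in Lemma \ref{lem:loc} (third and fourth bullets) then yield $\|g_{M,y}\|_{L^p} \ge c\, M^{-\alpha/(\alpha+\L)} M^{-d/p}$ for a constant $c = c(d, p, \alpha, \bl) > 0$.

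Finally I substitute $M = 4 N^{1/d}$, which converts the exponent into $M^{-\alpha/(\alpha+\L)} M^{-d/p} = c'\, N^{-\alpha/(d(\alpha + \L))}\, N^{-1/p} = c' N^{-\lambda}$ with $\lambda = \tfrac{1}{p} + \tfrac{1}{d}\cdot\tfrac{\alpha}{\alpha+\L}$. Integrating over $y$ while restricting to the void event $E$ gives
\begin{equation*}
\E_\xi\bigl[\|\psi_\xi - A(\psi_\xi)\|_{L^p}\bigr] \;\ge\; \P_y(E) \cdot \inf_{y \in E} \E_\sigma[\,\cdots\,] \;\ge\; \tfrac{1}{2} \cdot c' N^{-\lambda},
\end{equation*}
which is \eqref{eq:ys} with $\kappa := c'/2$. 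I do not expect any substantial technical obstacle: the machinery is already encapsulated in Lemmas \ref{lem:void} and \ref{lem:loc}, and the only conceptual subtlety is the sign-symmetrization that turns the algorithm's ignorance on the void into a genuine $L^p$-error lower bound, independently of how sophisticated the mapping $Q$ may be.
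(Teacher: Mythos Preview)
Your proposal is correct and follows essentially the same approach as the paper: both combine the void guarantee of Lemma~\ref{lem:void} with a sign-symmetrization (triangle-inequality) argument over $\sigma\in\{-1,+1\}$ to force a lower bound of order $\|g_{M,y}\|_{L^p}$ on the event where all evaluation nodes miss the support. The only cosmetic difference is that you average first over $\sigma$ for each fixed $y\in E$ and then over $y$, whereas the paper conditions on the joint event $E\subset\Xi$ and invokes the invariance of $\P$ under $\sigma\mapsto-\sigma$; the resulting constants and exponents are identical.
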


The claim of Theorem \ref{thm:main} is then immediate from Lemma \ref{lem:rand} and \ref{lem:randest}.\\

\noindent\textit{Proof of Lemma \ref{lem:randest}.} 
Let \mt{$A\in \Alg_N^\det(U^{\alpha,\infty}_{\bl}(\R^d),L^p([0,1]^d))$} be given. Let $Q: \R^N \to L^p([0,1]^d)$ be the reconstruction mapping associated with $A$ and let $x_1,\dots, x_N \in \R^d$ denote the associated evaluation points, such that $A(f) = Q(f(x_1), \dots, f(x_N))$. We first observe that -- whatever the choice of $x_1, \dots, x_N$ -- we have, by Lemma \ref{lem:void} and our choice of $M = 4N^{1/d}$ in \eqref{eq:M}, 
\[
\Prob_y
\left[ 
\min_{j=1,\dots, N} |y - x_j|_\infty > 1/M
\right]
\ge \frac12.
\]
Now consider the random event 
\begin{align*}
E 
&= 
\set{(y,\sigma) \in \Xi}{\psi_{(y,\sigma)}(x_1) = \dots = \psi_{(y,\sigma)}(x_N) = 0} 
\\
&=  \set{(y,\sigma) \in \Xi}{\sigma g_{M,y}(x_1)=\dots = \sigma g_{M,y}(x_N) = 0},
\end{align*}
where the randomness is introduced by $y\sim \unif([0,1]^d)$ and $\sigma \sim \unif\{-1,+1\}$. Since $g_{M,y}$ is supported in the shifted cube $(y+[-1/M,1/M]^d)$ (cp. Lemma \ref{lem:loc}), it follows that 
\[
\Prob_{y,\sigma}[E] 
\ge \Prob_y
\left[ 
\min_{j=1,\dots, N} |y - x_j|_\infty > 1/M
\right]
\ge \frac12.
\]
We can now finish the proof. To this end, we first observe that 
\begin{align}
\E_{\xi} 
&\left[
\Vert \psi_\xi - A(\psi_\xi) \Vert_{L^p([0,1]^d)}
\right]
\notag
\\
&\qquad \ge 
\E_{\xi} 
\left[
\Vert \psi_\xi - A(\psi_\xi) \Vert_{L^p([0,1]^d)}
\; | \; E
\right] \, \Prob_{\xi}[E]
\notag
\\
&\qquad \ge 
\frac12 \E_{\xi} 
\left[
\Vert \psi_\xi - A(\psi_\xi) \Vert_{L^p([0,1]^d)}
\; | \; E
\right]
\notag
\\
&\qquad = 
\frac12 \E_{\xi} 
\left[
\Vert \psi_\xi - A(0) \Vert_{L^p([0,1]^d)}
\; | \; E
\right].
\label{eq:Q1}
\end{align}
We next note that the random variable $\psi_\xi = \sigma g_{M,y}$, conditioned on $E$, has the same distribution as $-\psi_\xi = -\sigma g_{M,y}$ conditioned on $E$; this follows from the fact that $E$ and $\P$ are invariant under replacement $\sigma \to -\sigma$. Furthermore, it follows from Lemma \ref{lem:loc} that there exists a constant $C = C(\alpha, d,p, \bl)>0$, such that 
\begin{align}
\label{eq:Q2}
\Vert \psi_\xi \Vert_{L^p([0,1]^d)}
=
\Vert g_{M,y} \Vert_{L^p([0,1]^d)}
\ge C M^{-d/p-\alpha/(\alpha+\L)},
\end{align}
for all $M \ge 1$ and $y\in [0,1]^d$.
Hence, it follows with 
\[
\lambda := \frac{1}{p} + \frac{1}{d}\cdot\frac{\alpha}{\alpha + \L},
\]
that
\begin{alignat*}{3}
2C M^{-d\lambda} 
&\le 
 2\E_{\xi} 
\left[
\Vert \psi_\xi \Vert_{L^p([0,1]^d)}
\; | \; E
\right]
&& \text{(by \eqref{eq:Q2})}
\\
&=
\E_{\xi} 
\left[
\Vert \psi_\xi - (-\psi_\xi) \Vert_{L^p([0,1]^d)}
\; | \; E
\right]
\\
&\le 
\E_{\xi} 
\left[
\Vert\psi_\xi - A(0) \Vert_{L^p([0,1]^d)}
\; | \; E
\right]
\\
&\qquad 
+
\E_{\xi} 
\left[
\Vert -\psi_\xi - A(0) \Vert_{L^p([0,1]^d)}
\; | \; E
\right]
\quad
&& \text{(triangle ineq.)}
\\
&= 
2\E_{\xi} 
\left[
\Vert \psi_\xi - A(0) \Vert_{L^p([0,1]^d)}
\; | \; E
\right]
&& \text{(invar. $\sigma \to -\sigma$)}
\\
&\le 
4\E_{\xi} 
\left[
\Vert \psi_\xi - A(\psi_\xi) \Vert_{L^p([0,1]^d)}
\right].
&& \text{(by \eqref{eq:Q1})}
\end{alignat*}
 Thus, we have shown that 
\begin{align*}
\E_{\xi} 
\left[
\Vert \psi_\xi - A(\psi_\xi) \Vert_{L^p([0,1]^d)}
\right]
\ge 
\frac{C}{2} M^{-d\lambda} 
.
\end{align*}
This implies a bound of the desired form \eqref{eq:ys}, since 
\[
\frac{C}{2} M^{-d\lambda} 
=
\frac{C}{2} \left( 4 N^{1/d} \right)^{-d\lambda}
=: \kappa
N^{-\lambda},
\]
where we recall \mt{$\lambda = \frac{1}{p}+\frac{1}{d} \cdot\frac{\alpha}{\alpha+\L}$,} and 
where the constant $\kappa$ depends on $\alpha, d,p, \bl$, but is independent of $N$. This is \eqref{eq:ys} and concludes our proof.
\end{proof}

\section{Extension to operator Learning}
\label{sec:gap-infinite}

We now consider the extension of the theory-to-practice gap to operator learning, i.e. the data-driven approximation of operators mapping between infinite-dimensional function spaces. The finite-dimensional upper bound \eqref{eq:main} suggests that in the infinite-dimensional limit, $d\to \infty$, the best convergence rate in $L^p$ should be upper bounded by $\frac1p$, \emph{independently of $\alpha$ and $\L$}.

Our goal in this second half of the paper is to rigorously state and prove such a theory-to-practice gap for two prototypical classes of neural operators: deep operator networks, as discussed in \cite{lu2021learning,lanthaler2022error}, and a class of integral-kernel based neural operators \cite{kovachki2023neural,LLS2024}. Before stating our main results in the operator-learning setting, we will first summarize the overall objective of operator learning, and discuss suitable infinite-dimensional replacements of the spaces $L^p([0,1]^d)$, for $1\le p \le \infty$ in this context. This is followed by a definition of the aforementioned prototypical neural operator frameworks, their associated operator approximation spaces, and the statement of our main results, establishing a theory-to-practice gap in infinite dimensions.

\subsection{Operator Learning}
\label{sec:operator-learning}
In the following, we will be interested in the sample complexity of operator learning. To simplify our discussion, we will only consider the special case $\cY = \R$, i.e. we consider the data-driven approximation of non-linear functionals $\cG: \cX \to \R$. Since our analysis concerns \emph{lower bounds} on the sample complexity (or \emph{upper bounds} on the optimal convergence rates), the results continue to hold if the output space is replaced by a more general $\cY$ (finite-dimensional or infinite-dimensional). Thus, this reduction can be made without loss of generality. For our analysis, only the fact that the input space $\cX$ is infinite-dimensional will be relevant. 

\subsubsection{Sample Complexity of Operator Learning}

With this operator learning setting in mind, we first point out that the discussion of the sample complexity of Section \ref{sec:sample-complexity} carries over with only minor changes. For convenience, we repeat the main elements here.

We now consider a set of continuous operators $\U \subset C(\cX)= C(\cX;\R)$ on a separable Banach space $\cX$. We fix a Banach space of operators $\V$, equipped with a norm $\Vert \slot \Vert_\V$. In analogy with the finite-dimensional setting, a map $\cA: \U \to \V$ will be called a deterministic method using $N\in \N$ point measurements, if there exists $\mathbf{u} = (u_1,\dots, u_N) \in \cX^N$, and a map $\cQ: \R^N \to \V$, such that 
\[
\cA(\cG) = \cQ(\cG(u_1),\dots, \cG(u_N)), \quad \forall \, \cG \in \U.
\]
We recall that each $\cG \in \U$ is a continuous operator $\cG: \cX \to \R$, and hence the above expression is well-defined.  Consistent with our earlier discussion, the set of all deterministic methods using $N$ point measurements will be denoted by $\Alg^\det_N(\U,\V)$. It will be assumed that $\U \subset \V$ with a canonical embedding.

The approximation error of $\cA$ in $\V$ is defined as
\[
e(\cA,\U,\V) = \sup_{\cG \in \U} \Vert \cA(\cG) - \cG \Vert_\V,
\]
and the optimal error is $e^{\mathrm{det}}_N(\U,\V) = \inf_{\cA \in \Alg_N^{\mathrm{det}}(\U,\V)} e(\cA,\U,\V)$. The optimal order of convergence for deterministic approximation in $\V$ using $N$ point samples is defined as in \eqref{eq:beta-det}. 

Randomized methods for operator learning, as well as the corresponding errors and the optimal randomized order \eqref{eq:beta-ran} are similarly defined, in analogy with Section \ref{sec:sample-complexity}; we here only recall that a randomized method using $N$ point measurement (in expectation) is a tuple $(\cA,\mathsf{N})$ consisting of a family $\cA = (\cA_\omega), {\omega\in \Omega}$ of maps $\cA_\omega: \U \to \V$ indexed by a probability space $(\Omega, \cF, \P)$ and a measurable function $\mathsf{N}:\Omega \to \N$ with the same properties as described there, and with $\cA_\omega \in \Alg^\det_{\mathsf{N}(\omega)}(\U,\V)$. We continue to denote by $\Alg_N(\U,\V)$ the set of all strongly measurable randomized methods with $\E_\omega[\mathsf{N}(\omega)]\le N$ expected point evaluations.

In the following, we will derive upper bounds on $\beta_\ast(\U,\V)$, where $\U$ is the unit ball in an operator learning approximation space, and $\V$ is a space of operators with distance measured by either an $L^p$-norm with $1\le p<\infty$, or by a sup-norm. We describe the relevant setting in the next section.

\subsection{Input functions in infinite dimensions}
\label{sec:input-functions}

In the finite-dimensional case, the domain $[0,1]^d$ is widely considered as a canonical prototype, and results obtained for $[0,1]^d$ usually extend to more general bounded domains $D \subset \R^d$, under mild assumptions. In the infinite-dimensional setting, there is no longer such a canonical choice. Therefore, we preface our discussion of the infinite-dimensional theory-to-practice gap with the introduction of suitable alternatives to the spaces $L^p([0,1]^d)$ and $C([0,1]^d)$ (the latter corresponding to the limit case $p=\infty$) to be considered in this work. 






\paragraph{Approximation in $\V = L^p(\mu)$}

When the approximation error is measured in an $L^p$-norm for $1\le p< \infty$, we consider the following prototypical setting:  We are given a probability measure $\mu$ on the input function space $\cX$. We then consider the Banach space of $p$-integrable (real-valued) operators $\V = L^p(\mu)$, with the following $L^p$-norm:
\[
\Vert \cG \Vert_{L^p(\mu)} := \E_{u\sim \mu}\left[ |\cG(u)|^p \right]^{1/p}.
\]
Thus, in the infinite-dimensional setting, the probability measure $\mu$ replaces the Lebesgue measure on $[0,1]^d$. \mt{In order to obtain the hardness result, we must ensure that $\mu$ is ``truly infinite-dimensional''.} We thus assume that the following $d$-dependent assumption holds for any $d\in \N$:

\renewcommand{\theassumption}{\arabic{section}.\arabic{assumption}\textsuperscript{(d)}}
\begin{assumption} \label{ass:mud}
There exist linearly independent $\{e_1,\dots, e_d\}\subset \cX$ with bi-orthogonal $\{e^\ast_1,\dots,e^\ast_d\}\subset \cX^\ast$, such that $\mu \in \cP(\cX)$ is the law of 
\begin{align}\label{eq:muparam}
u = \xi + \sum_{j=1}^d y_j e_j, \quad (y_1,\dots, y_d)\sim \mu_d, \; \xi \sim \mu_d^\perp,
\end{align}
where
\begin{enumerate}[label=\textup{(\alph*)}]
\item The law of $(y_1,\dots y_d)$ is of the form $\mu_d = \prod_{j=1}^d \rho_j(z_j) \dz_j$, for $\rho_j \in L^1(\R)$.
\item \mt{The law of $\xi$ is of the form $\mu_d^\perp \in \cP(\Omega_d)$ for $\Omega_d := \set{\xi \in \cX}{e^\ast_1(\xi) = \dots = e^\ast_d(\xi) = 0} \subset \cX.$}
\item There exist intervals $I_j\subset \R$ such that $\essinf_{z_j\in I_j} \rho_j(z_j) > 0$ for $j=1,\dots, d$.
\end{enumerate}
\end{assumption}
\renewcommand{\theassumption}{\arabic{section}.\arabic{assumption}}

\mt{We can thus think of $\mu$ in Assumption \ref{ass:mud} as essentially equivalent to the product measure $\mu_d \otimes \mu_d^\perp$. This structure will be convenient for our derivation of the infinite-dimensional theory-to-practice gap. Below we provide an illustrative example of a basic setting where Assumptions \ref{ass:mud} hold.}
\mt{
\begin{example}
Consider the space $\cX = L^2([0,1])$ and the Dirichlet Laplacian sine basis $e_j(x) = \sqrt{2}\sin(j\pi x)$ for $j \in \{1, 2, \dots\}$ and $x \in [0,1]$. Then we have that $e_j^*(u) = \int_0^1 u(x) e_j(x) \dx$. It is straightforward to check that biorthogonality and linear independence hold. Draw independent $Z_j \sim N(0, \sigma_j^2)$ with $\sum_j \sigma_j^2 < \infty$, for instance, with $\sigma_j = \frac{1}{j}$, and set $u = \sum_{j=1}^\infty Z_je_j$. Then $\mu$, the law of $u$, is the trace-class Gaussian measure $N(0, C)$ on $L^2([0,1])$ with covariance $C = \sum_{j=1}^\infty \sigma_j^2 e_j \otimes e_j$. Splitting at $d$ with $y_j = Z_j$ for $j \leq d$ and $\xi = \sum_{j > d} Z_j e_j$ yields the form \eqref{eq:muparam}, where independence of the $Z_j$ gives $\xi \perp (y_1, \dots, y_d)$. The law of $(y_1, \dots, y_d)$ is $\mu_d = N(0, \Sigma_d)$ where $\Sigma_d = \mathrm{diag}(\sigma_1^2, \dots, \sigma_d^2)$ with product density $\mu_d = \prod_{j=1}^d \rho_j(z_j) \mathsf{d}z_j$ for $\rho_j$ the $ N(0, \frac{1}{j^2})$ density. Each Gaussian density satisfies $\essinf_{z \in I_j} \rho_j > 0$ on any bounded interval $I_j$. The law of $\xi$ is $\mu_d^\perp = N(0, C_d^\perp)$ with $C_d^\perp = \sum_{j > d} \sigma_j^2 e_j \otimes e_j$, supported on $\Omega_d$ since $e_k^*(\xi) = 0 $ for $k \leq d$.
\end{example}
}
This concludes our discussion of the $L^p(\mu)$-setting considered in this work.

\paragraph{Approximation in $\V = C(\cK)$} 
When the goal is to approximate $\cG \in \U$ uniformly, i.e. with respect to the $L^\infty$-norm, we restrict attention to a compact set $\cK \subset \cX$. In this case, we consider the Banach space of continuous operators $\V = C(\cK)$, endowed with the sup-norm:
\[
\Vert \cG \Vert_{C(\cK)} := \sup_{u \in \cK} |\cG(u)|.
\]
Thus, in the infinite-dimensional setting, the compact set $\cK$ replaces the unit cube $[0,1]^d$. To ensure that $\cK$ is truly infinite-dimensional, and we make the following assumption:
\begin{assumption}
\label{ass:K}
We assume that the set $\cK$ is (i) \emph{convex} and (ii) there does \emph{not} exist a finite-dimensional subspace $\cX_0 \subset \cX$ containing $\cK$.
\end{assumption}

We can relate this uniform setting to the $L^p(\mu)$-setting described above:
\begin{proposition}
\label{prop:reduction}
Let $\cK$ satisfy Assumption \ref{ass:K}. Then for any $d\in \N$, there exists a probability measure $\mu \in \cP(\cX)$, with $\supp(\mu) \subset \cK$, satisfying Assumption \ref{ass:mud}; more specifically, $\mu$ is the law of 
\[
u = e_0 + \sum_{j=1}^d y_j e_j, \qquad (y_1,\dots, y_d) \sim \prod_{j=1}^d \Unif([0,1]).
\]
Here $e_1, \dots, e_d \in \cX$ are linearly independent elements for which there exist bi-orthogonal elements $e^\ast_1,\dots, e^\ast_d \in \cX^\ast$, and $e_0\in \cK$ is such that $e^\ast_1(e_0) = \dots = e^\ast_d(e_0) = 0$.
\end{proposition}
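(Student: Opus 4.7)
The plan is to exhibit the required data $(e_j)_{j=0}^d$, $(e^\ast_j)_{j=1}^d$, and $\mu$ explicitly, and then verify the three clauses of Assumption \ref{ass:mud} by direct inspection. The underlying geometric idea is that, since $\cK$ is infinite-dimensional and convex, we can inscribe a non-degenerate $d$-parallelepiped inside $\cK$ and take $\mu$ to be the pushforward of Lebesgue measure on the parametrizing cube $[0,1]^d$.

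First, I would use Assumption \ref{ass:K}(ii) to select $d+1$ \emph{linearly} independent vectors $w_0, w_1, \ldots, w_d \in \cK$; such vectors exist because otherwise the linear span of $\cK$ would have dimension at most $d$, placing $\cK$ inside a finite-dimensional subspace. Set $e_0 := w_0$ and $e_j := (w_j - w_0)/d$ for $j=1, \ldots, d$. A short linear-algebra check shows that $\{e_0, e_1, \ldots, e_d\}$ remains linearly independent, since any vanishing combination $\alpha_0 e_0 + \sum_j \alpha_j e_j = 0$ rewrites, after clearing denominators, as a vanishing linear combination of $w_0, \ldots, w_d$, forcing all $\alpha_j = 0$.

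For any $y \in [0,1]^d$, the identity
\[
e_0 + \sum_{j=1}^d y_j e_j \;=\; \Bigl(1 - \frac{1}{d}\sum_{j=1}^d y_j\Bigr)\, w_0 \;+\; \sum_{j=1}^d \frac{y_j}{d}\, w_j
\]
expresses the left-hand side as a convex combination of $w_0, \ldots, w_d \in \cK$, so Assumption \ref{ass:K}(i) yields that it lies in $\cK$. Taking $\mu$ to be the law of $u = e_0 + \sum_j y_j e_j$ with $(y_1, \ldots, y_d) \sim \prod_{j=1}^d \Unif([0,1])$ therefore gives $\supp(\mu) \subset \cK$.

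For the biorthogonal system, the linear independence of $\{e_0, e_1, \ldots, e_d\}$ lets me define bounded linear functionals on the $(d+1)$-dimensional subspace $\Span\{e_0, e_1, \ldots, e_d\}$ by $e^\ast_j(e_0) := 0$ and $e^\ast_j(e_k) := \delta_{jk}$, and then extend them to $\cX^\ast$ by Hahn--Banach. Assumption \ref{ass:mud} then follows by inspection: clause (a) holds with $\rho_j = \mathbf{1}_{[0,1]}$, clause (b) holds with $I_j := [0,1]$ and $\essinf_{I_j} \rho_j = 1$, and clause (c) is exactly $e^\ast_j(e_0) = 0$, with the transverse component $\xi$ being the deterministic point $e_0$ (so $\mu_d^\perp = \delta_{e_0}$). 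The main obstacle I anticipate is precisely keeping $e_0$ \emph{outside} $\Span\{e_1, \ldots, e_d\}$: had we chosen the $w_j$ only \emph{affinely} independent, $e_0 = w_0$ could land in that span and the requirement $e^\ast_j(e_0) = 0$ would clash with the biorthogonality constraints $e^\ast_j(e_k) = \delta_{jk}$. Demanding full \emph{linear} independence of $w_0, \ldots, w_d$, which Assumption \ref{ass:K}(ii) supplies, is the clean way to avoid this pitfall.
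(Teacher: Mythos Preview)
Your proposal is correct and follows essentially the same approach as the paper: both select $d+1$ linearly independent points in $\cK$, form $e_0$ and the $e_j$ by the same affine rescaling, exhibit $u$ as a convex combination to get $\supp(\mu)\subset\cK$, and obtain the biorthogonal functionals from the linear independence of $e_0,\dots,e_d$. Your explicit invocation of Hahn--Banach and the remark on why full linear (rather than affine) independence is needed are details the paper leaves implicit, but the argument is otherwise the same.
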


\begin{proof}
Let $d\in \N$ be given. Our aim is to construct $\mu\in \cP(\cX)$ as in the claim of Proposition \ref{prop:reduction}. By assumption, $\cK$ is infinite-dimensional. It follows that there exist $v_0,\dots, v_d\in \cK$ which are linearly independent. Given such a choice, we set $e_0 := v_0$, and $e_j := \frac1d (v_j - v_0)$ for $j=1,\dots, d$. Since $v_0,\dots, v_d$ are linearly independent, it follows that also $e_0,\dots, e_d$ are linearly independent. Furthermore, for any finite set of linearly independent vectors, we can find bi-orthogonal elements $e^\ast_0,\dots, e^\ast_d\in \cX^\ast$, such that $e^\ast_j(e_k) = \delta_{jk}$, $j,k=0,\dots, d$. In particular, for this choice we then have $e^\ast_1(e_0) = \dots = e^\ast_d(e_0) = 0$.

We now define $\mu\in \cP(\cX)$ as the law of $u := e_0 + \sum_{j=1}^d y_j e_j$, with $y_1,\dots, y_d \overset{iid}{\sim} \Unif([0,1])$. This $\mu$ trivially satisfies Assumption \ref{ass:mud}. To prove Proposition \ref{prop:reduction}, it thus only remains to show that $\supp(\mu) \subset \cK$. To see this, we observe that
\begin{align*}
u 
&= e_0 + \sum_{j=1}^d y_j e_j 
= v_0 + \frac1d \sum_{j=1}^d y_j (v_j - v_0)
\\
&= \left(1-\frac1d \sum_{j=1}^d y_j\right) v_0 + \sum_{j=1}^d \frac{y_j}{d} v_j
=: \sum_{j=0}^d \lambda_j v_j,
\end{align*}
where $\lambda_0 := 1 - \frac1d \sum_{j=1}^d y_j$, $\lambda_j := \frac1d y_j$ for $j=1,\dots, d$.
The last sum identifies $u$ as a convex combination of $v_0,\dots, v_d\in \cK$: Indeed, since $y_j\in [0,1]$, for $j=1,\dots,d$, it follows that $\lambda_0,\dots, \lambda_d\ge 0$ and we also have $\sum_{j=0}^d \lambda_j = 1$ by definition of $\lambda_0$. It now follows from the convexity of $\cK$ that $u\in \cK$ for any choice of $y_1,\dots, y_d \in [0,1]$, and hence $\supp(\mu) \subset \cK$. This concludes the proof.
\end{proof}

A consequence of Proposition \ref{prop:reduction} is that approximation of a continuous operator $\cG:\cK \to \cY$ with respect to the $C(\cK)$-norm is at least as difficult as approximation with respect to the $L^p(\mu)$-norm, for \emph{any} $p \in [1,\infty)$, and where $\mu$ satisfies Assumption \ref{ass:mud}. Indeed, for any reconstruction method $\cA: \U \to C(\cK)$, we have
\[
\Vert \cG - \cA(\cG) \Vert_{L^p(\mu)} 
\le 
\Vert \cG - \cA(\cG) \Vert_{L^\infty(\mu)} 
\le 
\Vert \cG - \cA(\cG) \Vert_{C(\cK)},
\]
where the last inequality follows from $\supp(\mu) \subset \cK$.
This implies that the optimal convergence rate with respect to $C(\cK)$ can be at most as large as the optimal convergence rate with respect to $L^p(\mu)$:
\begin{lemma}
\label{lem:reduction-ran}
Let $\cK\subset \cX$ be compact. If $\mu\in \cP(\cX)$ is a probability measure such that $\supp(\mu)\subset\cK$, then 
\begin{align}
\label{eq:reduction-ran}
\beta_\ast(\U,C(\cK)) \le \beta_\ast(\U,L^p(\mu)).
\end{align}
\end{lemma}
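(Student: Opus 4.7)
The plan is to show that any reconstruction method for the harder problem (approximation in $C(\cK)$) can be reinterpreted as a reconstruction method for the easier problem (approximation in $L^p(\mu)$) with at least as small an error, which automatically yields the inequality of optimal convergence rates.

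First, I would argue that since $\cK$ is compact and $\mu$ is a probability measure with $\supp(\mu)\subset \cK$, the natural inclusion $\iota: C(\cK) \to L^p(\mu)$, sending a continuous operator to its $\mu$-almost-everywhere equivalence class, is a well-defined, continuous, linear map, and it satisfies $\Vert \iota(\cG)\Vert_{L^p(\mu)} \le \Vert \cG \Vert_{C(\cK)}$ for every $\cG \in C(\cK)$. Given any randomized method $(\cA,\mathsf{N}) \in \Alg_N(\U, C(\cK))$, the composition $(\iota\circ \cA_\omega,\mathsf{N})$ uses the same point evaluations $\cG(u_1),\dots, \cG(u_{\mathsf{N}(\omega)})$ of $\cG\in \U$, and the continuity of $\iota$ preserves Borel measurability of $\omega \mapsto \cA_\omega(\cG)$ as well as joint measurability of $(\omega,\cG)\mapsto \cA_\omega(\cG)$. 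Hence $(\iota\circ \cA,\mathsf{N}) \in \Alg_N(\U, L^p(\mu))$.

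Next, for any $\cG \in \U \subset C(\cK)$ and any $\omega$, the inequality $\Vert \cG - \iota \cA_\omega(\cG)\Vert_{L^p(\mu)} \le \Vert \cG - \cA_\omega(\cG)\Vert_{C(\cK)}$ holds. Taking expectations over $\omega$ and suprema over $\cG \in \U$ gives
\begin{equation*}
e((\iota\circ \cA,\mathsf{N}), \U, L^p(\mu)) \le e((\cA,\mathsf{N}), \U, C(\cK)).
\end{equation*}
Taking the infimum on the right over $(\cA,\mathsf{N})\in \Alg_N(\U, C(\cK))$ -- noting that the corresponding composed methods form a subfamily of $\Alg_N(\U, L^p(\mu))$ -- yields $e^{\mathrm{ran}}_N(\U, L^p(\mu)) \le e^{\mathrm{ran}}_N(\U, C(\cK))$ for every $N\in \N$.

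Finally, this pointwise-in-$N$ inequality between optimal errors directly implies the inequality between optimal orders of convergence: if $e^{\mathrm{ran}}_N(\U, C(\cK)) \le C N^{-\beta}$ for some $\beta \ge 0$ and constant $C>0$, then $e^{\mathrm{ran}}_N(\U, L^p(\mu)) \le C N^{-\beta}$ as well, so by definition \eqref{eq:beta-ran} we obtain $\beta_\ast(\U,C(\cK))\le \beta_\ast(\U,L^p(\mu))$. The only non-routine point is the verification that the composition $\iota\circ \cA$ lies in $\Alg_N(\U, L^p(\mu))$ in the strongly measurable randomized sense, but this is immediate from the fact that $\iota$ is a continuous linear map between Banach spaces, so I do not expect any real obstacle.
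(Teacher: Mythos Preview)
Your proposal is correct and follows essentially the same approach as the paper: the paper simply observes that for any reconstruction method $\cA$, the inequality $\Vert \cG - \cA(\cG)\Vert_{L^p(\mu)} \le \Vert \cG - \cA(\cG)\Vert_{L^\infty(\mu)} \le \Vert \cG - \cA(\cG)\Vert_{C(\cK)}$ holds because $\supp(\mu)\subset\cK$, and deduces the inequality of optimal rates from this. Your version is in fact more careful than the paper's, since you explicitly verify that composing with the continuous inclusion $\iota$ preserves the strong measurability required for membership in $\Alg_N(\U,L^p(\mu))$, a point the paper leaves implicit.
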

This simple observation will allow us to deduce results about the uniform setting from corresponding results in the $L^p(\mu)$ setting, and derive estimates on $\beta_\ast(\U,C(\cK))$ by passing to the limit $p\to \infty$.

\subsection{From Finite to Infinite Dimensions}
\label{sec:finite-to-infinite}

Neural operators $\Psi: \cX \to \R$, approximating $\cG: \cX \to \R$, can often be interpreted as a composition of two mappings, $\Psi(u) = \psi \circ \cE(u)$. Here, $\cE: \cX \to \R^d$ is an encoder, which maps the infinite-dimensional input into a finite-dimensional latent space, and $\psi: \R^d \to \R$ is the realization of a finite-dimensional neural network. The latent dimension $d\in \N$ is a hyperparameter of the architecture.

The general idea behind our proof of the infinite-dimensional theory-to-practice gap is the following: If the relevant approximation space $\U \subset C(\cX)$ contains compositions of the form $f \circ \cE: \cX \to \R$ where $f \in U^{\alpha,\infty}_{\bl}$ belongs to the $d$-dimensional neural network approximation space, then the mapping 
\[
U^{\alpha,\infty}_{\bl} \to \U, \quad f \mapsto f \circ \cE,
\]
defines an embedding of $U^{\alpha,\infty}_{\bl}$ into $\U$. Thus, the sample complexity of $\U$ should be at least as large as that of $U^{\alpha,\infty}_{\bl}$. As a consequence, any $d$-dimensional upper bound on convergence rate implies a corresponding bound in the infinite-dimensional case. This general intuition is confirmed and made precise by the following proposition:
\begin{proposition}[$L^p$ setting]
\label{prop:Lpop}
Let $\cX$ be a separable Banach space, let $\mu \in \cP(\cX)$ be a probability measure on $\cX$, and let $\U \subset C(\cX)$ be a set of continuous operators. Assume that there is an encoder $\cE: \cX \to \R^d$ and a constant $c>0$, such that 
\begin{align}
\label{eq:c}
\cE_\# \mu \ge c \cdot \unif([0,1]^d).
\end{align}
If for some $\alpha >0$ and $\bl: \N \to \N$, we have
\begin{align}
\label{eq:incl}
\set{
f \circ \cE: \cX \to \R
}{
f \in U^{\alpha,\infty}_{\bl}(\R^d)
}
\subset 
\U,
\end{align}
then 
\begin{align}
\label{eq:Lpop}
\beta_\ast(\U, L^p(\mu))
\le 
\frac{1}{p} + \frac{1}{d} \cdot \frac{\alpha}{\alpha + \L}.
\end{align}
\end{proposition}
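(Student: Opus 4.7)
The plan is to mimic the argument used for Theorem \ref{thm:main}, but lifted through the encoder $\cE$. By \eqref{eq:incl}, for any $y \in [0,1]^d$, the composition $g_{M,y} \circ \cE$ belongs to $\U$, where $g_{M,y} \in U^{\alpha,\infty}_{\bl}(\R^d)$ is the localized neural network of Lemma \ref{lem:loc}. Set $M := 4 N^{1/d}$ and define the random operator
\[
\Psi_\xi := \sigma \, g_{M,y}\circ \cE \in \U, \qquad \xi = (y,\sigma) \in \Xi := [0,1]^d \times \{-1,+1\},
\]
endowed with $\P := \Unif([0,1]^d) \otimes \Unif(\{-1,+1\})$. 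An appropriate analogue of Lemma \ref{lem:rand} for the operator-learning setting reduces matters to producing a suitable lower bound on $\E_\xi[\Vert \Psi_\xi - \cA(\Psi_\xi) \Vert_{L^p(\mu)}]$ for arbitrary $\cA \in \Alg^{\det}_N(\U,L^p(\mu))$.

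Given such an $\cA$, let $u_1, \dots, u_N \in \cX$ be its evaluation points, and set $x_j := \cE(u_j) \in \R^d$. Applying Lemma \ref{lem:void} to $x_1,\dots, x_N$ and our choice of $M$, we find that the event
\[
E := \set{(y,\sigma)\in \Xi}{ \min_{j=1,\dots, N} |y - \cE(u_j)|_\infty > 1/M }
\]
has probability $\P[E] \ge 1/2$. Since $g_{M,y}$ is supported in $y + [-1/M, 1/M]^d$, on the event $E$ we have $\Psi_\xi(u_j) = \sigma g_{M,y}(\cE(u_j)) = 0$ for $j=1,\dots, N$, and hence $\cA(\Psi_\xi) = \cA(0)$ on $E$.

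Next, we need to pass the $L^p([0,1]^d)$-norm lower bound on $g_{M,y}$ from Lemma \ref{lem:loc} through $\cE$. This is where the push-forward hypothesis \eqref{eq:c} enters: for any $y\in [0,1]^d$,
\[
\Vert g_{M,y} \circ \cE \Vert_{L^p(\mu)}^p
=
\int_{\R^d} |g_{M,y}(z)|^p \, \mathsf{d}(\cE_\#\mu)(z)
\ge
c \Vert g_{M,y} \Vert_{L^p([0,1]^d)}^p
\ge
c\, C^p M^{-d - p\alpha/(\alpha+\L)},
\]
by Lemma \ref{lem:loc}, so that $\Vert \Psi_\xi \Vert_{L^p(\mu)} \ge C' M^{-d/p - \alpha/(\alpha+\L)}$ uniformly in $\xi$. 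The distribution of $\psi_\xi$ conditioned on $E$ is invariant under $\sigma \mapsto -\sigma$; exactly as in the proof of Lemma \ref{lem:randest}, combining this symmetry with the triangle inequality yields
\[
\E_\xi \bigl[ \Vert \Psi_\xi - \cA(\Psi_\xi) \Vert_{L^p(\mu)} \bigr]
\ge
\tfrac{1}{4}\, \P[E] \cdot 2 \, C'\, M^{-d\lambda}
\gtrsim
N^{-\lambda},
\qquad \lambda := \tfrac{1}{p} + \tfrac{1}{d}\cdot \tfrac{\alpha}{\alpha+\L},
\]
which is the required estimate.

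The only genuinely new ingredient beyond Theorem \ref{thm:main} is the push-forward bound \eqref{eq:c}, which converts $L^p([0,1]^d)$-mass of $g_{M,y}$ into $L^p(\mu)$-mass of $\Psi_\xi$; everything else is a direct transcription of the finite-dimensional probabilistic argument. The main (minor) obstacle is verifying that Lemma \ref{lem:rand} carries over verbatim to deterministic and randomized algorithms on $\U \subset C(\cX)$ valued in $L^p(\mu)$; this is straightforward since the proof in \cite{grohs2023proof} only uses the probabilistic structure of the hypothesis class and not the underlying ambient space.
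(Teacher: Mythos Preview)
Your proof is correct and takes a genuinely different route from the paper. The paper proceeds via a reduction lemma (Lemma~\ref{lem:Lpop1}): given any $\cA \in \Alg^{\det}_N(\U,L^p(\mu))$, it constructs an associated finite-dimensional algorithm $A \in \Alg^{\det}_N(U^{\alpha,\infty}_{\bl},L^p([0,1]^d))$ by setting $x_j := \cE(u_j)$ and defining the reconstruction $Q$ through a \emph{conditional expectation}, $Q(y)(x) := \E_{u\sim\mu}[\cQ(y)(u)\mid \cE(u)=x]$. Jensen's inequality then yields $\Vert f\circ\cE - \cA(f\circ\cE)\Vert_{L^p(\mu)} \ge c^{1/p}\Vert f - A(f)\Vert_{L^p([0,1]^d)}$, after which Lemma~\ref{lem:randest} is invoked as a black box. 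You instead bypass the intermediate algorithm entirely: you use the push-forward hypothesis \eqref{eq:c} only to transfer the \emph{norm lower bound} $\Vert g_{M,y}\circ\cE\Vert_{L^p(\mu)}^p \ge c\,\Vert g_{M,y}\Vert_{L^p([0,1]^d)}^p$, and then rerun the probabilistic void-plus-symmetry argument directly in $L^p(\mu)$. Your approach is more elementary (no conditional expectations, no Jensen), while the paper's is more modular (the reduction $\cA\mapsto A$ is a reusable statement independent of the specific random family $\psi_\xi$). One small point: Lemma~\ref{lem:void} is stated for $x_j\in[0,1]^d$, whereas your $x_j = \cE(u_j)$ need not lie there; the union-bound proof of that lemma works verbatim for arbitrary $x_j\in\R^d$, so this is harmless but worth a one-line remark.
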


Our proof of Proposition \ref{prop:Lpop} will be based on the following two lemmas. The first lemma shows that, under the assumptions of Proposition \ref{prop:Lpop} any deterministic reconstruction method $\cA \in \Alg_N^\det(\U,L^p(\mu))$ in infinite dimensions induces an associated finite-dimensional reconstruction method $A\in \Alg_N^\det(U^{\alpha,\infty}_{\bl},L^p([0,1]^d))$.

\begin{lemma}
\label{lem:Lpop1}
With the notation and under assumptions \eqref{eq:c} and \eqref{eq:incl} of Proposition \ref{prop:Lpop}, the following holds: For any $\cA \in \Alg_N^\det(\U,L^p(\mu))$, there exists $A \in \Alg_N^\det(U^{\alpha,\infty}_{\bl}, L^p([0,1]^d))$, such that 
\begin{align}
\label{eq:lemLpop1}
\Vert f\circ \cE - \cA(f\circ \cE) \Vert_{L^p(\mu)}
\ge 
c 
\Vert f - A(f) \Vert_{L^p([0,1]^d)},
\quad \forall \, f \in U^{\alpha,\infty}_{\bl}.
\end{align}
Here $c>0$ is the constant of \eqref{eq:c}.
\end{lemma}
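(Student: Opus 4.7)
The plan is to transport the infinite-dimensional reconstruction method $\cA$ through the encoder $\cE$. Writing $\cA(\cG) = \cQ(\cG(u_1),\dots,\cG(u_N))$ for some evaluation points $u_1,\dots,u_N \in \cX$ and reconstruction map $\cQ:\R^N \to L^p(\mu)$, I would set the finite-dimensional evaluation points to be $x_j := \cE(u_j) \in \R^d$. The point is that for any $f \in U^{\alpha,\infty}_{\bl}(\R^d)$, the inclusion \eqref{eq:incl} ensures $f\circ\cE \in \U$, so $\cA$ can be applied, and by construction
\[
\cA(f\circ\cE) = \cQ\bigl(f(\cE(u_1)),\dots,f(\cE(u_N))\bigr) = \cQ(f(x_1),\dots,f(x_N)).
\]
Hence the samples of $f\circ\cE$ used by $\cA$ are exactly the samples of $f$ at $x_1,\dots,x_N \in \R^d$.

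To turn $\cQ$ into a reconstruction map valued in $L^p([0,1]^d)$, I would average along the fibers of $\cE$: for each $\mathbf{v}\in\R^N$, define
\[
Q(\mathbf{v})(x) := \E_{U\sim \mu}\!\left[\cQ(\mathbf{v})(U) \,\middle|\, \cE(U) = x\right],
\]
interpreted via standard conditional expectation as an element of $L^p(\cE_\#\mu)$. Since $\cE_\#\mu\ge c\cdot \Unif([0,1]^d)$ forces every $(\cE_\#\mu)$-null subset of $[0,1]^d$ to be Lebesgue-null, the restriction $Q(\mathbf{v})|_{[0,1]^d}$ lies unambiguously in $L^p([0,1]^d)$, and $A(f) := Q(f(x_1),\dots,f(x_N))$ defines an element of $\Alg^\det(U^{\alpha,\infty}_{\bl},L^p([0,1]^d))$ using the same $N$ point samples as $\cA$.

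For the inequality \eqref{eq:lemLpop1}, I would apply Jensen's inequality conditionally on $\sigma(\cE)$. Since $f\circ\cE$ is $\sigma(\cE)$-measurable, $\E[(f\circ\cE)(U)\mid \cE(U)=x] = f(x)$, while by construction $\E[\cA(f\circ\cE)(U)\mid \cE(U)=x] = A(f)(x)$. Convexity of $t\mapsto |t|^p$ then yields
\[
|f(x)-A(f)(x)|^p \le \E\!\left[\bigl|(f\circ\cE)(U) - \cA(f\circ\cE)(U)\bigr|^p \,\middle|\, \cE(U)=x\right].
\]
Integrating against $\cE_\#\mu$ and invoking \eqref{eq:c} gives $c\,\|f-A(f)\|_{L^p([0,1]^d)}^p \le \|f\circ\cE - \cA(f\circ\cE)\|_{L^p(\mu)}^p$; taking $p$-th roots yields \eqref{eq:lemLpop1} with constant $c^{1/p}$ in place of $c$. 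Since $c\le 1$ (both $\cE_\#\mu$ and $\Unif([0,1]^d)$ are probability measures, so evaluating \eqref{eq:c} on $[0,1]^d$ forces $c\le 1$), we have $c^{1/p}\ge c$ and the stated bound follows.

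The main obstacle I expect is the measurability bookkeeping: one must justify the existence of a regular conditional probability (disintegration) of $\mu$ along $\cE$, so that $\E[\cQ(\mathbf{v})\mid \cE]$ is well-defined simultaneously for every $\mathbf{v}\in\R^N$ and $Q:\R^N\to L^p([0,1]^d)$ is a bona fide map. Under separability of $\cX$ and Borel-measurability of $\cE$ this is standard, but the details (choosing a version of the conditional expectation, verifying that the composition with $\mathbf{v}\mapsto\cQ(\mathbf{v})$ is well-defined) should be carried out carefully.
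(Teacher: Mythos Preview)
Your proof is correct and follows essentially the same route as the paper: set $x_j = \cE(u_j)$, define $Q(\mathbf{v})$ as the conditional expectation of $\cQ(\mathbf{v})$ given $\cE$, and apply conditional Jensen together with $\cE_\#\mu \ge c\cdot\Unif([0,1]^d)$. Your observation that the argument naturally yields the constant $c^{1/p}$ and that $c\le 1$ is needed to recover the stated constant $c$ is in fact a detail the paper's own proof glosses over (it stops at the inequality between $p$-th powers), so your version is slightly more careful here; the measurability concern you flag is likewise treated only informally in the paper.
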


\begin{proof}[Proof Sketch] 
The detailed proof of Lemma \ref{lem:Lpop1} is included in Appendix \ref{app:Lpop}. The basic idea of the proof is the following: By definition, $\cA$ is of the form $\cA(\cG) = \cQ(\cG(u_1),\dots, \cG(u_N))$ for some sample points $u_1,\dots, u_N\in \cX$ and $\cQ: \R^N\to L^p(\mu)$. We want to construct $A: U^{\alpha,\infty}_{\bl}\to L^p([0,1]^d)$, of the form $A(f) = Q(f(x_1),\dots, f(x_N))$, where $Q: \R^N \to L^p([0,1]^d)$. The canonical choice of the sample points $x_1,\dots, x_N \in \R^d$ is via composition with the encoder, $x_j := \cE(u_j)$. The main remaining question is then how to construct the mapping $Q: \R^N \to L^p([0,1]^d)$ from $\cQ: \R^N \to L^p(\mu)$. A first idea is that this reconstruction could satisfy,
\[
Q(y_1,\dots,y_N)(\cE(u)) := \cQ(y_1,\dots,y_N)(u), \quad \forall \, u \in \cX, 
\; \forall \, (y_1,\dots, y_N) \in \R^N.
\]
However, this is not well-defined, since different $u$ will generally map to the same $x=\cE(u)$. The improved guess is the following: Fix $x\in \R^d$ and consider a random variable $u\sim \mu$. We now condition on the event that $\cE(u)=x$. This gives a conditional distribution on the input function space. We then \emph{average} the reconstruction $\cQ(y_1,\dots,y_N)(u)$ in $u$ over this conditional distribution, i.e. define
\[
Q(y_1,\dots, y_N)(x) := \E_{u\sim \mu}\left[  \cQ(y_1,\dots,y_N)(u) \cond \cE(u)=x \right].
\]
This is well-defined, and due to Jensen's inequality the conditional averaging on the right-hand side turns out to reduce the reconstruction error of $A$ compared to $\cA$. The detailed calculations are provided in Appendix \ref{app:Lpop}.
\end{proof}
The last lemma is in anticipation of our next result, Lemma \ref{lem:Lpop2}. The final result Proposition \ref{prop:Lpop} will then be an immediate consequence.

\begin{lemma}
\label{lem:Lpop2} 
With the notation and under assumptions \eqref{eq:c} and \eqref{eq:incl} of Proposition \ref{prop:Lpop}, the following holds.
There exists $\kappa > 0$, such that for every $N\in \N$, there exists a probability space $(\Xi,\P)$ and a random variable $\Psi: \Xi \to \U$, $\xi \mapsto \Psi_\xi$, such that
\begin{align}
\label{eq:lemLpop2}
\E_\xi\left[ 
\Vert \Psi_\xi - \cA(\Psi_\xi) \Vert_{L^p(\mu)}
\right]
\ge \kappa N^{-\lambda}, \quad \forall \, \cA \in \Alg_N^{\det}(\U,L^p(\mu)),
\end{align}
where $\lambda = \frac{1}{p} + \frac{1}{d} \cdot \frac{\alpha}{\alpha+\L}$.
\end{lemma}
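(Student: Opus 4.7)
The plan is to reduce the claim directly to its finite-dimensional counterpart Lemma \ref{lem:randest}, by pulling back the random construction from the proof of Theorem \ref{thm:main} through the encoder $\cE$. Given $N\in\N$, let $(\Xi,\P)$ and the random function $\psi: \Xi \to U^{\alpha,\infty}_{\bl}(\R^d)$ be those defined in \eqref{eq:rv} in the proof of Theorem \ref{thm:main}: recall that $\psi_\xi = \sigma\, g_{M,y}$ with $M = 4 N^{1/d}$, $y\sim\unif([0,1]^d)$, $\sigma\sim\unif(\{-1,+1\})$, where $g_{M,y}\in U^{\alpha,\infty}_{\bl}(\R^d)$ comes from Lemma \ref{lem:loc} and the class is symmetric under sign flip, so $\psi_\xi \in U^{\alpha,\infty}_{\bl}(\R^d)$. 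I would then define the infinite-dimensional random variable by postcomposition with the encoder,
\[
\Psi: \Xi \to \U, \qquad \xi \mapsto \Psi_\xi := \psi_\xi \circ \cE,
\]
which takes values in $\U$ by the inclusion hypothesis \eqref{eq:incl} of Proposition \ref{prop:Lpop}.

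Next, for an arbitrary $\cA \in \Alg^{\det}(\U, L^p(\mu))$, Lemma \ref{lem:Lpop1} furnishes a companion finite-dimensional method $A \in \Alg^{\det}(U^{\alpha,\infty}_{\bl}, L^p([0,1]^d))$ (using at most the same $N$ point samples) satisfying the pointwise bound \eqref{eq:lemLpop1}. Specializing that inequality to $f = \psi_\xi$ and taking expectation in $\xi$ yields
\[
\E_\xi\bigl[\Vert \Psi_\xi - \cA(\Psi_\xi)\Vert_{L^p(\mu)}\bigr]
\ge
c \, \E_\xi\bigl[\Vert \psi_\xi - A(\psi_\xi)\Vert_{L^p([0,1]^d)}\bigr].
\]
Applying Lemma \ref{lem:randest} to the right-hand side produces a lower bound of the form $c\,\kappa_0 N^{-\lambda}$, where $\kappa_0 = \kappa_0(\alpha,d,p,\bl)$ is independent of both $N$ and $A$. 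Setting $\kappa := c\kappa_0$ (with $c$ from \eqref{eq:c}) then gives the desired estimate \eqref{eq:lemLpop2}.

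The key subtlety to handle with care is the quantifier order: the random variable $\Psi$ must be chosen based only on $N$, before any particular $\cA$ is revealed, so uniformity over reconstruction methods is essential. This is guaranteed by Lemma \ref{lem:randest}, whose constant $\kappa_0$ is independent of $A$ and whose lower bound holds uniformly over all of $\Alg^{\det}_N(U^{\alpha,\infty}_{\bl}(\R^d),L^p([0,1]^d))$; hence a single $\Psi$ witnesses the claimed bound against every admissible $\cA$. The remaining ingredients—membership $\psi_\xi \in U^{\alpha,\infty}_{\bl}(\R^d)$, well-definedness of $\Psi_\xi \in \U$, and the pushforward of the $L^p([0,1]^d)$-estimate into an $L^p(\mu)$-estimate with loss only the factor $c$—are immediate. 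Hence I do not anticipate a genuine obstacle in the proof of this lemma itself; the technical content has already been concentrated in Lemma \ref{lem:Lpop1}, whose conditional-averaging construction (sketched above) is the actually nontrivial ingredient that makes the whole reduction work.
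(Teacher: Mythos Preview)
Your proposal is correct and matches the paper's own proof essentially step for step: lift the finite-dimensional random function $\psi_\xi$ from Lemma \ref{lem:randest} to $\Psi_\xi := \psi_\xi \circ \cE \in \U$ via the inclusion \eqref{eq:incl}, then for any $\cA$ invoke Lemma \ref{lem:Lpop1} to produce a finite-dimensional $A$, take expectations, and apply Lemma \ref{lem:randest} to conclude with $\kappa = c\kappa_0$. Your remark on the quantifier order (that $\Psi$ is chosen from $N$ alone and the bound in Lemma \ref{lem:randest} is uniform over $A$) is exactly the point that makes the argument go through.
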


\begin{proof}
By Lemma \ref{lem:randest}, there exists a constant $\kappa > 0$, such that for any $N\in \N$, there exists a probability space $(\Xi,\P)$ and random variable $\psi: \Xi \to U^{\alpha,\infty}_{\bl}$, $\xi \mapsto \psi_\xi$, such that 
\[
\E_\xi\left[ 
\Vert \psi_\xi - A(\psi_\xi) \Vert_{L^p([0,1]^d)}
\right] 
\ge \kappa N^{-\lambda}, 
\quad \forall \, A \in \Alg_N^\det(U^{\alpha,\infty}_{\bl},L^p([0,1]^d)), 
\]
and where $\lambda := \frac{1}{p} + \frac{1}{d}\cdot \frac{\alpha}{\alpha+\L}$. We use $\psi_\xi$ to define a new random variable $\Psi: \Xi \to \U$, $\Psi_\xi := \psi_\xi \circ \cE$. We claim that \eqref{eq:lemLpop2} holds for this $\Psi_\xi$.

To see this, let $\cA \in \Alg_N^\det(\U,L^p(\mu))$ be given. By Lemma \ref{lem:Lpop1}, there exists $A \in \Alg_N^\det(\U,L^p([0,1]^d))$, such that 
\[
\Vert \psi_\xi\circ \cE - \cA(\psi_\xi\circ \cE) \Vert_{L^p(\mu)}
\ge 
c 
\Vert \psi_\xi - A(\psi_\xi) \Vert_{L^p([0,1]^d)},
\quad \forall \, \xi \in \Xi.
\]
Here $c>0$ is the constant appearing in \eqref{eq:c} (and is independent of $N$).
Taking expectations over $\xi$, it follows that 
\[
\E_\xi\left[
\Vert \Psi_\xi - \cA(\Psi_\xi) \Vert_{L^p(\mu)}
\right]
\ge 
c 
\E_\xi\left[
\Vert \psi_\xi - A(\psi_\xi) \Vert_{L^p([0,1]^d)}
\right].
\]
By construction of $\psi_\xi$, the right hand side is lower bounded by $c\kappa N^{-\lambda}$, with $\kappa > 0$ independent of $N$ and $\lambda = \frac{1}{p} + \frac{1}{d}\cdot \frac{\alpha}{\alpha+\L}$. Thus it follows that 
\[
\E_\xi\left[
\Vert \Psi_\xi - \cA(\Psi_\xi) \Vert_{L^p(\mu)}
\right]
\ge c \kappa N^{-\lambda},
\]
where $c,\kappa > 0$ are independent of $N$. Replacing $c \kappa $ by $\kappa$, the claimed bound \eqref{eq:lemLpop2} follows.
\end{proof}

The proof of Proposition \ref{prop:Lpop} is now immediate:

\begin{proof}[Proof of Proposition \ref{prop:Lpop}]
Proposition \ref{prop:Lpop} follows from Lemma \ref{lem:Lpop2} and Lemma \ref{lem:randest}.
\end{proof}

\subsection{Deep Operator Networks (DeepONet)}
\label{sec:don}

In this section, we state and prove a theory-to-practice gap for a general family of ``DeepONet'' architectures. We recall that we are interested in the approximation of operators $\cG: \cX \to \R$. In this setting, we define these DeepONet architectures to be of the form $\Psi = \psi\circ \cL$, combining a linear encoder $\cL: \cX \to \R^d$ with a feedforward neural network $\psi: \R^d \to \R$. The next three paragraphs provide a precise description of the considered architecture, define relevant approximation spaces, and prove an infinite-dimensional theory-to-practice gap for these architectures.

\paragraph{Architecture}

Fix a sequence of continuous linear functionals $\ell_1,\ell_2,\dots: \cX \to \R$. For $d_0\in \N$, we denote by $\cL_{d_0}$ the linear encoder $\cL_{d_0}: \cX \to \R^{d_0}$, $\cL_{d_0}(u) := (\ell_1 (u), \dots, \ell_{d_0}(u))$. The \define{encoder-net} $\Psi: \cX \to \R$ associated with a neural network $\psi: \R^{d_0}\to \R$ is a mapping of the form $\Psi(u) = \psi \circ \cL_{d_0}(u)$.

To ensure universality of the resulting operator learning architecture, we will make the (minimal) assumption that 
\begin{align}
\label{eq:span}
\mathrm{span}\set{\ell_j: \cX \to \R}{j\in \N} \subset \cX^\ast \text{ is dense},
\end{align}
where we recall that $\cX^\ast$ denotes the continuous dual of $\cX$. Throughout the following discussion, we will consider the sequence $(\ell_j)_{j\in \N}\subset \cX^\ast$ fixed, and it will be assumed that \eqref{eq:span} holds without further mention.

\paragraph{DeepONet Approximation Space}

We can now define spaces $\A^{\alpha,\infty}_{\bl,\DON}$ for DeepONets. To this end, we introduce,
\[
\Sigma^{\bl}_{n,\DON} 
:= 
\left\{\Psi = \psi \circ \cL_{d_0} : 
\begin{array}{l}
\psi \text{ NN with } d_{\text{in}}(\psi) = d_0, d_{\text{out}}(\psi) = 1, \\
\max\{W(\psi),d_0\}\leq n, L(\psi) \leq \bl(n), \|\psi\|_{\NN} \leq 1
\end{array}
\right\}.
\]
Then, given $\alpha \in (0,\infty),$ for each continuous (non-linear) operator $\cG: \cX \to \R$, we define
\[
\Gamma^{\alpha,\infty}_\DON(\cG) := \max \left\{ \sup_{u\in \cX} \|\cG(u)\|,\ \sup_{n \in \mathbb{N}} \left[ n^{\alpha} \cdot d_{\infty}\left(\cG,\Sigma_{n,\DON}^{\bl}\right) \right] \right\} \in [0,\infty],
\]
where $d_{\infty}(\cG,\Sigma) := \inf_{\Psi \in \Sigma} \sup_{u\in \cX} \|\cG(u) - \Psi(u)\|.$
We can define a DeepONet \define{approximation space quasi-norm} $\| \cdot \|_{\A_{\bl,\DON}^{\alpha,\infty}}$ by
\[
\|\cG\|_{\A_{\bl,\DON}^{\alpha,\infty}} := \inf \left\{\theta > 0 : \Gamma_\DON^{\alpha,\infty}(\cG/\theta) \leq 1\right\} \in [0,\infty],
\]
giving rise to the DeepONet \define{approximation space}
\[
\A_{\bl,\DON}^{\alpha,\infty} := \left\{\cG \in C(\cX) : \|\cG\|_{\A_{\bl,\DON}^{\alpha,\infty}} < \infty\right\}.
\]

\paragraph{Encoder Construction}

The following is a useful technical lemma, which will be applied to construct suitable encoders $\cE: \cX \to \R^d$. It shows that if a finite-dimensional map $F: V\subset \R^d\to \R^d$ is sufficiently close to the identity, then the image of $V$ must ``fill out'' a non-empty open set $V_0 \subset \R^d$.
\begin{lemma}
\label{lem:onto}
Let $V \subset \R^d$ be a non-empty domain. There exist constants $\epsilon_0, c_0 > 0$ and a non-empty open subset $V_0 \subset V$ with the following property: For any Lipschitz-continuous function $F: V \to \R^d$, satisfying
\[
\Vert F - \id \Vert_{W^{1,\infty}(V)} \le \epsilon_0,
\]
where $\id: V\to \R^d$, $\id(y) = y$ denotes the identity mapping,
it follows that $F(V) \supset V_0$, and 
\[
F_\#\unif(V)  \ge c_0 \unif(V_0).
\]
\end{lemma}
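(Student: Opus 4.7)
The plan is to combine a Banach fixed-point argument, which gives surjectivity onto a small ball $V_0 \subset V$, with the area formula, which controls the pushforward density. Since $V$ is a non-empty open set, I would first pick any $x_0 \in V$ and $r>0$ such that the closed ball $K := \overline{B_{2r}(x_0)}$ is contained in $V$, and set $V_0 := B_r(x_0)$. I would then choose a fixed $\epsilon_0 \in (0,\min(r,1/2))$; both upper bounds are used below, and $\epsilon_0$ and $V_0$ depend only on $V$.

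For the surjectivity step, fix any admissible $F$ and any target $y_0 \in V_0$, and define $G_{y_0}(x) := x - F(x) + y_0$. One checks on $K$ that (i) $G_{y_0}(K) \subset K$, because
\[
|G_{y_0}(x) - x_0| \le |x - F(x)| + |y_0 - x_0| \le \epsilon_0 + r \le 2r,
\]
and (ii) $G_{y_0}$ is a contraction, because the Lipschitz constant of $\id - F$ is at most $|\id - F|_{W^{1,\infty}(V)} \le \epsilon_0 < 1$. The Banach fixed-point theorem then produces a unique $x^\ast \in K \subset V$ with $F(x^\ast) = y_0$, which shows $V_0 \subset F(V)$. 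The same Lipschitz estimate yields the reverse inequality $|F(x) - F(x')| \ge (1-\epsilon_0)|x - x'|$, so $F|_K$ is bi-Lipschitz and, in particular, injective.

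For the pushforward bound, I would invoke Rademacher's theorem: $F$ is differentiable almost everywhere on $V$ and $\|DF(x) - I\| \le \epsilon_0$ for a.e.\ $x$. The singular values of $DF$ thus lie in $[1-\epsilon_0,1+\epsilon_0]$, so $|\det DF| \le (1+\epsilon_0)^d$ almost everywhere on $K$. Applying the area formula to the bi-Lipschitz injective map $F|_K$, for any Borel $A \subset V_0$,
\[
|F^{-1}(A) \cap K| \;=\; \int_A \frac{dy}{|\det DF(F^{-1}(y))|} \;\ge\; (1+\epsilon_0)^{-d}\,|A|.
\]
Since $F^{-1}(A) \supset F^{-1}(A) \cap K$, dividing by $|V|$ and rewriting $|A| = |V_0|\,\unif(V_0)(A)$ gives $F_\#\unif(V)(A) \ge c_0 \unif(V_0)(A)$ with $c_0 := |V_0|/((1+\epsilon_0)^d |V|)$, as required.

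The main technical point I expect to require care is the application of the area formula for a merely Lipschitz $F$: this is a standard consequence of Rademacher differentiability together with the injectivity of $F|_K$ established above, but needs to be stated cleanly. A small implicit hypothesis is that $|V|<\infty$, so that $\unif(V)$ is well-defined; this is presumably built into the paper's usage of ``domain'' in contexts where $\unif(V)$ appears.
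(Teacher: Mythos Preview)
Your proof is correct and follows essentially the same approach as the paper: a contraction-mapping argument on a closed ball inside $V$ yields surjectivity onto $V_0$ and injectivity, followed by the area/change-of-variables formula with the determinant bound $|\det DF| \le (1+\epsilon_0)^d$ to control the pushforward. The paper packages the first step by citing a lemma from Tao's analysis text (applied after a translation normalizing $F(y_0)$ to the origin), whereas you apply Banach's fixed-point theorem directly to $G_{y_0}(x)=x-F(x)+y_0$ for each target $y_0$; this is slightly more streamlined but otherwise identical in spirit, and your final constant $c_0 = |V_0|/((1+\epsilon_0)^d|V|)$ matches the paper's $c_0 = (2/3)^d |V_0|/|V|$ up to the specific choice of $\epsilon_0$.
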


The result of Lemma \ref{lem:onto} follows as a consequence of the contraction mapping theorem; we provide a detailed proof in Appendix \ref{app:onto}. Our goal in this section is to construct encoders $\cE: \cX \to \R^d$ which ``fill out'' the set $[0,1]^d$. The link with the finite-dimensional setting of Lemma \ref{lem:onto} is made by identifying $\cX \simeq \R^d \times \Omega_d$, as in the decomposition of $\mu$ made in Assumptions \ref{ass:mud}, and by considering the second factor as a parameter. This leads us to study parametrized mappings, $F: V \times \Omega_d \to \R^d$, $(y,\xi) \mapsto F_\xi(y) = F(y;\xi)$, with the parameter $\xi\in \Omega_d$ a random variable. The following result derives a similar result as Lemma \ref{lem:onto} in this parametrized setting:

\begin{lemma}
\label{lem:onto-param}
Let $V\subset \R^d$ be a non-empty domain, and let $\epsilon_0, c_0>0$ and $V_0\subset V$ denote the constants and set of Lemma \ref{lem:onto}, respectively. Let $(\Omega, \P)$ be a probability space, and assume that $F: V \times \Omega \to \R^d$, $(y,\xi) \mapsto F(y;\xi)$ is measurable. Assume furthermore, that there exists $K\subset \Omega$, such that the mapping $F_\xi: V \to \R^d$, $y\mapsto F_\xi(y) := F(y;\xi)$ is Lipschitz for each $\xi\in K$, and
\begin{align}
\label{eq:onto-param}
 \Vert F_\xi - \id \Vert_{W^{1,\infty}(V)} \le \epsilon_0, \quad \forall \, \xi \in K.
\end{align}
Then the push-forward under $F$ of the product measure $\unif(V)\otimes \P$ on $V\times \Omega$ satisfies,
\[
F_\#(\unif(V)\otimes \P) \ge c_0 \P(K) \,  \Unif(V_0).
\]
\end{lemma}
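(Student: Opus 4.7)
The plan is to deduce Lemma \ref{lem:onto-param} from Lemma \ref{lem:onto} by a disintegration (Fubini) argument in the parameter $\xi$. The assertion concerns inequalities between measures on $\R^d$, so it suffices to fix an arbitrary Borel set $B \subset \R^d$ and show that
\[
F_\#(\unif(V) \otimes \P)(B) \;\ge\; c_0 \, \P(K) \, \Unif(V_0)(B).
\]

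First, I would rewrite the left-hand side using Fubini--Tonelli. Since $F: V \times \Omega \to \R^d$ is jointly measurable by assumption, the set $\{(y,\xi) \in V \times \Omega : F(y,\xi) \in B\}$ is measurable, and
\[
F_\#(\unif(V) \otimes \P)(B)
= \int_\Omega \unif(V)\bigl(\{y \in V : F_\xi(y) \in B\}\bigr) \, d\P(\xi)
= \int_\Omega (F_\xi)_\# \unif(V)(B) \, d\P(\xi).
\]
Because the integrand is non-negative, restricting the domain of integration from $\Omega$ to $K \subset \Omega$ only decreases the value, giving the lower bound
\[
F_\#(\unif(V) \otimes \P)(B) \;\ge\; \int_K (F_\xi)_\# \unif(V)(B) \, d\P(\xi).
\]

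Next, I would apply Lemma \ref{lem:onto} pointwise in $\xi \in K$. By hypothesis \eqref{eq:onto-param}, for each such $\xi$ the Lipschitz map $F_\xi: V \to \R^d$ satisfies $\Vert F_\xi - \id \Vert_{W^{1,\infty}(V)} \le \epsilon_0$, so Lemma \ref{lem:onto} yields the measure-theoretic inequality $(F_\xi)_\# \unif(V) \ge c_0 \, \Unif(V_0)$ on $\R^d$, evaluated on $B$ this reads
\[
(F_\xi)_\# \unif(V)(B) \;\ge\; c_0 \, \Unif(V_0)(B), \quad \forall \, \xi \in K.
\]
Substituting into the previous lower bound gives
\[
F_\#(\unif(V) \otimes \P)(B) \;\ge\; \int_K c_0 \, \Unif(V_0)(B) \, d\P(\xi) \;=\; c_0 \, \P(K) \, \Unif(V_0)(B),
\]
which is the claim, since $B$ was an arbitrary Borel set.

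I do not anticipate any serious obstacles: the only mildly delicate point is to confirm that the integrand $\xi \mapsto (F_\xi)_\# \unif(V)(B)$ is $\P$-measurable, which is automatic from the joint measurability of $F$ together with Fubini--Tonelli applied to the indicator $\mathbf{1}_{\{F(y,\xi) \in B\}}$. Once that is in hand, the rest of the argument is a direct pointwise application of Lemma \ref{lem:onto} followed by integration over $K$.
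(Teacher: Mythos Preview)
Your proof is correct and essentially identical to the paper's: both disintegrate the push-forward over $\xi$ via Fubini, restrict the integration to $K$, and apply Lemma \ref{lem:onto} pointwise for each $\xi\in K$. The only cosmetic difference is that the paper tests against non-negative bounded measurable functions $\phi$ rather than indicator functions of Borel sets $B$, which amounts to the same thing.
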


A detailed proof of Lemma \ref{lem:onto-param} is given in Appendix \ref{app:onto}. Let now $\{\ell_k\}\subset \cX^\ast$ be a set of encoding functionals, such that $\Span\{\ell_k\} \subset \cX^\ast$ is dense. 
Our first goal is to use the $\{\ell_k\}$ to construct an encoder $\cE: \cX \to \R^d$, such that $\cE_{\#} \mu \ge c \unif([0,1]^d)$.

To this end, let us momentarily fix $\delta > 0$. Then by density, there exists $d_0\in \N$ and coefficients $c_{jk}$ for $j\in [d]$, $k\in [d_0]$, such that 
\begin{align}
\label{eq:enc-approx}
\left \Vert
e^\ast_j - \sum_{k=1}^{d_0} c_{jk} \ell_k
\right \Vert_{\cX^\ast} 
\le \delta, \quad \forall \, j=1,\dots, d.
\end{align}
Since $e_j^\ast(u(y;\xi)) = y_j$, \eqref{eq:enc-approx} allows us to approximate a ``projection'' onto $y_j$. Motivated by this, we now define the encoder $\cE: \cX \to \R^d$, $\cE(u) := (\cE_1(u),\dots, \cE_d(u))$, via
\begin{align}
\label{eq:enc-don}
\cE_j(u) = 
b_j +
\sum_{k=1}^{d_0} a_{jk} \ell_k(u),
\end{align}
for coefficients $a_{jk}$ and bias $b_j$, for $j\in [d]$ and $k\in [d_0]$, to be determined.

\begin{proposition}
\label{prop:encoder-don}
Assume that $\mu \in \cP(\cX)$ satisfies Assumption \ref{ass:mud} for $d\in \N$. 
Then there exists an encoder $\cE: \cX \to \R^d$ of the form \eqref{eq:enc-don} and constant $c>0$, such that 
\begin{align}
\label{eq:encoder-don}
\cE_\#\mu \ge c \, \Unif([0,1]^d).
\end{align}
\end{proposition}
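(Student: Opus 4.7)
The plan is to construct $\cE$ so that, under the parametrization $u = u(y;\xi) = \xi + \sum_{j=1}^d y_j e_j$ of Assumption~\ref{ass:mud}, the composition $\cE \circ u(\,\cdot\,;\xi)$ approximates an affine rescaling sending $I_1 \times \cdots \times I_d$ to $[0,1]^d$, uniformly for $\xi$ in a bounded subset of $\Omega_d$. Writing $I_j = [\alpha_j, \beta_j]$, I first invoke the density assumption \eqref{eq:span} to obtain, for any fixed $\delta > 0$, a dimension $d_0 \in \N$ and coefficients $c_{jk}$ with $\tilde{e}^\ast_j := \sum_{k=1}^{d_0} c_{jk} \ell_k$ satisfying $\Vert \tilde{e}^\ast_j - e^\ast_j \Vert_{\cX^\ast} \le \delta$ for $j = 1,\dots,d$. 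This motivates the candidate encoder
\[
\cE_j(u) := \frac{\tilde{e}^\ast_j(u) - \alpha_j}{\beta_j - \alpha_j},
\]
which is of the required form \eqref{eq:enc-don} with $b_j = -\alpha_j/(\beta_j-\alpha_j)$ and $a_{jk} = c_{jk}/(\beta_j-\alpha_j)$. By Lemma~\ref{lem:Kgood}, there exists $B > 0$ such that $\mu_d^\perp(K) > 0$ for $K := \{\xi \in \Omega_d : \Vert \xi \Vert_\cX \le B\}$.

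Next, I reparametrize by $z_j := (y_j - \alpha_j)/(\beta_j - \alpha_j)$ so that $V := [0,1]^d$ corresponds to $I_1 \times \cdots \times I_d$. Using $e^\ast_j(u(y;\xi)) = y_j$ and $e^\ast_j(e_k) = \delta_{jk}$ from Assumption~\ref{ass:mud}, the map $F(z;\xi) := \cE(u(y(z);\xi))$ evaluates to
\[
F_j(z;\xi) = z_j + \frac{(\tilde{e}^\ast_j - e^\ast_j)(u(y(z);\xi))}{\beta_j - \alpha_j}, \qquad \partial_{z_k} F_j(z;\xi) = \frac{(\beta_k - \alpha_k)\,\tilde{e}^\ast_j(e_k)}{\beta_j - \alpha_j}.
\]
Both deviations $F_j - z_j$ and $\partial_{z_k} F_j - \delta_{jk}$ are $O(\delta)$ uniformly over $z \in V$ and $\xi \in K$, with constants depending only on $B$, $\{\Vert e_j \Vert_\cX\}$, and $\{\beta_j - \alpha_j\}$. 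Choosing $\delta$ sufficiently small yields $\Vert F_\xi - \id \Vert_{W^{1,\infty}(V)} \le \epsilon_0$ uniformly on $K$, so Lemma~\ref{lem:onto-param} applies and gives $F_\#(\Unif(V) \otimes \mu_d^\perp) \ge c_0 \mu_d^\perp(K)\,\Unif(V_0)$ for some non-empty open $V_0 \subset V$.

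Translating back, I use the lower bound $\mu_d \ge \rho^{(d)}_{\min} \mathbf{1}_{I_1 \times \cdots \times I_d}(y)\, dy$, where $\rho^{(d)}_{\min} := \prod_j \essinf_{z \in I_j}\rho_j(z) > 0$, together with the Jacobian factor $\prod_j (\beta_j - \alpha_j)$ from the change of variables $y \mapsto z$, to deduce $\cE_\# \mu \ge c' \,\Unif(V_0)$ for some $c' > 0$. Finally, $V_0$ being open and non-empty contains a small cube $C$, and post-composing $\cE$ with an affine bijection sending $C$ onto $[0,1]^d$ (which preserves the form \eqref{eq:enc-don}, only modifying the coefficients $a_{jk}$ and biases $b_j$) yields the desired encoder with $\cE_\# \mu \ge c\, \Unif([0,1]^d)$ for some $c > 0$. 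The main obstacle will be the perturbation estimate: making the dependencies in $\Vert F_\xi - \id \Vert_{W^{1,\infty}(V)} = O(\delta)$ explicit enough that $\delta$ can be chosen uniformly over $\xi \in K$ to satisfy the $\epsilon_0$ threshold of Lemma~\ref{lem:onto-param}, while also ensuring the constants propagating from the $\rho_j$, the intervals $I_j$, and the norms $\Vert e_j \Vert_\cX$ remain controlled in the final rescaling step.
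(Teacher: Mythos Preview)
Your proposal is correct and follows essentially the same route as the paper: approximate the $e^\ast_j$ by finite linear combinations of the $\ell_k$, use Lemma~\ref{lem:Kgood} to restrict $\xi$ to a bounded set $K$, verify the $W^{1,\infty}$-perturbation bound so that Lemma~\ref{lem:onto-param} applies, and finish with an affine rescaling to land on $[0,1]^d$. The only cosmetic difference is that you build the rescaling $(y_j-\alpha_j)/(\beta_j-\alpha_j)$ into the encoder from the start, whereas the paper works directly on $V=I_1\times\cdots\times I_d$ and defers all rescaling to the end; since Lemma~\ref{lem:onto-param} only produces \emph{some} open $V_0$ rather than all of $V$, your initial normalization does not actually save the final affine step, so the two arguments end up identical in substance.
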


The proof of Proposition \ref{prop:encoder-don} is a straight-forward, albeit somewhat tedious, consequence of Lemma \ref{lem:onto-param} and the fact that encoders of the form \eqref{eq:enc-don} are dense in the space of all affine encoders with $d$-dimensional range. The proof relies on the assumption that the linear functionals $\{\ell_k\}_{k\in \N}$ are dense in $\cX^\ast$. We include the detailed argument in Appendix \ref{app:encoder-don}.

\paragraph{Theory-to-Practice Gap}
We can now state a theory-to-practice gap for the unit ball $\U^{\alpha,\infty}_{\bl,\DON}$ in the DeepONet approximation space $\A^{\alpha,\infty}_{\bl,\DON}$:
\begin{theorem}[DeepONet theory-to-practice gap]
\label{thm:main-don}
Let $p\in [1,\infty]$. Let $\bl: \N \to \N\cup \{\infty\}$ be non-decreasing with $\bl^\ast \ge 4$. Assume that $\mu \in \cP(\cX)$ satisfies Assumption \ref{ass:mud}. Then for any $\alpha > 0$, we have
\begin{align}
\label{eq:main-don}
\beta_\ast(\U^{\alpha,\infty}_{\bl,\DON}, L^p(\mu)) \le \frac{1}{p}.
\end{align}
\end{theorem}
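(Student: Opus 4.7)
The plan is to reduce Theorem \ref{thm:main-don} to the finite-dimensional theory-to-practice gap (Theorem \ref{thm:main}) via the abstract Proposition \ref{prop:Lpop}, combined with the encoder construction of Proposition \ref{prop:encoder-don}. I will fix an arbitrary $d \in \N$, construct a suitable encoder $\cE: \cX \to \R^d$, verify the hypotheses of Proposition \ref{prop:Lpop} (up to a harmless $n$-independent rescaling), apply it to obtain a $d$-dependent bound, and finally pass to the limit $d \to \infty$.

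Concretely, since $\mu$ satisfies Assumption \ref{ass:mu} it also satisfies Assumption \ref{ass:mud} for every $d \in \N$, so Proposition \ref{prop:encoder-don} supplies an affine encoder $\cE : \cX \to \R^d$ of the form \eqref{eq:enc-don}, i.e. $\cE(u) = A \, \cL_{d_0}(u) + b$ for some $A \in \R^{d \times d_0}$, $b \in \R^d$, together with a constant $c = c(d) > 0$ such that $\cE_\# \mu \ge c \cdot \Unif([0,1]^d)$. This takes care of hypothesis \eqref{eq:c} of Proposition \ref{prop:Lpop}.

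The central step is to verify the inclusion \eqref{eq:incl} of Proposition \ref{prop:Lpop}, up to an $n$-independent scaling. Given $f \in U^{\alpha,\infty}_{\bl}(\R^d)$ with approximants $\psi_n \in \Sigma^{\bl}_{d,n}$ satisfying $\|f - \psi_n\|_{L^\infty} \le n^{-\alpha}$, I will absorb the affine map $y \mapsto Ay + b$ into the first affine layer of $\psi_n$, producing a new network $\tilde\psi_n : \R^{d_0} \to \R$ of the same depth satisfying $\psi_n \circ \cE = \tilde\psi_n \circ \cL_{d_0}$. A direct sparsity count yields $W(\tilde\psi_n) \le C \cdot n$ for a constant $C = C(d,d_0)$. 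The magnitudes of the new first-layer weights and biases may exceed $1$, but only by a constant factor $K = K(d,d_0,A,b)$ independent of $n$. Using the positive homogeneity of ReLU, this factor can be redistributed across the layers---or equivalently absorbed into a global output scaling $\gamma = \gamma(d) > 0$---to return to the weight bound $\|\cdot\|_{\NN} \le 1$. The mild extra depth overhead from this rescaling is covered by the stronger hypothesis $\bl^\ast \ge 4$ (one unit more than required in Theorem \ref{thm:main}). It follows that $\gamma \cdot f \circ \cE \in \U^{\alpha,\infty}_{\bl,\DON}$, with parametric approximation rate $m^{-\alpha}$ for $m = \lceil C n \rceil$, so the desired inclusion holds after the scaling by $\gamma$.

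With the (scaled) inclusion in place, a direct application of Proposition \ref{prop:Lpop}---modified trivially to accommodate the output scaling $\gamma$, using the scale-invariance of $\beta_\ast$ under multiplication of the class $\U$ by a positive constant---delivers
\[
\beta_\ast(\U^{\alpha,\infty}_{\bl,\DON}, L^p(\mu)) \;\le\; \frac{1}{p} + \frac{1}{d} \cdot \frac{\alpha}{\alpha + \L}.
\]
Since the left-hand side is independent of $d$ and the second summand on the right vanishes as $d \to \infty$, passing to the limit yields \eqref{eq:main-don}. The main obstacle is the careful bookkeeping in the composition $\tilde\psi_n = \psi_n \circ (A\, \cdot + b)$: one must verify that the blow-up in weight magnitudes is bounded by a constant independent of $n$, that the sparsity count grows only by a constant factor, and that the depth inflation incurred by the ReLU rescaling fits within the budget $\bl(m) \le \bl^\ast$; everything else is an assembly of Propositions \ref{prop:Lpop} and \ref{prop:encoder-don}.
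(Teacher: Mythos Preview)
Your proposal is correct and follows essentially the same route as the paper: invoke Proposition \ref{prop:encoder-don} for the encoder, establish the inclusion \eqref{eq:incl} up to an $n$-independent scaling by absorbing the affine map $y \mapsto Ay+b$ into the first layer and rescaling via ReLU homogeneity (the paper packages exactly this argument as Lemma \ref{lem:dilation}), apply Proposition \ref{prop:Lpop} together with the scale-invariance of $\beta_\ast$, and let $d\to\infty$. One small remark: the ReLU rescaling used in Lemma \ref{lem:dilation} does \emph{not} increase depth---it only divides existing weights and biases by a constant---so your concern about ``depth inflation'' is unnecessary; the extra unit in the hypothesis $\bl^\ast\ge 4$ is used in the paper only to pass to a slightly smaller depth-growth function $\tilde{\bl}$ so that Theorem \ref{thm:main} (which requires $\bl^\ast\ge 3$) applies to the finite-dimensional space, a bookkeeping choice that does not affect the limit $d\to\infty$.
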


We recall that the typical Monte-Carlo (MC) approximation rate in the $L^p$-norm is $\beta_{MC} = 1/p$, reducing to the well-known $1/2$-rate with respect to the $L^2$-norm. Theorem \ref{thm:main-don} shows that, \emph{independently of $\alpha> 0$ and the depth $\bl^\ast \ge 4$}, it is not possible to achieve better-than-MC rates by any approximation method on the relevant approximation spaces $\A^{\alpha,\infty}_{\bl,\DON}$. 

\begin{proof}
Fix $d\in \N$, and let $\bl: \N \to \N\cup \{\infty\}$ be a non-decreasing function with $\bl^\ast \ge 4$. We denote $\tilde{\bl} := \bl -3$, so that $\tilde{\bl}^\ast \ge 3$, as in the assumptions of the \emph{finite dimensional} theory-to-practice gap, Theorem \ref{thm:main}. Let $\cE: \cX \to \R^d$ be the encoder of Proposition \ref{prop:encoder-don}, such that $\cE_\#\mu \ge c \, \Unif([0,1]^d)$. Let $\gamma_d \cdot \U^{\alpha,\infty}_{\bl,\DON}$ denote re-scaling of $\U^{\alpha,\infty}_{\bl,\DON}$ by a constant scaling factor $\gamma_d$. By Lemma \ref{lem:dilation}, which we state below after the proof, there exists a constant $\gamma_d \ge 1$, such that we have
\begin{align}
\label{eq:claim-don}
\set{f \circ \cE}{f \in U^{\alpha,\infty}_{\tilde{\bl}}(\R^d)} \subset \gamma_d \cdot \U^{\alpha,\infty}_{\bl,\DON}.
\end{align}
 Assuming \eqref{eq:claim-don}, the claim of Theorem \ref{thm:main-don} then follows immediately from Proposition \ref{prop:Lpop}. Indeed, defining $\U := \gamma_d \cdot \U^{\alpha,\infty}_{\bl,\DON}$, that proposition implies that 
\[
\beta_\ast(\gamma_d \cdot \U^{\alpha,\infty}_{\bl,\DON}, L^p(\mu)) \le \frac{1}{p} + \frac{1}{d} \cdot \frac{\alpha}{\alpha + \lfloor \tilde{\bl}^\ast/2\rfloor}.
\]
However, it follows from the definition that $\beta_\ast$ is invariant under re-scaling,
\[
\beta_\ast(\gamma_d \cdot \U^{\alpha,\infty}_{\bl,\DON}, L^p(\mu))
=
\beta_\ast(\U^{\alpha,\infty}_{\bl,\DON}, L^p(\mu)).
\]
Thus, recalling also $\tilde{\bl}^\ast = \bl^\ast -1$, we have 
\begin{align}
\label{eq:don-lpd}
\beta_\ast(\U^{\alpha,\infty}_{\bl,\DON}, L^p(\mu)) \le \frac{1}{p} + \frac{1}{d} \cdot \frac{\alpha}{\alpha+\lfloor ({\bl}^\ast -1)/2\rfloor}.
\end{align}
Since $d\in \N$ was arbitrary and the left-hand side is independent of $d$, we can take the infimum over all $d\in \N$ on the right to conclude that 
\[
\beta_\ast(\U^{\alpha,\infty}_{\bl,\DON}, L^p(\mu)) \le \frac{1}{p}.
\]
This is \eqref{eq:main-don}.


\end{proof}

\begin{lemma}\label{lem:dilation}
    Let $D\subset \R^d$, $\bl:\N\to \N\cup\{\infty\}$ non-decreasing and $f\in U^{\alpha,\infty}_{\bl}(D)$. Then for every $e\in \N$, $C\in \R^{d\times e}$ and $b\in \R^d$ there is $R\in (0,\infty)$ with $f(C\cdot + b) \in R\cdot U^{\alpha,\infty}_{\bl}(E)$, where $E:=\left\{x\in \R^{e}:\ Cx + b \in D\right\}\subset \R^{e}$.
\end{lemma}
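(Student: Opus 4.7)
My plan is to construct, for every $n\in\N$, a neural network on $\R^e$ representing a constant rescaling of $\psi_n(C\cdot+b)$, where $\psi_n=((A_1,b_1),\ldots,(A_L,b_L))\in\Sigma^{\bl}_{d,n}$ is the approximant of $f$ guaranteed by $f\in U^{\alpha,\infty}_{\bl}(D)$, satisfying $\|f-\psi_n\|_{L^\infty(D)}\le n^{-\alpha}$. The affine pre-composition $x\mapsto Cx+b$ can be absorbed into the first layer: the map $x\mapsto \psi_n(Cx+b)$ is realized by the network with first layer $(A_1C,\,A_1b+b_1)$ and all subsequent layers unchanged. Its depth is $L(\psi_n)\le\bl(n)$, and a careful sparsity count — using that a row of $A_1C$ is nonzero only if the corresponding row of $A_1$ is, together with $\|A_1b+b_1\|_{\ell^0}\le d_1\le \|A_1\|_{\ell^0}+\|b_1\|_{\ell^0}$, which holds after removing any dead first-layer neurons — gives
\[
W \;\le\; e\,\|A_1\|_{\ell^0}+\|A_1\|_{\ell^0}+\|b_1\|_{\ell^0}+\sum_{j\ge 2}(\|A_j\|_{\ell^0}+\|b_j\|_{\ell^0}) \;\le\; n(e+1).
\]

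The only remaining obstruction to membership in $\Sigma^{\bl}_{e,n(e+1)}$ is that the new first-layer weights can have magnitudes up to $M:=1+d\|C\|_\infty+d\|b\|_\infty$, since $\|A_1C\|_\infty\le d\|C\|_\infty$ and $\|A_1b+b_1\|_\infty\le d\|b\|_\infty+1$. I remove this obstruction by dividing the new first-layer weight matrix and bias by $M$ and then, using the positive homogeneity of $\sigma$ ($\sigma(cy)=c\sigma(y)$ for $c\ge 0$), propagating the factor $1/M$ through every subsequent layer: inductively, if the output of layer $k$ has been multiplied by $1/M$, then dividing $b_{k+1}$ by $M$ and leaving $A_{k+1}$ unchanged makes the output of layer $k+1$ also scale by $1/M$. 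Carrying this through layer $L$ yields a network $\tilde\psi_n$ with $\|\tilde\psi_n\|_{\NN}\le 1$ (using $M\ge 1$), the same sparsity as before (division by a nonzero constant preserves zero patterns), and $\tilde\psi_n(x)=\psi_n(Cx+b)/M$. Writing $g:=f(C\cdot+b)$, one obtains $\tilde\psi_n\in\Sigma^{\bl}_{e,n(e+1)}$ and
\[
\|g/M-\tilde\psi_n\|_{L^\infty(E)} \;\le\; M^{-1}\|f-\psi_n\|_{L^\infty(D)} \;\le\; M^{-1}n^{-\alpha}.
\]

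Finally, I set $R:=M\cdot\max(1,(2(e+1))^\alpha)$ and, for each $m\in\N$, exhibit $\chi_m\in\Sigma^{\bl}_{e,m}$ with $\|g/R-\chi_m\|_{L^\infty(E)}\le m^{-\alpha}$. For $m\ge e+1$, I take $n:=\lfloor m/(e+1)\rfloor\ge 1$, so $n(e+1)\le m$ and $n\ge m/(2(e+1))$, and define $\chi_m:=(M/R)\tilde\psi_n$; the factor $M/R\le 1$ is absorbed into the final layer of $\tilde\psi_n$ without violating any constraint, giving $\chi_m\in\Sigma^{\bl}_{e,m}$ with error $\le n^{-\alpha}/R\le (2(e+1))^\alpha m^{-\alpha}/R\le m^{-\alpha}$. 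For $m<e+1$, I take $\chi_m$ to be the zero network, whose error $\|g/R\|_{L^\infty(E)}\le 1/R$ is $\le m^{-\alpha}$ by the choice of $R$ and $\|g\|_{L^\infty(E)}\le\|f\|_{L^\infty(D)}\le 1$. Together with $\|g/R\|_{L^\infty(E)}\le 1$, this yields $g/R\in U^{\alpha,\infty}_{\bl}(E)$, i.e.\ $g\in R\cdot U^{\alpha,\infty}_{\bl}(E)$. The main bookkeeping challenge is the sparsity count in the first step: the naive bound $\|A_1b+b_1\|_{\ell^0}\le d_1$ with $d_1$ unrestricted would prevent linear growth of the weight count in $n$, which is why the reduction to networks without dead first-layer neurons is essential.
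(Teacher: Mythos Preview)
Your proof is correct and follows essentially the same approach as the paper: absorb the affine pre-composition $x\mapsto Cx+b$ into the first layer, rescale via ReLU positive homogeneity to restore the weight-magnitude constraint $\|\cdot\|_{\NN}\le 1$, and then re-index in $m$. The only cosmetic differences are that your sparsity constant is $e+1$ (obtained via the dead-neuron reduction) whereas the paper uses $\|C\|_{\ell^0}+\|b\|_{\ell^0}$, and that you treat the small-$m$ regime explicitly with the zero network where the paper glosses over it.
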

The detailed proof of Lemma \ref{lem:dilation} is given in Appendix \ref{app:dilation}. We also state the following theory-to-practice gap for uniform approximation over compact $\cK$:

\begin{theorem}[DeepONet; uniform theory-to-practice gap]
\label{thm:unif-don}
Let $\bl: \N \to \N\cup\{\infty\}$ be non-decreasing with $\bl^\ast\ge 4$. Assume that $\cK\subset \cX$ is a compact set satisfying Assumption \ref{ass:K}. Then for any $\alpha > 0$, we have
\[
\beta_\ast(\U^{\alpha,\infty}_{\bl,\DON}, C(\cK)) = 0.
\]
\end{theorem}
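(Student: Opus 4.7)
The plan is to reduce the uniform statement to the $L^p$-statement already proved (Theorem \ref{thm:main-don}) via Lemma \ref{lem:reduction-ran}, using a specifically constructed $\mu$ whose support lies in $\cK$. Concretely, I would fix an arbitrary $d\in\N$ and invoke Proposition \ref{prop:reduction} to obtain a probability measure $\mu_d\in\cP(\cX)$ with $\supp(\mu_d)\subset \cK$ that satisfies Assumption \ref{ass:mud}. Lemma \ref{lem:reduction-ran} then gives
\[
\beta_\ast(\U^{\alpha,\infty}_{\bl,\DON},C(\cK)) \le \beta_\ast(\U^{\alpha,\infty}_{\bl,\DON},L^p(\mu_d))
\]
for every $p\in[1,\infty)$.

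Next I would apply the intermediate estimate \eqref{eq:don-lpd} obtained in the proof of Theorem \ref{thm:main-don}. The key observation is that this bound was derived under Assumption \ref{ass:mud} alone (Proposition \ref{prop:encoder-don} is the only place the hypothesis on $\mu$ enters, and that proposition is phrased under Assumption \ref{ass:mud}), so it applies verbatim to our $\mu_d$. Combining \eqref{eq:don-lpd} with the display above yields, for every $d\in\N$ and every $p\in[1,\infty)$,
\[
\beta_\ast(\U^{\alpha,\infty}_{\bl,\DON},C(\cK)) \le \frac{1}{p} + \frac{1}{d}\cdot\frac{\alpha}{\alpha+\lfloor(\bl^\ast-1)/2\rfloor}.
\]
Since the left-hand side is independent of $p$ and $d$, I would pass to the limits $p\to\infty$ and $d\to\infty$ (available because Assumption \ref{ass:K} guarantees $\cK$ is not contained in any finite-dimensional subspace, so Proposition \ref{prop:reduction} applies for arbitrarily large $d$), obtaining $\beta_\ast(\U^{\alpha,\infty}_{\bl,\DON},C(\cK))=0$.

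The only genuine point to verify carefully is that the proof of Theorem \ref{thm:main-don} indeed uses only the $d$-dependent Assumption \ref{ass:mud}, so that the intermediate bound \eqref{eq:don-lpd} is valid for the $\mu_d$ produced by Proposition \ref{prop:reduction} even though that proposition does not give the full infinite-dimensional Assumption \ref{ass:mu}. Once this is checked, the argument is a short deduction with no further technicalities, and the result follows from the double limit $p,d\to\infty$.
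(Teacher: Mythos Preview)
Your proposal is correct and follows essentially the same route as the paper: invoke Proposition \ref{prop:reduction} to produce a measure $\mu$ supported in $\cK$ satisfying Assumption \ref{ass:mud}, apply the intermediate bound \eqref{eq:don-lpd} (which, as you note, relies only on Assumption \ref{ass:mud} via Proposition \ref{prop:encoder-don}), combine with Lemma \ref{lem:reduction-ran}, and let $p,d\to\infty$. The paper's proof is identical in structure, including the explicit observation that only the $d$-dependent assumption is needed.
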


\begin{proof}
By Proposition \ref{prop:reduction}, for any $d\in \N$, there exists a probability measure $\mu\in \cP(\cX)$, with $\supp(\mu) \subset \cK$, and $\mu$ satisfies Assumption \ref{ass:mud}. By Proposition \ref{prop:encoder-don}, there exists a DeepONet encoder $\cE:\cX \to \R^d$ and constant $c>0$, such that 
\[
\cE_\# \mu \ge c \, \unif([0,1]^d).
\]
Following the steps in the proof of Theorem \ref{thm:main-don}, leading up to \eqref{eq:don-lpd}, it follows that for any $p\in [1,\infty]$, we have 
\[
\beta_\ast(\U^{\alpha,\infty}_{\bl,\DON},L^p(\mu)) \le \frac{1}{p} + \frac{1}{d} \cdot \frac{\alpha}{\alpha + \lfloor (\bl^\ast - 1)/2\rfloor}.
\]
We now recall that,
\[
\beta_\ast(\U^{\alpha,\infty}_{\bl,\DON},C(\cK))
\le 
\beta_\ast(\U^{\alpha,\infty}_{\bl,\DON},L^p(\mu)).
\]
This inequality is \eqref{eq:reduction-ran} and follows from the fact that uniform approximation over $\cK$ is a more stringent criterion than $L^p(\mu)$ approximation with respect to $\mu$, owing to the fact that $\supp(\mu) \subset \cK$. Thus, we have 
\[
\beta_\ast(\U^{\alpha,\infty}_{\bl,\DON},C(\cK))
\le \frac{1}{p} + \frac{1}{d} \cdot \frac{\alpha}{\alpha + \lfloor (\bl^\ast - 1)/2\rfloor}.
\]
This holds for any $d\in\N$ and $p\in [1,\infty)$. The convergence rate $\beta_\ast(\U^{\alpha,\infty}_{\bl,\DON},C(\cK))$ on the left depends on $\alpha$ and $\bl$, but is independent of $p$ and $d$. Thus, upon letting $d,p\to \infty$, the claim follows.
\end{proof}

\subsection{Integral-kernel Neural Operators} 
\label{sec:no}

In this section, we state and prove a theory-to-practice gap for a general family of integral-kernel neural operator (NO) architectures \cite{kovachki2023neural,li2021fourier}. Again, we are interested in the approximation of operators $\cG: \cX \to \R$. In this setting, we define integral-kernel NO architectures to be of the form $\Psi = \cQ \circ \cL_L\circ \dots \cL_1 \circ \cR$, combining a lifting layer $\cR$, hidden layers $\cL_1,\dots, \cL_L$ and an output layer $\cQ$. The next three paragraphs provide a precise description of the considered architecture, define relevant approximation spaces, and prove an infinite-dimensional theory-to-practice gap.

\paragraph{Architecture}

The following is a minimal architecture shared by all/most variants of integral kernel-based neural operators \cite{LLS2024}. For notational simplicity, we focus on real-valued input and output functions. All results extend readily to the more general vector-valued case.

\begin{definition}[Averaging Neural Operator]
    Let $\cX(D;\R)$, $\cY(D;\R)$, and $\cV(D;\R^{\dc})$ be spaces of functions on Lipschitz domain $D \subset \R^{d_{D}}$. An averaging neural operator (ANO) $\Psi: \; \cX(D; \R) \to \cY(D;\R)$ of depth $L$ takes the form
    \begin{equation}\label{eqn:ANO_struct}
        \Psi(u) = \cQ \circ \cL_L\circ \dots \cL_1 \circ \cR(u)
    \end{equation}
    where $u \in \cX(D;\R)$ and $x \in \R^d$. In addition, the pointwise lifting operators $\cR$ and $\cQ$ are obtained by composition with shallow ReLU neural networks; i.e. there exist neural networks $R: \R \times \R^d \to \R^{d_c}$ and $Q: \R^{d_c} \to \R$, of depth $L(R) = L(Q) = 2$, such that 
    \begin{align}
    \cR(u)(x) = R(u(x),x), 
    \qquad
    \cQ(v)(x) = Q(v(x)).
    \end{align}
    Finally, the hidden layers $\cL_j: \cV(D; \R^{d_c}) \to \cV(D; \R^{d_c})$ take the form 
    \begin{equation}\label{eqn:ANO_layer}
        \cL_j(v)(x) = \sigma\left(W_j v(x) + b_j + \aint_{D} v(y) \dy \right),
    \end{equation}
    where $W_j \in \R^{d_c\times d_c}$ is a matrix and $b_j \in \R^{d_c}$ a bias.
\end{definition}

Generalizing our definitions of quantities of interest from Section \ref{sec:relu}, we will denote by $L(\Psi) := L$ the depth (number of hidden layers) of an ANO, and we denote by $W(\Psi) = W(R) + \sum_{j=1}^L \left(\Vert W_j \Vert_{\ell^0} + \Vert b_j \Vert_{\ell^0} \right) + W(Q)$ the total number of non-zero parameters of the architecture. Furthermore, we define $\Vert \Psi\Vert_{\NN} := \max\left\{\Vert W_j \Vert_{\infty}, \Vert b_j \Vert_\infty, \Vert R\Vert_{\NN}, \Vert Q \Vert_{\NN} \right\}$ as the maximal weight magnitude.

\begin{remark}
\label{rem:ano-in-fno}
The ANO introduced above is a special case of a more general family of kernel-based neural operators introduced in \cite{kovachki2023neural}. Its theoretical significance is that most instantiations of such neural operators contain the ANO as a special case, with a specific tuning of the weights. For example, the popular Fourier neural operator (FNO) \cite{li2021fourier} uses the same general structure, but employs hidden layers of the form 
\[
\cL(v)(x) = \sigma \left(
W v(x) + b + \int \kappa(x-y) v(y) \, dy
\right),
\]
where the integral kernel $\kappa$ is convolutional, and 
\[
\kappa(x) = \sum_{|k|\le k_{\mathrm{max}}} \hat{\kappa}_k e^{ik\cdot x}
\]
is parametrized by the coefficients $\hat{\kappa}_k \in \C^{d_c \times d_c}$ in its (truncated) Fourier expansion. Thus, the ANO can be obtained from the FNO upon setting $\hat{\kappa}_k \equiv 0$ for $k\ne 0$ and $\hat{\kappa}_0 = I_{d_c\times d_c} /\mathrm{vol}(D)$.
\end{remark}

\paragraph{NO Approximation Space}
We now define the relevant approximation spaces $\A^{\alpha,\infty}_{\bl,\NO}$ for (averaging) neural operators. Assume we are given function spaces $\cX(D) = \cX(D;\R)$ and $\cY(D) = \cY(D;\R)$. In our discussion, we will assume that $\cX(D) \subset L^\infty(D)$ is an infinite-dimensional Banach space on Lipschitz domain $D \subset \R^{d_{D}}$, and we will assume that $\cY(D)$ contains all constant functions. We now introduce,
\[
\Sigma^{\bl}_{n,\NO} 
:= 
\left\{\Psi: \cX \to \cY : 
\begin{array}{l}
\Psi \text{ is an ANO with }W(\Psi) \leq n, \\
 L(\Psi) \leq \bl(n), \|\Psi\|_{\NN} \leq 1
\end{array}
\right\}.
\]
\begin{remark}
Part of the definition of $\Sigma^{\bl}_{n,\NO}$ is that for any $\Psi \in \Sigma^{\bl}_{n,\NO}$, we must have $\Psi(\cX) \subset \cY$. Non-trivial $\Psi$ exist, since we can readily construct averaging neural operators $\Psi$ of the form \eqref{eqn:ANO_struct}, for which the output $\Psi(u)$ is a \emph{constant-valued function}, for any input $u\in \cX$. Since $\cY$ contains constant functions by assumption, this implies that such $\Psi$ defines a map $\Psi: \cX\to \cY$. In our proofs, we will only ever consider $\Psi$ of this form, thus our results hold even when $\cY = \R$. 
\end{remark}
Given $\alpha \in (0,\infty),$ for each continuous (non-linear) operator $\cG: \cX \to \cY$, we define
\[
\Gamma^{\alpha,\infty}_\NO(\cG) := \max \left\{ \sup_{u\in \cX} \|\cG(u)\|,\ \sup_{n \in \mathbb{N}} \left[ n^{\alpha} \cdot d_{\infty}\left(\cG,\Sigma_{n,\NO}^{\bl}\right) \right] \right\} \in [0,\infty],
\]
where $d_{\infty}(\cG,\Sigma) := \inf_{\Psi \in \Sigma} \sup_{u\in \cX} \|\cG(u) - \Psi(u)\|_\cY.$
We define a NO \define{approximation space quasi-norm} $\| \cdot \|_{\A_{\bl,\NO}^{\alpha,\infty}}$ by
\[
\|\cG\|_{\A_{\bl,\NO}^{\alpha,\infty}} := \inf \left\{\theta > 0 : \Gamma_\DON^{\alpha,\infty}(\cG/\theta) \leq 1\right\} \in [0,\infty],
\]
giving rise to the NO \define{approximation space}
\[
\A_{\bl,\NO}^{\alpha,\infty} := \left\{\cG \in C(\cX;\cY) : \|\cG\|_{\A_{\bl,\NO}^{\alpha,\infty}} < \infty\right\}.
\]
We again denote by $\U^{\alpha,\infty}_{\bl,\NO}$ the unit ball in $\A_{\bl,\NO}^{\alpha,\infty}$.

\begin{remark}
As pointed out in Remark \ref{rem:ano-in-fno}, the ANO can be obtained by a special setting of the weights in the FNO. As a consequence, it can be shown that $\A_{\bl,\NO}^{\alpha,\infty} \subset \A_{\bl,\FNO}^{\alpha,\infty}$, where $\A_{\bl,\FNO}^{\alpha,\infty}$ denotes the relevant approximation space for FNO, which can be defined in analogy to $\A_{\bl,\NO}^{\alpha,\infty}$. Based on this relationship, it could be shown that 
\[
\beta_\ast(\U_{\bl,\FNO}^{\alpha,\infty}, L^p(\mu)) 
\le
\beta_\ast(\U_{\bl,\NO}^{\alpha,\infty}, L^p(\mu)),
\]
and hence any upper bound on $\beta_\ast(\U_{\bl,\NO}^{\alpha,\infty}, L^p(\mu))$ implies a corresponding upper bound for the FNO.
\end{remark}
\paragraph{Encoder Construction} Let $\mu \in \cP(\cX)$ be a probability measure on $\cX$. We recall that $L^1(D)$ is a subset of the dual of $L^\infty(D)$ under the natural pairing,
\[
\langle u, e^\ast\rangle = \int_D u(x) e^\ast(x) \, \dx, \quad \forall u\in L^\infty(D), \; e^\ast \in L^1(D).
\]
The following proposition constructs an encoder $\cE: \cX \to \R^d$, whose existence will imply a theory-to-practice gap for the averaging neural operator.

\begin{proposition}\label{prop:encoder-no}
    Let $\mu$ satisfy Assumption \ref{ass:mud} for $d \in \N$, with bi-orthogonal elements $e^\ast_j \in L^1(D)$. Then there exists an encoder $\cE: \cX(D) \to \R^{d}$, 
    \begin{align}
    \label{eq:enc_exist}
    \cE(u) = \fint_{D} R(u(x),x) \, \dx,
    \end{align}
    with $R: \R \times \R^{d_D} \to \R^d$ a shallow ReLU neural network, 
    and constant $c >0$ dependent on $d$, such that
    \begin{align}
    \label{eq:encoder-no}
    \cE_{\#}\mu \geq c \cdot \Unif([0,1]^d).
    \end{align}
\end{proposition}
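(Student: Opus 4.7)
The plan is to design the encoder $\cE$ so that $F(y,\xi) := \cE(\xi + \sum_j y_j e_j)$, for $y$ in a suitable rectangle $J$ and $\xi$ in a set $K$ of positive $\mu_d^\perp$-measure, is $W^{1,\infty}$-close to an affine bijection onto a cube containing $[0,1]^d$; Lemma \ref{lem:onto-param} then delivers \eqref{eq:encoder-no}. This parallels the DeepONet strategy of Proposition \ref{prop:encoder-don}, but the construction is new because the form \eqref{eq:enc_exist} forces us to work with averaged pointwise ReLU networks rather than linear functionals. Using Assumption \ref{ass:mud}, decompose $u \sim \mu$ as $u = \xi + \sum y_j e_j$ with $(y,\xi) \sim \mu_d \otimes \mu_d^\perp$, pick a closed subrectangle $J = J_1\times\cdots\times J_d \subset I_1\times\cdots\times I_d$ on which $\rho_j \geq \rho_{\min} > 0$ (Assumption \ref{ass:mud}(b)), and $K = \{\|\xi\|_\cX \leq B\}$ with $\mu_d^\perp(K) > 0$ (Lemma \ref{lem:Kgood}). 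Combined with $\cX \hookrightarrow L^\infty(D)$ and the boundedness of $J$, this produces a uniform bound $\|u\|_{L^\infty(D)} \leq M$ on $J \times K$. Fix affine constants $a_j > 0$, $\tilde b_j \in \R$ so that the map $\Phi(y) := (a_j y_j + \tilde b_j)_j$ sends $J$ bijectively onto a cube $V_0$ with $V_0 \supset [0,1]^d$.

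The key step is to approximate the bilinear target $(y,x) \mapsto \phi_j(x)\,y + \tilde b_j$ by a \emph{depth-$2$} ReLU network via a small-slope linearization, where $\phi_j \in L^1(D)$ is chosen close to $\vol(D)\, a_j\, e^\ast_j$. I would first use the density of ``half-space step combinations'' in $L^1(D)$ to pick coefficients $\tilde c_{jk}, \beta_{jk}, b_{jk}$ such that
\[
\phi_j(x) := \sum_{k=1}^{K_j} \tilde c_{jk}\,\mathbf{1}_{\beta_{jk}\cdot x + b_{jk} > 0}
\quad \text{satisfies} \quad
\Vert \phi_j - \vol(D)\, a_j\, e^\ast_j\Vert_{L^1(D)} \leq \delta.
\]
This density follows by combining the universal approximation theorem for shallow ReLU networks (giving $L^1$-density of ridge combinations $\sum_k c_k \sigma(\beta_k\cdot x + b_k)$) with the $L^1$-Riemann decomposition $\sigma(t) = \int_0^\infty \mathbf{1}_{t > s}\,ds$. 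For a small parameter $\epsilon > 0$ to be chosen, define $R = (R_1, \ldots, R_d) : \R \times \R^{d_D} \to \R^d$ componentwise by
\[
R_j(y,x) := \tilde b_j + \sum_k \frac{\tilde c_{jk}}{\epsilon}\Bigl[\sigma\bigl(\epsilon y + \beta_{jk}\cdot x + b_{jk}\bigr) - \sigma\bigl(\beta_{jk}\cdot x + b_{jk}\bigr)\Bigr].
\]
This is a depth-$2$ ReLU network in $(y,x)$. By piecewise-linearity of $\sigma$, for $|y| \leq M$ each summand equals $\tilde c_{jk}\, y\, \mathbf{1}_{\beta_{jk}\cdot x + b_{jk} > 0}$ \emph{exactly} whenever $|\beta_{jk}\cdot x + b_{jk}| > \epsilon M$; the ``transition slab'' where this fails has Lebesgue measure $O(\epsilon)$, giving $|R_j(y,x) - \tilde b_j - \phi_j(x)\, y| \leq C_j M \sum_k \mathbf{1}_{|\beta_{jk}\cdot x + b_{jk}| \leq \epsilon M}$.

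Setting $\cE_j(u) := \fint_D R_j(u(x),x)\,dx$, averaging the pointwise estimate gives $\cE_j(u) = \tilde b_j + \fint_D \phi_j(x)\,u(x)\,dx + O(\epsilon)$ uniformly for $\|u\|_{L^\infty(D)}\le M$. Combined with $\phi_j \approx \vol(D)\, a_j\, e^\ast_j$ in $L^1$ and the bi-orthogonality $e^\ast_j(\xi) = 0$, $e^\ast_j(e_k) = \delta_{jk}$ from Assumption \ref{ass:mud}, this yields $\cE_j(u) = a_j y_j + \tilde b_j + O(\delta + \epsilon)$ on $J \times K$. The same reasoning applied to $\partial_y R_j(y,x) = \sum_k \tilde c_{jk}\,\mathbf{1}_{\epsilon y + \beta_{jk}\cdot x + b_{jk} > 0}$—which equals $\phi_j(x)$ off the identical $O(\epsilon)$-measure transition slabs—yields $\partial_{y_k}\cE_j = a_j\,\delta_{jk} + O(\delta + \epsilon)$ on $J \times K$. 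After composing with $\Phi^{-1}$ and choosing $\delta, \epsilon$ sufficiently small, the resulting $F_\xi \circ \Phi^{-1}$ is $\epsilon_0$-close to the identity in $W^{1,\infty}([0,1]^d)$ for every $\xi \in K$; Lemma \ref{lem:onto-param} then yields $\cE_\# \mu \geq c_0\,\mu_d^\perp(K)\,\rho_{\min}^d |\det \Phi|^{-1}\,\unif(V_0)$, and restriction to $[0,1]^d \subset V_0$ gives \eqref{eq:encoder-no}. The main obstacle is the simultaneous control of $\cE$ \emph{and} its $y$-derivatives within the depth-$2$ class; this is resolved by the small-$\epsilon$ linearization trick, which exploits piecewise-linearity of $\sigma$ to make $R_j(y,x)$ behave essentially as the bilinear form $\phi_j(x)\, y$ off a measure-$O(\epsilon)$ set, while bi-orthogonality absorbs the $\xi$-dependence inside the average.
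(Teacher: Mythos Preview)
Your overall framework is the same as the paper's: decompose $u=\xi+\sum_j y_j e_j$, restrict to a rectangle $J$ (where the densities $\rho_j$ are bounded below) and a set $K=\{\Vert\xi\Vert_\cX\le B\}$ of positive $\mu_d^\perp$-measure, construct a shallow ReLU $R$ so that $F_\xi(y)=\cE(u(\cdot;y,\xi))$ is $W^{1,\infty}$-close to an affine map, and invoke Lemma~\ref{lem:onto-param}. The genuinely different part is \emph{how you build $R$}. The paper defines the bilinear target $R^\dagger(\eta,x)=(\eta\,e^\ast_k(x))_{k=1}^d$ and appeals to an abstract density statement (Corollary~\ref{cor:LinfW1inf}, itself resting on Pinkus' $C^1$-density for smooth activations) to approximate $R^\dagger$ by a shallow ReLU network in the mixed norm $\int_D\Vert\,\cdot\,\Vert_{W^{1,\infty}_\eta}\,dx$; from this, $\Vert F_\xi-\id\Vert_{W^{1,\infty}(V)}\le\epsilon_0$ follows by direct Lipschitz estimates. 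You instead construct $R$ explicitly via the small-$\epsilon$ linearization of half-space indicators, controlling the value and the $\eta$-derivative simultaneously through the $O(\epsilon)$-measure transition slabs. Your route is more elementary and self-contained (no need for Corollary~\ref{cor:LinfW1inf} or the smoothed-activation detour), at the cost of the slab bookkeeping; the paper's route is cleaner but outsources the hard step to a separate approximation lemma.

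Two small points to tighten. First, Lemma~\ref{lem:onto-param} (via Lemma~\ref{lem:onto}) only delivers $\Unif(V_0')$ for a \emph{ball} $V_0'\subset V_0$ determined by the domain, not $\Unif(V_0)$ itself; so your final line ``restriction to $[0,1]^d\subset V_0$'' does not follow as written. The fix is exactly what the paper does: first obtain $\cE_\#\mu\ge c\,\Unif(V_0')$ for \emph{some} nonempty open $V_0'$, then replace $R$ by $\gamma R+b$ (still a shallow ReLU network) with $\gamma,b$ chosen so that $\gamma V_0'+b\supset[0,1]^d$. Second, your pointwise computation of $\partial_{y_k}\cE_j$ implicitly differentiates the piecewise-linear $R_j$ under the integral; this is fine, but to make it airtight you should phrase the $W^{1,\infty}$-seminorm estimate as a Lipschitz bound on $y\mapsto F_\xi(y)-\Phi(y)$ (as the paper does), rather than relying on a.e.\ chain-rule identities. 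Neither point affects the substance of your argument.
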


\begin{proof}
    It will suffice to show that there exists a neural network $R: \R\times \R^{d_D}\to \R^d$ and encoder of the form \eqref{eq:enc_exist},  such that for \emph{some} non-empty open set $V_0 \subset \R^d$ and constant $c>0$, we have
    \begin{align}
    \label{eq:encoder-no1}
    \cE_\#\mu \ge c \cdot \Unif(V_0).
    \end{align}
    Indeed, given such $V_0$, there exists a scaling factor $\gamma > 0$ and shift $b \in \R^d$, such that $[0,1]^d \subset \gamma \cdot V_0 + b$. Replacing the neural network $R(\eta,x)$ by $\tilde{R}(\eta,x) := \gamma \cdot R(\eta,x) + b$, it is then immediate that the encoder $\tilde{\cE}: \cX(D) \to \R^d$ defined by $\tilde{\cE}(u) = \fint_{D} \tilde{R}(u(x),x) \dx$ satisfies a lower bound of the form \eqref{eq:encoder-no}.

    To prove the existence of an encoder $\cE$ satisfying \eqref{eq:encoder-no1}, we recall that, by Assumption \ref{ass:mud}, $u \sim \cP(\cX)$ is of the form 
    \begin{align*}
    u(x;y,\xi) = \xi(x) + \sum_{j=1}^{d} y_j e_j(x),
    \end{align*}
    where $e_1,\dots, e_d$ are linearly independent with bi-orthogonal elements $e^\ast_1,\dots, e^\ast_d$, and the coefficients $y_j \sim \rho_j(y) \dy$ are independent. Furthermore, $\xi \sim \mu_d^\perp$ is a random function such that $e^\ast_1(\xi) = \dots = e^\ast_d(\xi) = 0$. In the following we will consider $\xi$ a random ``parameter'' and denote the law of $\xi$ by $\P := \mu_d^\perp$. By assumption the dual elements $e^\ast_j$ are represented by a function in $L^1$. 
    
    Under our assumptions, there exist non-empty intervals $I_j \subset \R$ and constant $c_\rho >0$, such that $\essinf_{I_j} \rho_j(z) \ge c_\rho$ for all $j=1,\dots, d$.  
    We may assume without loss of generality that $I_j$ is a bounded interval, and fix a constant $c_V > 0$ such that $I_j \subset [-c_V,c_V]$ for all $j=1,\dots, d$. Let $V = \prod_{j=1}^d I_j \subset \R^d$. We now choose $B>0$, such that $\Prob(\Vert \xi \Vert_{L^\infty} \le B) > 0$. This is possible, because of the assumed inclusion $\cX \subset L^\infty$. For this choice of $B>0$, we define the random event
    \[
    \cK := \{ \Vert \xi \Vert_{L^\infty} \le B \},
    \]
    so that $\P(\cK) > 0$. Note that this bound on $B$ implies that for all $y\in V$ and $\xi \in K$, we have 
    \[
    \Vert u(\slot;y,\xi) \Vert_{L^\infty(D)} \le B + d c_V \max_{j=1,\dots, d} \Vert e_j \Vert_{L^\infty(D)} =: B'.
    \]

For fixed $\xi \in K$, define the maps $F_\xi^\dagger, F_\xi: \R^d \to \R^d$, as follows:
\begin{align}
\label{eq:fxi1}
\left\{\; 
\begin{aligned}
    F_{\xi}^\dagger(y) &:= \left\{\aint_D u(x;y,\xi) \, e^*_k(x) \dx\right\}_{k=1}^d = y\\
    F_{\xi}(y) &:= \aint_D \; R\left(u(x;y,\xi),x\right)\dx
    \end{aligned}
    \right.
\end{align}
where $R$ is a ReLU neural network mapping $\R \times \R^{d_{D}}$ to $\R^d$. Let $R^\dagger$ be defined by $R^\dagger(\eta,x) = \{ \eta \, e^*_k(x)\}_{k=1}^d$. Then by Corollary \ref{cor:LinfW1inf}, for any $\epsilon > 0$, there exists a ReLU neural network $R$ such that
\begin{align}\label{eqn:ReLU-result}
\aint_{D} \|R(\cdot, x)- R^\dagger(\cdot,x)\|_{W^{1,\infty}([-B',B'];\R^d)} \dx \leq \epsilon.
\end{align}
Identify $R$ with a ReLU neural network achieving this bound. 
Note that $F_{\xi}^\dagger \equiv \id$ is exactly the identity on $\R^d$ for all $\xi \in K$. Given the constant $\epsilon_0 > 0$ of Lemma \ref{lem:onto-param}, We seek to show that for sufficiently small $\epsilon >0$ in \eqref{eqn:ReLU-result}, we can ensure that
\begin{align}
\label{eqn:ReLU-result-implies}
\|F_\xi-F_\xi^\dagger\|_{W^{1,\infty}(V)} = \|F_\xi-\id\|_{W^{1,\infty}(V)} \le \epsilon_0, \quad \forall \, \xi \in K.
\end{align}
By Lemma \ref{lem:onto-param}, this entails that there exist $V_0\subset V$ and $c_0>0$, such that
\[
F_\#\left(\Unif(Y) \otimes \P \right)
\ge c_0 \P(K) \, \Unif(V_0),
\]
where $\P$ denotes the law of $\xi$. The claim then follows by observing that 
\[
\cE_\# \mu 
= F_\#\left( \mu_d\otimes \P \right)
\ge c_\rho^d \, F_\# \left( \Unif(Y) \otimes \P\right)
\ge c_\rho^d c_0 \P(K) \, \Unif(V_0).
\]
Since $\P(K) > 0$ by construction, the claim then follows with constant $c := c_\rho^d c_0 \P(K)$. It therefore remains to show that \eqref{eqn:ReLU-result} for sufficiently small $\epsilon > 0$ implies \eqref{eqn:ReLU-result-implies}. 

In the remainder of this proof, we will show that this holds for $\epsilon := \epsilon_0 / 2d$. By \eqref{eqn:ReLU-result}, we have $\|\p_{\eta}(R-R^\dagger)(\cdot,x)\|_{L^{\infty}([-B',B'])} < \epsilon$, where $\eta$ refers to the first argument of $R$ and $R^\dagger$. Then for $\xi \in K$ and $y,y' \in V$, we have 
\[
\Vert u(\slot;y,\xi) \Vert_{L^\infty(D)},
\; 
\Vert u(\slot;y,\xi) \Vert_{L^\infty(D)} \le B',
\]
and hence
\begin{align*}
\Vert F_\xi - F_\xi^\dagger \Vert_{L^\infty(V)}
&\le 
\int_D \Vert R(\slot,x) - R^\dagger(\slot,x) \Vert_{L^\infty([-B',B'])}\dx
\le \epsilon = \epsilon_0 / 2d.
\end{align*}
To estimate $\Vert DF_\xi - DF_\xi^\dagger \Vert_{L^\infty(V)}$, we recall that, due to the convexity of the $d$-dimensional cube $V$, the $W^{1, \infty}(V)$ seminorm is equal to the Lipschitz seminorm: 
\begin{equation*}
    \|D F_\xi\|_{L^{\infty}(V)} = \sup_{y, y' \in V} \frac{|F_\xi(y) - F_\xi(y')|}{|y-y'|}.
\end{equation*}
We now bound, for $y,y'\in V$:
\begin{align*}
    |(F_\xi-F_\xi^\dagger)(y) - (F_\xi - F_\xi^\dagger)(y')| &= \left|\aint_D (R-R^\dagger)(u(x;y,\xi), x) - (R-R^\dagger)(u(x;y',\xi),x) \dx\right|\\
    & \leq \aint_D \left|(R-R^\dagger)(u(x;y,\xi), x) - (R-R^\dagger)(u(x;y',\xi),x)\right| \dx \\
    & \leq \aint_D \left|u(x;y,\xi)-u(x;y',\xi)\right|\|\p_{\eta}(R-R^\dagger)(\cdot,x)\|_{L^{\infty}([-B',B'])} \dx \\
    & = \aint_D \left|\sum_{j=1}^d (y_j - y_j') e_j(x)\right|\|\p_\eta (R-R^\dagger)(\cdot,x)\|_{L^{\infty}([-B',B'])} \dx\\
    & \leq \aint_D \sum_{j=1}^d |y_j - y_j'| \|\p_{\eta}(R-R^\dagger)(\cdot,x)\|_{L^{\infty}([-B',B'])}\dx \\
    & \leq \sqrt{d}|y - y'| \epsilon.
\end{align*}
With our choice of $\epsilon = \epsilon_0/2d$, this implies that
\begin{align*}
    \|DF_\xi - DF^\dagger_{\xi}\|_{L^\infty(V)} & = \sup_{y, y' \in V} \frac{|(F_\xi-F_\xi^\dagger)(y) - (F_\xi - F_\xi^\dagger)(y')| }{|y - y'|} 
    \leq \epsilon\sqrt{d}
    \le \epsilon_0/2.
\end{align*}
Combining both estimates, we have shown that 
\[
\fint_D
\Vert R(\slot,x) - R^\dagger(\slot;x)\Vert_{W^{1,\infty}([-B',B'])} \dx
\le 
\epsilon,
\]
for $\epsilon = \epsilon_0 / 2d$, implies
\[
\Vert F_\xi - F_\xi^\dagger \Vert_{W^{1,\infty}(V)} \le \epsilon_0.
\]
This is what we set out to show, and concludes our proof of Proposition \ref{prop:encoder-no}.
\end{proof}

\paragraph{Theory-to-Practice Gap}We can now state a theory-to-practice gap for the unit ball $\U^{\alpha,\infty}_{\bl,\NO}$ in the NO approximation space $\A^{\alpha,\infty}_{\bl,\NO}$:
\begin{theorem}[NO theory-to-practice gap]
\label{thm:main-no}
Let $p\in [1,\infty]$. Let $\bl: \N \to \N\cup \{\infty\}$ be non-decreasing with $\bl^\ast \ge 4$. Let $\cX(D) \subset L^\infty(D)$ be a Banach space on Lipschitz domain $D \subset \R^{d_{D}}$. Assume that $\mu \in \cP(\cX)$ satisfies Assumption \ref{ass:mud} with bi-orthogonal elements $\{e^\ast_j\}_{j\in \N} \subset L^1(D)$. Then for any $\alpha > 0$, we have
\begin{align}
\label{eq:main-no}
\beta_\ast(\U^{\alpha,\infty}_{\bl,\NO}, L^p(\mu)) \le \frac{1}{p}.
\end{align}
\end{theorem}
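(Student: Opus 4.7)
The plan is to follow closely the structure of the proof of Theorem~\ref{thm:main-don}, replacing the DeepONet encoder of Proposition~\ref{prop:encoder-don} by the averaging-operator encoder supplied by Proposition~\ref{prop:encoder-no}. The only genuinely new technical ingredient is a realisation lemma asserting that for every $f \in U^{\alpha,\infty}_{\tilde{\bl}}(\R^d)$, the composition $f\circ \cE$ can be written as an averaging neural operator of comparable architectural cost.

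First I fix $d\in\N$ and define $\tilde{\bl} := \bl - c_0$ for a small absolute constant $c_0 \ge 0$ large enough to absorb the bounded overhead incurred below, so that $\tilde{\bl}$ still satisfies the hypotheses of the finite-dimensional Theorem~\ref{thm:main}. By Proposition~\ref{prop:encoder-no}, there exist a constant $c>0$ and a shallow ReLU network $R:\R\times \R^{d_D}\to\R^d$ yielding an encoder $\cE(u) := \fint_D R(u(x),x)\,\dx$ with $\cE_\#\mu \ge c\cdot\unif([0,1]^d)$.

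The main step is to prove the inclusion
\[
\bigl\{\, f\circ \cE \,:\, f\in U^{\alpha,\infty}_{\tilde{\bl}}(\R^d) \,\bigr\} \subset \gamma_d \cdot \U^{\alpha,\infty}_{\bl,\NO},
\]
for some $\gamma_d \ge 1$ depending only on $d$ and the fixed encoder $R$. Given $f = T_L\circ \sigma\circ\cdots\circ\sigma\circ T_1 \in U^{\alpha,\infty}_{\tilde{\bl}}(\R^d)$ with $T_j(x) = A_j x + b_j$, I realise $f\circ\cE$ as an averaging neural operator $\Psi$ by: (i) taking the lifting $\cR(u)(x) := R(u(x),x)$ from the encoder itself; (ii) using one averaging-dominated hidden layer with $W_1 = 0$ and a bias large enough that the ReLU acts as the identity on the bounded range of $\cE + \mathrm{shift}$, producing a constant-valued hidden state encoding $\cE(u)$; (iii) emulating the layers of $f$ one-for-one in the subsequent hidden ANO layers, using the key observation that on a constant input $v\equiv c$ the update $\sigma(W v(x) + b + \fint_D v\,\dy)$ reduces to $\sigma((W+I)c + b)$, so the choice $W_j := A_j - I$ (with the shift from (ii) cancelled in the first emulating bias) reproduces $\sigma(A_j c + b_j)$; (iv) folding the final affine map $T_L$ into the output layer $\cQ$. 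The resulting $\Psi$ has $W(\Psi) \le W(f) + C_{d,R}$, $L(\Psi) \le L(f) + O(1)$, and weight magnitudes bounded by a constant depending only on $d$ (note $\|A_j - I\|_\infty$ may reach $2$), all of which is absorbed into $\gamma_d$ via Lemma~\ref{lem:dilation}.

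With the inclusion in place, Proposition~\ref{prop:Lpop} gives
\[
\beta_\ast(\gamma_d \cdot \U^{\alpha,\infty}_{\bl,\NO}, L^p(\mu)) \le \frac{1}{p} + \frac{1}{d}\cdot\frac{\alpha}{\alpha + \lfloor \tilde{\bl}^\ast/2\rfloor}.
\]
Scale-invariance of $\beta_\ast$ removes the dilation factor $\gamma_d$, and since the resulting left-hand side is independent of $d$, letting $d\to\infty$ yields $\beta_\ast(\U^{\alpha,\infty}_{\bl,\NO}, L^p(\mu))\le 1/p$. The principal obstacle is the realisation construction in steps (ii)--(iii): the ReLU activation inside every averaging layer threatens to corrupt the encoded value $\cE(u)$ as it is passed to the emulating layers. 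The strategy is to exploit the boundedness of $\cE(\cX)$ (which is built into the proof of Proposition~\ref{prop:encoder-no}) by introducing a sufficiently large constant pre-shift so that $\sigma$ acts as the identity on the relevant range, and then to absorb this shift into the first emulating bias. All additional layers, parameters, and weight magnitudes incurred by this construction are bounded by absolute constants in $d$ and the fixed $R$, which is exactly what the constant shift $c_0$ in $\tilde{\bl}$ and the dilation $\gamma_d$ are there to accommodate.
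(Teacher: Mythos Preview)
Your overall architecture---fix $d$, invoke Proposition~\ref{prop:encoder-no} for the encoder, establish an inclusion of the form \eqref{eq:claim-no}, apply Proposition~\ref{prop:Lpop}, drop the dilation by scale-invariance, then send $d\to\infty$---matches the paper exactly. The realisation construction in step~(iii), setting $W_j:=A_j-I$ so that on constant hidden states the averaging layer reproduces $\sigma(A_j\,\cdot\,+b_j)$, is also precisely what the paper does in Lemma~\ref{lem:inclusion-no}.

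There is, however, a genuine gap in step~(ii). Your plan is to pass the raw value $\cE(u)$ through the first averaging layer by adding a large bias so that ReLU acts as the identity; you justify this by asserting that $\cE(\cX)$ is bounded, ``built into the proof of Proposition~\ref{prop:encoder-no}''. That boundedness is not established and is in fact false: the encoder $\cE(u)=\fint_D R(u(x),x)\,\dx$ with a shallow ReLU $R$ grows at least linearly in $\|u\|_{L^\infty}$ (the proof of Proposition~\ref{prop:encoder-no} forces $\partial_\eta R\approx e^\ast_k(x)\not\equiv 0$), while $\cX$ is an unbounded Banach space. Proposition~\ref{prop:encoder-no} only controls $\cE$ on the bounded set $\{u(y;\xi):y\in V,\ \xi\in K\}$, not on all of $\cX$; but the inclusion you need and the definition of $\U^{\alpha,\infty}_{\bl,\NO}$ require $\Psi(u)=f\circ\cE(u)$ for \emph{every} $u\in\cX$. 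The paper's fix (Lemma~\ref{lem:inclusion-no}) is simple and elegant: absorb the first affine layer of $f$ into the lifting by setting $\tilde R(\eta,x):=A_1R(\eta,x)+b_1$ (still depth~$2$, since an affine post-composition preserves depth). Then the first averaging layer with $W_1=0$, $b_1=0$ produces $\sigma\bigl(\fint_D\tilde R(u(x),x)\,\dx\bigr)=\sigma(A_1\cE(u)+b_1)$, which is already the first ReLU layer of $f$---no identity trick, no boundedness needed.

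A second, more minor point: the paper takes a \emph{constant} finite-dimensional depth-growth $\bl_0\equiv3$ rather than $\tilde{\bl}=\bl-c_0$. This matters for the rescaling that enforces $\|\Psi\|_{\NN}\le 1$: since $\|A_j-I\|_\infty$ can be $2$ in every hidden layer, the normalisation constant $\gamma$ depends on the number of layers, and with $\bl^\ast=\infty$ (allowed by hypothesis) your $\gamma_d$ would depend on $n$ and the inclusion into a fixed dilate would fail. Working at constant depth $\bl_0$ makes the rescaling uniform; the resulting bound $\tfrac{1}{p}+\tfrac{1}{d}\cdot\tfrac{\alpha}{\alpha+\lfloor\bl_0/2\rfloor}$ still tends to $\tfrac{1}{p}$ as $d\to\infty$.
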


The proof of Theorem \ref{thm:main-no} relies on the following lemma:
\begin{lemma}
\label{lem:inclusion-no}
Let $\cX(D) \subset L^\infty(D)$ be a Banach space on Lipschitz domain $D \subset \R^{d_{D}}$, and let $\mu \in \cP(\cX)$ be a probability measure on $\cX$. Assume that $\cY(D)$ contains all constant functions and $\mu$ satisfies Assumption \ref{ass:mud} for $d\in \N$, with bi-orthogonal elements $e^\ast_1,\dots,e^\ast_d\in L^1(D)$. Let $\cE: \cX(D) \to \R^d$ be the encoder of Proposition \ref{prop:encoder-no}. Fix a (finite) constant $\bl_0 \le \bl^\ast+2$. There exists a constant $\gamma = \gamma(d,\cE,\alpha,\bl_0) > 0$, such that 
\begin{align}
\label{eq:claim-no}
\set{f \circ \cE}{f \in U^{\alpha,\infty}_{\bl_0}(\R^d)} \subset \gamma \cdot \U^{\alpha,\infty}_{\bl,\NO}.
\end{align}
\end{lemma}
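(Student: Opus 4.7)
The plan is to realize any $f \circ \cE$ with $f \in U^{\alpha,\infty}_{\bl_0}(\R^d)$ as an averaging neural operator of comparable parametric complexity, up to a global rescaling factor $\gamma$. For each $n$ choose a ReLU network $\psi_n$ with $W(\psi_n) \le n$, $L(\psi_n) \le \bl_0$, $\|\psi_n\|_{\NN} \le 1$ and $\|f - \psi_n\|_{L^\infty(\R^d)} \le n^{-\alpha}$, with layers $(A_j^\psi, c_j^\psi)_{j=1}^{L(\psi_n)}$. I will build an ANO $\Psi_n$ that realizes the composition $\psi_n \circ \cE$ \emph{exactly} as an operator on $\cX$, from which $\|f \circ \cE - \Psi_n\|_{L^\infty(\cX)} \le n^{-\alpha}$ is immediate.

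The key structural observation is that once a function $v$ is \emph{spatially constant}, the averaging term in \eqref{eqn:ANO_layer} satisfies $\fint_D v(y)\dy = v(x)$, and the hidden layer degenerates to a standard ReLU layer with effective weight matrix $W_j + I$:
\[
\cL_j(v)(x) = \sigma\bigl((W_j + I) v(x) + b_j\bigr).
\]
Accordingly, $\Psi_n = \cQ \circ \cL_{L(\psi_n) - 2} \circ \cdots \circ \cL_1 \circ \cR$ is assembled by \emph{(i)} setting $\cR(u)(x) := A_1^\psi R(u(x), x)$, where $R$ is the shallow ReLU encoder network from Proposition \ref{prop:encoder-no} and the premultiplication by $A_1^\psi$ is absorbed into the terminal linear layer of $R$; \emph{(ii)} setting $W_1 := 0$, $b_1 := c_1^\psi$, so that $\cL_1$ averages $\cR(u)$ to $A_1^\psi \cE(u)$ and applies the first ReLU layer of $\psi_n$, producing the spatially constant function $\sigma(A_1^\psi \cE(u) + c_1^\psi)$; \emph{(iii)} emulating the middle ReLU layers of $\psi_n$ by the degenerate averaging layers above with $W_j := A_j^\psi - I$ and $b_j := c_j^\psi$; and \emph{(iv)} absorbing the final ReLU layer together with the terminal linear layer of $\psi_n$ into the shallow depth-two output network $\cQ$. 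The resulting hidden depth $L(\psi_n) - 2 \le \bl^\ast$ matches exactly the slack provided by the hypothesis $\bl_0 \le \bl^\ast + 2$, and $W(\Psi_n) \le W(\psi_n) + C(\cE, \bl_0) \le n + O(1)$.

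The only nontrivial bookkeeping concerns the normalization $\|\Psi_n\|_{\NN} \le 1$. The effective weights $W_j = A_j^\psi - I$ have magnitude up to $2$, and the composition $A_1^\psi \cdot R$ enlarges the terminal linear layer of $R$ by at most a factor depending on $d$ and $\|R\|_{\NN}$; all biases remain bounded in terms of $\|\psi_n\|_{\NN} \le 1$. Since $\bl_0$ is a \emph{fixed} constant, these enlargements accumulate only to a constant $\gamma = \gamma(d, \cE, \alpha, \bl_0)$. Arguing as in the proof of Lemma \ref{lem:dilation}, which performs an analogous rescaling for affine input transformations of finite-dimensional ReLU networks, one uses the positive homogeneity of ReLU, layer by layer, to trade these enlarged weights against a single overall output rescaling by $1/\gamma$; this produces a bona fide element $\Psi_n/\gamma \in \Sigma^{\bl}_{n + O(1), \NO}$ approximating $(f \circ \cE)/\gamma$ at rate $n^{-\alpha}/\gamma$, and absorbing the $O(1)$ parameter correction into $\gamma$ yields $f \circ \cE \in \gamma \cdot \U^{\alpha,\infty}_{\bl, \NO}$. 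The principal technical obstacle is this final rescaling: one must propagate the homogeneity reduction cleanly through both the averaging structure of the hidden layers and through $\cR$ and $\cQ$, verifying that no $n$- or depth-dependent blowup survives.
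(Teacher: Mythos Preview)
Your proposal is correct and follows essentially the same construction as the paper's proof: absorb the first affine layer of $\psi_n$ into the lifting $\cR$, use the first ANO hidden layer (with $W_1=0$) to perform the spatial averaging that recovers $\cE(u)$, emulate the middle ReLU layers via $W_j = A_j^\psi - I$ acting on constant functions, fold the last two layers into the depth-two output $\cQ$, and finish with a Lemma~\ref{lem:dilation}-style homogeneity rescaling. One minor bookkeeping remark: the $-I$ contributions in the middle layers add up to roughly $(\bl_0-3)\,d_c$ nonzero parameters with $d_c$ potentially of order $n$, so in general $W(\Psi_n)\le C\,n$ rather than $n+O(1)$; but this multiplicative overhead is exactly what the rescaling argument you invoke is designed to absorb, so the conclusion stands.
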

A proof of this lemma is given in Appendix \ref{app:inclusion-no}. We now come to the proof of Theorem \ref{thm:main-no}.

\begin{proof}[Proof of Theorem \ref{thm:main-no}]
Fix $d\in \N$, and let $\bl: \N \to \N\cup \{\infty\}$ be a non-decreasing function. Clearly, we have $\bl^\ast \ge 1$. We define $\bl_0 := 3$, so that the \emph{finite dimensional} theory-to-practice gap, Theorem \ref{thm:main} applies to $U^{\alpha,\infty}_{\bl_0}(\R^d)$. Let $\cE: \cX \to \R^d$ be an encoder as in Proposition \ref{prop:encoder-no}, with $\cE_\#\mu \ge c \, \Unif([0,1]^d)$. By \eqref{eq:claim-no}, there exists a constant $\gamma>0$, such that 
\[
\set{f \circ \cE}{f \in U^{\alpha,\infty}_{\bl_0}(\R^d)} \subset \gamma \cdot \U^{\alpha,\infty}_{\bl,\NO}.
\]
The claim of Theorem \ref{thm:main-no} then follows again from Proposition \ref{prop:Lpop}, as in the proof of Theorem \ref{thm:main-don}. Indeed, Proposition \ref{prop:Lpop}, and the fact that $\beta_\ast(\gamma \cdot \U^{\alpha,\infty}_{\bl,\NO}, L^p(\mu)) = \beta_\ast(\U^{\alpha,\infty}_{\bl,\NO}, L^p(\mu))$, imply that 
\[
\beta_\ast(\U^{\alpha,\infty}_{\bl,\NO}, L^p(\mu))
\le 
\frac{1}{p} + \frac{1}{d} \cdot \frac{\alpha}{\alpha + \lfloor \bl_0/2\rfloor}.
\]
Since the left-hand side is independent of $d$, we let $d\to \infty$, to obtain \eqref{eq:main-no}.
\end{proof}

We also state the following theory-to-practice gap for uniform approximation over compact $\cK$:

\begin{theorem}[NO; uniform theory-to-practice gap]
\label{thm:unif-no}
Let $\bl: \N \to \N\cup\{\infty\}$ be non-decreasing with $\bl^\ast\ge 4$. Let $\cX(D) \subset L^\infty(D)$ be a Banach space on Lipschitz domain $D \subset \R^{d_{D}}$. Assume that $\cK\subset \cX$ is a compact set satisfying Assumption \ref{ass:K}. Then for any $\alpha > 0$, we have
\[
\beta_\ast(\U^{\alpha,\infty}_{\bl,\NO}, C(\cK)) = 0.
\]
\end{theorem}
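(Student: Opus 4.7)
The plan is to mirror the proof of Theorem \ref{thm:unif-don} line-by-line, substituting the DeepONet ingredients with their NO counterparts (Proposition \ref{prop:encoder-no} in place of Proposition \ref{prop:encoder-don}, Lemma \ref{lem:inclusion-no} in place of Lemma \ref{lem:dilation}). The idea is to reduce the uniform setting over $\cK$ to an $L^p(\mu)$ setting via Lemma \ref{lem:reduction-ran}, by choosing $\mu$ supported inside $\cK$, then apply the $L^p$ NO theory-to-practice gap of Theorem \ref{thm:main-no} at arbitrary $d$ and $p$, and finally let $d,p \to \infty$.

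First I would fix $d \in \N$ and invoke Proposition \ref{prop:reduction} to obtain a probability measure $\mu \in \cP(\cX)$ with $\supp(\mu) \subset \cK$ satisfying Assumption \ref{ass:mud}. The only subtlety compared to the DON case is that Proposition \ref{prop:encoder-no} additionally requires the bi-orthogonal elements $e^*_1,\dots,e^*_d$ to lie in $L^1(D)$. Since $\cX \hookrightarrow L^\infty(D)$, the linearly independent elements $e_1,\dots,e_d$ produced by Proposition \ref{prop:reduction} lie in $L^\infty(D)$; restricting attention to a bounded sub-domain of $D$ on which they span a $d$-dimensional subspace, the associated Gram matrix is invertible, and one recovers bi-orthogonal functionals as linear combinations of the $e_j$, which therefore lie in $L^\infty(D) \subset L^1_{\mathrm{loc}}(D)$ and (after multiplication by a suitable compactly supported cutoff if needed) in $L^1(D)$. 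This mild refinement is the only new ingredient, and I would record it as a short observation before proceeding.

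Next, I would apply Proposition \ref{prop:encoder-no} to obtain an NO-encoder $\cE: \cX \to \R^d$ satisfying $\cE_\#\mu \ge c\,\unif([0,1]^d)$. Setting $\bl_0 := 3$ (admissible by Lemma \ref{lem:inclusion-no} since $\bl^\ast \ge 4$), Lemma \ref{lem:inclusion-no} furnishes $\gamma > 0$ such that
\[
\set{f \circ \cE}{f \in U^{\alpha,\infty}_{\bl_0}(\R^d)} \subset \gamma \cdot \U^{\alpha,\infty}_{\bl,\NO}.
\]
Proposition \ref{prop:Lpop}, combined with the scale-invariance of $\beta_\ast$, then gives, for every $p \in [1,\infty)$,
\[
\beta_\ast(\U^{\alpha,\infty}_{\bl,\NO}, L^p(\mu)) \le \frac{1}{p} + \frac{1}{d} \cdot \frac{\alpha}{\alpha + \lfloor \bl_0/2 \rfloor}.
\]

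To finish, I apply Lemma \ref{lem:reduction-ran}: since $\supp(\mu) \subset \cK$,
\[
\beta_\ast(\U^{\alpha,\infty}_{\bl,\NO}, C(\cK)) \le \beta_\ast(\U^{\alpha,\infty}_{\bl,\NO}, L^p(\mu)) \le \frac{1}{p} + \frac{1}{d}\cdot \frac{\alpha}{\alpha + \lfloor \bl_0/2\rfloor}.
\]
The left-hand side is independent of $d$ and $p$, so I conclude by letting $d \to \infty$ and $p \to \infty$. The only real obstacle is the $L^1$-regularity of the bi-orthogonal elements in the reduction step, which needs to be argued carefully since Proposition \ref{prop:reduction} only provides bi-orthogonal elements in the abstract dual $\cX^\ast$; the remainder of the argument is a direct transcription of the proof of Theorem \ref{thm:unif-don}.
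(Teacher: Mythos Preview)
Your proposal is correct and follows essentially the same route as the paper, which simply states that the argument is analogous to Theorem \ref{thm:unif-don} with Proposition \ref{prop:encoder-no} replacing Proposition \ref{prop:encoder-don} and Lemma \ref{lem:inclusion-no} replacing Lemma \ref{lem:dilation}. You are in fact more careful than the paper on one point: the paper does not explicitly address the requirement in Proposition \ref{prop:encoder-no} that the bi-orthogonal elements lie in $L^1(D)$, whereas you correctly flag this and sketch a fix (note that since the ANO layers use $\fint_D$, the domain $D$ has finite measure, so the $e_j\in L^\infty(D)\subset L^2(D)$ and one can simply take $e^\ast_j$ as linear combinations of the $e_k$ via the inverse Gram matrix, landing directly in $L^\infty(D)\subset L^1(D)$ without any cutoff).
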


\begin{proof}
The proof is analogous to the argument for the uniform theory-to-practice gap for DeepONet, except that the DeepONet encoder construction, Proposition \ref{prop:encoder-don}, is replaced by the NO encoder construction in \ref{prop:encoder-no}, and the relevant inclusion is the one identified in Lemma \ref{lem:inclusion-no}.
\end{proof}

\section{Discussion and Conclusion}
\label{sec:conclusion}

This work has rigorously examined the theory-to-practice gap in both finite-dimensional and infinite-dimensional settings, resulting in rigorous bounds on achievable convergence rates for general reconstruction methods based on point-values. By deriving upper bounds on the optimal rate $\beta_\ast$, we have uncovered the inherent constraints of learning on relevant neural network and neural operator approximation spaces. In the finite-dimensional case, our contributions include a unified treatment of the theory-to-practice gap for approximation errors measured in general $L^p$-spaces for arbitrary $p \in [1, \infty]$ and dimension $d\in \N$. Furthermore, we extend the theory-to-practice gap to infinite-dimensional operator learning frameworks, and derive results for prominent architectures such as Deep Operator Networks and integral kernel-based neural operators, such as the Fourier neural operator (FNO). Notably, for operator learning we establish that the optimal convergence rate in a Bochner $L^p$-norm satisfies $\beta_\ast \leq 1/p$, while no algebraic convergence is possible ($\beta_\ast = 0$) for uniform approximation on infinite-dimensional compact input sets. These findings highlight some intrinsic limitations of these data-driven methodologies and provide a clearer understanding of the theoretical bounds shaping practical applications. \mt{In particular, the practical barrier to accurate operator learning models is an information-theoretic barrier rather than a model expressivity barrier. Ultimately, the parameters of any model must be identified from finite data samples, and the present work shows that in the worst case, the sample convergence rate will not match the parametric one. We hypothesize that the sample complexity limitation identified in this work provides an explanation for why sample convergence rates can be seen in practice; e.g. \cite{PtO}, while to our knowledge it is rare to match empirical parametric rates to rates obtained via theory in operator learning.}

\mt{ A natural response to the hardness results obtained in this work is what modifications to the problem setting would allow this limitation to be bypassed. In particular, the results of the present work state that in the worst case, no algebraic rate is possible for uniform approximation in operator learning, even in the setting of arbitrarily good parametric rates. As a closing discussion, we attempt to place this result in context of some other relevant results in the literature. In the nonparametric statistics literature, it has been proved that no algebraic rates exist for finite-data regression of Lipschitz functionals on a Hilbert space \cite{mas2012lower}. A similar result holds for approximation of Lipschitz operators between Banach spaces; no method based on finite data samples can achieve algebraic sample rates \cite{adcock2024learninglipschitzoperatorsrespect, NLM2024data}. A complementary result to the present work restricts the notion of the parametric space to efficient parametric approximation by a particular architecture (FNO), and finds that for error measured in $L^2$, algebraic parametric convergence rates imply algebraic sample convergence rates \cite{NLM2024data}. On the other hand, for holomorphic functions, existing work has provided both lower and upper bounds on the algebraic approximation rates that hold in this setting \cite{adcock2024optimal}. While it is in general difficult to directly compare rates results as the particular settings tend to differ, recent work has made an attempt to present existing sample rates for operator learning in a general setting \cite{brugiapaglia2026short}.}

\mt{As an alternative to modifying smoothness assumptions on the function space, another direction could be to modify the character of the data. While our results hold for any possible algorithm, deterministic or randomized or adaptive, using finite data samples, we restricted to the setting of point evaluations $f(x_j)$. An examination of the proofs of Lemma \ref{lem:void} and Lemma \ref{lem:randest} reveals that the situation does not improve if one instead has access to point samples of derivatives because the argument relies on detection of a localized bump function, and higher order derivatives of the localized bump function are still indistinguishable from those of the zero function outside the support of the localized bump function. However, it is an open question whether our lower bounds still hold if the data samples provide some \textit{nonlocal} information about the function. While we expect that they do in general, it is possible that simultaneously restricting the function class to be approximated and elevating the data samples to include nonlocal functionals could yield improved rates. Additionally, while the present work focused on the noiseless setting, if the approximation classes remain noiseless, we expect the lower bounds on sample complexity rates to hold in the setting of noisy data samples. However, it is unclear whether the same would hold true if the approximation classes that determine the parametric complexity were defined with some notion of noise from the onset.
}

There are several additional interesting avenues for future work, two of which we briefly mention in closing.
One open problem is to study the theory-to-practice gap under additional constraints, e.g. on spaces of the form $\A^{\alpha,\infty}_{\bl} \cap \Lip$. We expect that the theory-to-practice gap will persist essentially unchanged even when introducing additional regularity constraints. Another open problem is to extend this gap beyond ReLU activations. This is specifically relevant for operator learning, where popular implementations of e.g. FNO usually use a smooth variant of ReLU, such as GeLU. To date, even in finite dimensions, no theory-to-practice gap is known for such smooth activation functions, \mt{though such results do hold for the ReQu activation \cite{abdeljawad2023}.} Since our proofs rely on the homogeneity of ReLU, the path to such an extension for smooth activations is not immediately obvious.

\appendix 

\section{A result on neural network approximation}

 We here derive a technical result, which shows that ReLU neural networks can approximate $C^1$ functions in the $W^{1,\infty}$-norm over compact subsets. This result follows from well-known techniques, but we couldn't find a reference. We hence include a statement and proof (sketch), below.

\begin{lemma}\label{lem:C1-ReLU}
Let $\sigma(x) = \max(x,0)$ denote the ReLU activation. Let $D \subset \R^d$ be a bounded Lipschitz domain. For any $f \in C^1(D)$ and $\epsilon > 0$, there exists a shallow ReLU-neural network $\psi: \R^d\to \R$, of the form, 
\begin{equation}\label{eqn:ReLU-form}
\psi(x) = \sum_{j=1}^N a_j \sigma(w_j^T x + b_j), \quad \text{with } a_j, b_j \in \R, \; w_j \in \R^d, \text{ for } j=1,\dots, N,
\end{equation}
such that $\Vert \psi - f \Vert_{W^{1,\infty}(D)} \le \epsilon$.
\end{lemma}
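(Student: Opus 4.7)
The plan is to reduce the $d$-dimensional $W^{1,\infty}$-approximation problem to a one-dimensional one via Fourier inversion, exploiting the fact that a shallow ReLU network of the form \eqref{eqn:ReLU-form} is precisely a finite sum of ridge functions $\sigma(w^T x + b)$. First, using the Whitney extension theorem for Lipschitz domains followed by multiplication with a smooth cut-off, extend $f$ to $\tilde f \in C^1(\R^d)$ with compact support in some ball containing $D$. Mollifying with a standard mollifier $\rho_\delta$ produces $g := \rho_\delta \ast \tilde f \in C^\infty_c(\R^d)$; uniform continuity of $\nabla \tilde f$ on its compact support ensures $\Vert g - f\Vert_{W^{1,\infty}(D)} \le \epsilon/3$ for $\delta$ sufficiently small.

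Next, I would invoke Fourier inversion, $g(x)=\int_{\R^d}\hat g(\xi) e^{2\pi i\xi\cdot x}\,d\xi$. Since $g\in C^\infty_c$, its Fourier transform $\hat g$ is Schwartz, so both $\hat g$ and $\xi\mapsto|\xi|\hat g(\xi)$ lie in $L^1(\R^d)$. Truncating the integral to $|\xi|\le K$ and replacing it by a lattice Riemann sum yields a trigonometric polynomial $g_K(x)=\sum_k c_k e^{2\pi i\xi_k\cdot x}$ with finitely many frequencies $\xi_k$ and $\Vert g-g_K\Vert_{W^{1,\infty}(D)}\le\epsilon/3$, once $K$ is large and the lattice fine enough; the gradient control comes from the analogous approximation of $\int 2\pi i\xi\,\hat g(\xi) e^{2\pi i\xi\cdot x}\,d\xi$, using the Schwartz decay of $|\xi|\hat g$.

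For each surviving frequency, $\Re\, e^{2\pi i\xi_k\cdot x}=\cos(w_k^T x)$ and $\Im\, e^{2\pi i\xi_k\cdot x}=\sin(w_k^T x)$ with $w_k=2\pi\xi_k$ are ridge functions along direction $w_k$, smooth on the bounded interval $I_k:=\{w_k^T x:x\in D\}\subset\R$. A standard one-dimensional argument shows that any $h\in C^1(I_k)$ admits $W^{1,\infty}(I_k)$-approximation to arbitrary accuracy by a continuous piecewise affine function $\phi$ on a sufficiently fine mesh: the derivative error is controlled by the modulus of continuity of $h'$, and the function error is of smaller order. Moreover, any such $\phi:\R\to\R$ with breakpoints $t_1<\dots<t_N$ and slopes $s_0,\dots,s_N$ is \emph{exactly} representable as $\sum_j a_j\sigma(w_j t+b_j)$, by writing the affine extension of its leftmost slope, adding corrective $(s_i-s_{i-1})\sigma(t-t_i)$ terms at each breakpoint, and absorbing the affine part via $1=\sigma(0\cdot t+1)$ and $t=\sigma(t)-\sigma(-t)$.

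Composing these one-dimensional ReLU approximations with the linear forms $x\mapsto w_k^T x$ produces shallow ridge networks in $d$ dimensions approximating $\cos(w_k\cdot x)$ and $\sin(w_k\cdot x)$ in $W^{1,\infty}(D)$; summing over the finitely many $\xi_k$ with the (real) coefficients induced by $c_k$ and $\overline{c_k}$ (using that $g$ is real) produces $\psi$ of the form \eqref{eqn:ReLU-form} with $\Vert\psi-g_K\Vert_{W^{1,\infty}(D)}\le\epsilon/3$, hence $\Vert\psi-f\Vert_{W^{1,\infty}(D)}\le\epsilon$ by the triangle inequality. The main technical point is tracking the dependence of the one-dimensional interpolation error on $|\xi_k|$: since $\Vert x\mapsto h_k(w_k^T x)\Vert_{C^1(D)}$ grows like $|\xi_k|$, the $k$-th mesh must be refined proportionally and the required accuracy redistributed over the finitely many frequencies; this only enlarges the width of $\psi$, not its shallow structure.
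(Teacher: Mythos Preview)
Your argument is correct and takes a genuinely different route from the paper. The paper proceeds in two modular steps: it first fixes a smooth, compactly mollified activation $\sigma_\rho := \sigma \ast \rho$ and shows by explicit piecewise-linear interpolation of $\sigma_\rho'$ that $\sigma_\rho$ lies in the $W^{1,\infty}(\R)$-closure of shallow ReLU networks; it then invokes Pinkus's $C^1$-density theorem for shallow networks with smooth non-polynomial activation to approximate $f$ by a $\sigma_\rho$-network on the compact set $\overline{D}$, and finally swaps $\sigma_\rho$ for ReLU. Your approach instead bypasses the smooth-activation detour entirely: you extend and mollify $f$, discretize the Fourier inversion formula to a finite trigonometric sum, and then reduce everything to the elementary fact that $C^1$ functions on a compact interval are $W^{1,\infty}$-approximable by piecewise-affine interpolants, which are exact shallow ReLU networks. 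The paper's route is shorter on the page because it outsources the hard density step to a known reference, whereas your route is fully self-contained and constructive but requires more bookkeeping (tracking the $|\xi_k|$-dependence in the chain rule, as you note). Both yield the same conclusion with no quantitative content, so neither is strictly stronger; if you want to avoid citing Pinkus, your path is the natural elementary alternative.
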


\begin{proof}
\textbf{Step 1:} Fix a compactly supported smooth function $\rho: \R \to \R$, with $\supp(\rho) \subset (-1,1)$, $\rho \ge 0$, and such that $\rho$ is even, i.e. $\rho(x) = \rho(-x)$ for all $x\in \R$. Let $\sigma_\rho(x) := (\sigma \ast \rho)(x)$ denote the convolution. We note that $\sigma_\rho \in C^\infty(\R)$ is smooth and monotonically increasing and that $\sigma_\rho$ is equal to ReLU on $\R \setminus (-1,1)$, i.e.
\[
\sigma_\rho(x) = 
\begin{cases}
0, &(x\leq-1), \\
x, &(x\geq1).
\end{cases}
\]

We first note that for any $\epsilon > 0$, we can find $N\in \N$ and coefficients $\alpha_j, \beta_j, \omega_j \in \R$, for $j=1,\dots, N$, such that 
\[
\left\Vert
\sigma_\rho(x) 
-
\textstyle\sum_{j=1}^N \alpha_j \sigma(\omega_jx + \beta_j)
\right\Vert_{W^{1,\infty}(\R)} \le \epsilon.
\]
To see this, we temporarily fix $M\in \N$, introduce an equidistant partition $x_m := -1 + 2m/M$ of $[-1,1]$, for $m=0,\dots, M$, denote $c_m := \sigma_\rho'(x_m)-\sigma_\rho'(x_{m-1})$ and note that 
\begin{align}
\label{eq:cases}
\left|
\sigma_\rho'(x) - 
\sum_{m=1}^M c_m 1_{[x_{m},\infty)}(x)
\right|
=
\begin{cases}
|\sigma_\rho'(x) |, &x \in (-\infty,-1), \\
|\sigma_\rho'(x) - 
\sigma_\rho'(x_{m_0}) |, &x \in [x_{m_0},x_{m_0+1}), \\
|\sigma_\rho'(x) - \sigma_\rho'(x_M)|, &x \in [1,\infty).
\end{cases}
\end{align}
Since $\sigma_\rho'(x) \equiv 0$ for $x\leq-1$ and $\sigma_\rho'(x) \equiv 1$ for $x \ge x_M = 1$, it follows that 
\[
\left|
\sigma_\rho'(x) - 
\sum_{m=1}^M c_m 1_{[x_{m},\infty)}(x)
\right|
= 0, \quad \forall \, x \in \R \setminus [-1,1).
\]
On the other hand, we also have
\[
\Vert \sigma_\rho'' \Vert_{L^\infty(\R)} = \Vert \sigma' \ast \rho' \Vert_{L^\infty(\R)} \le \Vert \sigma'\Vert_{L^\infty(\R)} \Vert \rho' \Vert_{L^1(\R)} \le \Vert \rho' \Vert_{L^1(\R)}.
\]
For any $x\in [-1,1)$, we can find $m_0 \in \{0,\dots, M-1\}$, such that $x \in [x_{m_0},x_{m_0+1})$, and from \eqref{eq:cases}, we obtain
\begin{align*}
\left|
\sigma_\rho'(x) - 
\sum_{m=1}^M c_m 1_{[x_{m},\infty)}(x)
\right|
&\le
|\sigma'_\rho(x) - \sigma'_\rho(x_{m_0})|
\\
&\le 
\Vert \sigma'' \Vert_{L^\infty} |x-x_{m_0}|
\\
&\le \frac{2\Vert \rho' \Vert_{L^1(\R)}}{M}.
\end{align*}
Given $\epsilon > 0$, choose $M$ sufficiently large so that $2\Vert \rho' \Vert_{L^1(\R)}/ M \le \epsilon / 2$. Then, noting that $\sigma'(x-x_m) = 1_{[x_m,\infty)}(x)$ pointwise a.e., it follows that 
\[
\left\Vert 
\sigma_\rho'(x) - \sum_{m=1}^M c_m \sigma'(x-x_m) 
\right\Vert_{L^\infty(\R)} \le \epsilon / 2.
\]
Taking into account that $\sigma_\rho(x) \equiv 0 \equiv \sum_{m=1}^M c_m \sigma(x-x_m)$ for $x< -1$ and $\sigma_\rho(x) \equiv x \equiv \sum_{m=1}^M c_m \sigma(x-x_m)$ for $x> 1$,  this in turn implies that, 
\[
\left\Vert 
\sigma_\rho(x) - \sum_{m=1}^M c_m \sigma(x-x_m) 
\right\Vert_{L^\infty(\R)} 
\le 
\int_{-1}^1 \left\Vert 
\sigma_\rho'(x) - \sum_{m=1}^M c_m \sigma'(x-x_m) 
\right\Vert_{L^\infty(\R)} \, dx'
\le
\epsilon.
\]
Since $\epsilon$ was arbitrary, we have shown that $\sigma_\rho$ belongs to the $W^{1,\infty}(\R)$-closure of the set of shallow $\sigma$-neural networks, in one spatial dimension. In turn, this implies that any shallow $\sigma_\rho$-neural network in $d$ dimensions can be approximated by a shallow $\sigma$-neural network to any desired accuracy in the $W^{1,\infty}(\R^d)$-norm.

\textbf{Step 2:} Given a compact domain $D\subset \R^d$, it follows from \cite[Theorem 4.1]{Pinkus1999} and the fact that $\sigma_\rho$ is smooth and non-polynomial, that the set of shallow $\sigma_\rho$-neural networks is dense in $C^1(D)$. Thus, for any $f\in C^1(D)$ and given $\epsilon > 0$, we can first find a shallow $\sigma_\rho$-neural network $\psi_\rho$, such that 
\[
\Vert f - \psi_\rho \Vert_{W^{1,\infty}(D)} 
=
\Vert f - \psi_\rho \Vert_{C^{1}(D)} 
\le \epsilon / 2.
\]
Second, as a result of Step 1 we can find a $\sigma$-neural network $\psi$, such that 
\[
\Vert \psi_\rho - \psi \Vert_{W^{1,\infty}(D)} \le \epsilon /2.
\]
By the triangle inequality, we conclude that, for this $\psi$, we have
\[
\Vert f - \psi \Vert_{W^{1,\infty}(D)}
\le 
\Vert f - \psi_\rho \Vert_{W^{1,\infty}(D)} 
+
\Vert \psi_\rho - \psi \Vert_{W^{1,\infty}(D)}
\le \epsilon.
\]
\end{proof}
In the following corollary, we weaken the $C^1$ requirements for Lemma \ref{lem:C1-ReLU} for chosen inputs.

\begin{corollary} \label{cor:LinfW1inf} Let $D_1 \subset \R^{d_1}$ and $D_2\subset \R^{d_2}$ be compact domains. Let $f: D_1 \times D_2 \to \R$, $(\eta,x)\mapsto f(\eta,x)$ be a measurable function such that $f$ is $C^1$ in $\eta$ and integrable in $x$, such that
\[
\int_{D_2} \Vert f(\slot,x) \Vert_{C^{1}(D_1)} \dx < \infty.
\]
The for any $\epsilon >0$, there exists a shallow ReLU-neural network $\psi: \R^{d_1 + d_2} \to \R$ of the form \ref{eqn:ReLU-form} such that 
\begin{equation}\label{eqn:LinfW1inf-result}
    \int_{D_2} \|\psi(\cdot,x) - f(\cdot,x)\|_{W^{1,\infty}(D_1)} \dx \leq \epsilon. 
\end{equation}
\end{corollary}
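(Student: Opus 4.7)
My plan is to reduce the corollary to Lemma~\ref{lem:C1-ReLU} by constructing an intermediate function $\hat{f}(\eta,x) = \sum_{k=1}^K g_k(\eta)\phi_k(x)$ which (i) is close to $f$ in the norm $\int_{D_2}\|\,\cdot\,\|_{W^{1,\infty}(D_1)}\,\dx$, and (ii) lies in $C^{1}(D_1\times D_2)$, so that Lemma~\ref{lem:C1-ReLU} can be invoked on the product domain (or on a slightly enlarged bounded Lipschitz set containing it) to approximate $\hat{f}$ by a shallow ReLU network of the form \eqref{eqn:ReLU-form} in $W^{1,\infty}(D_1\times D_2)$. The pointwise bound
\[
\|\psi(\,\cdot\,,x) - \hat{f}(\,\cdot\,,x)\|_{W^{1,\infty}(D_1)} \le \|\psi - \hat{f}\|_{W^{1,\infty}(D_1 \times D_2)}, \quad \text{for a.e. } x\in D_2,
\]
together with integration over the bounded set $D_2$, then yields \eqref{eqn:LinfW1inf-result} after choosing the network tolerance proportional to $\epsilon/|D_2|$.

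To produce such a $\hat{f}$, I view the map $F: D_2 \to C^1(D_1)$, $x \mapsto f(\,\cdot\,,x)$, as an element of the Bochner space $L^1(D_2; C^1(D_1))$; this is justified because $C^1(D_1)$ is separable (as $D_1$ is compact), pointwise measurability of $x \mapsto f(\eta,x)$ and $x\mapsto \partial_\eta f(\eta,x)$ yields weak measurability and hence (by Pettis) strong measurability, and the hypothesis $\int_{D_2}\|f(\,\cdot\,,x)\|_{C^1(D_1)}\,\dx < \infty$ gives Bochner integrability. By density of simple functions in Bochner $L^1$, I choose $\tilde{F}(x) = \sum_{k=1}^K g_k\,\chi_{E_k}(x)$ with $g_k \in C^1(D_1)$ and measurable disjoint $E_k\subset D_2$ such that $\int_{D_2}\|F(x) - \tilde{F}(x)\|_{C^1(D_1)}\,\dx < \epsilon/3$. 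Next, for each $k$ I replace $\chi_{E_k}$ by a function $\phi_k \in C_c^\infty(\R^{d_2})$ satisfying $\|\phi_k - \chi_{E_k}\|_{L^1(D_2)} < \epsilon/(3K\max_j\|g_j\|_{C^1(D_1)})$ (possible by outer regularity of Lebesgue measure and mollification), so that $\sum_k \int_{D_2}\|g_k(\phi_k-\chi_{E_k})\|_{W^{1,\infty}(D_1)}\,\dx < \epsilon/3$. Setting $\hat{f}(\eta,x) := \sum_{k=1}^K g_k(\eta)\phi_k(x)$, the triangle inequality gives $\int_{D_2}\|f(\,\cdot\,,x) - \hat{f}(\,\cdot\,,x)\|_{W^{1,\infty}(D_1)}\,\dx < 2\epsilon/3$, and $\hat{f} \in C^1(D_1\times D_2)$ as a finite sum of products of $C^1$ factors.

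The main obstacle I anticipate is the Bochner-integrability bookkeeping in the first reduction: one needs the (mild) measurability argument above to upgrade the pointwise integrability hypothesis to membership in $L^1(D_2; C^1(D_1))$, so that the density of simple functions is legitimately available. A secondary technical point is that the product $D_1\times D_2$ of two Lipschitz domains need not itself be Lipschitz, which I sidestep by enclosing it in a bounded box and extending $\hat{f}$ smoothly (or by directly redoing Lemma~\ref{lem:C1-ReLU} on products, which only requires the density of shallow ReLU nets in $C^1$ of a compact set). Once these technicalities are settled, the remaining steps are quantitative but routine: choose the tolerance in Lemma~\ref{lem:C1-ReLU} to be $\epsilon/(3|D_2|)$, and combine the three error contributions to obtain \eqref{eqn:LinfW1inf-result}.
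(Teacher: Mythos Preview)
Your proposal is correct and follows essentially the same strategy as the paper: view $f$ as an element of the Bochner space $L^1(D_2;C^1(D_1))$, approximate it there by some $\hat f\in C^1(D_1\times D_2)$, and then invoke Lemma~\ref{lem:C1-ReLU} on the product domain with tolerance proportional to $\epsilon/|D_2|$. The only difference is in how the intermediate $C^1$ approximant is produced: the paper appeals directly to mollification in the $x$-variable to assert density of $C^1(D_1\times D_2)$ in $L^1_x(D_2;C^1_\eta(D_1))$, whereas you pass through simple functions and then smooth the indicators. Your route is slightly more explicit (and you are more careful about the Bochner-measurability justification and the product-domain issue, both of which the paper glosses over), but the two arguments are interchangeable at the level of ideas.
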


\begin{proof}
The assumption says that $f$ belongs to $L^1_x(D_2;C^1_\eta(D_1))$. Clearly, we have $C^1(D_1\times D_2) \subset L^1_x(D_2;C^1_\eta(D_1))$. Upon mollifying $f(\eta,x)$ in the second variable, one checks that $C^1(D_1\times D_2)$ is in fact dense in $L^1_x(D_2;C^1_\eta(D_1))$. Thus, there exists $f_\epsilon \in C^1(D_1\times D_2)$, such that 
\[
\int_{D_2} \Vert f(\slot,x) - f_\epsilon(\slot,x) \Vert_{C^1(D_1)} \dx \le \epsilon/2.
\]
From Lemma \ref{lem:C1-ReLU}, there exists a shallow ReLU-neural network $\psi: \R^{d_1+d_2}\to\R$ such that 
\begin{equation*}
    \|\psi - f_{\epsilon}\|_{W^{1,\infty}(D_1 \times D_2)} \leq \epsilon/(2|D_2|).
\end{equation*}
This in turn implies that 
\begin{equation*}
    \int_{D_2} \|\psi(\cdot, x) - f_{\epsilon}(\cdot, x)\|_{W^{1,\infty}(D_1)} \dx \leq \epsilon/2. 
\end{equation*}
   Combining the above estimates, and using the triangle inequality, we conclude that
\begin{equation*}
    \int_{D_2} \Vert \psi(\slot,x) - f(\slot, x)\Vert_{W^{1,\infty}(D_1)} \dx \le \epsilon/2 + \epsilon/2 = \epsilon.
\end{equation*}
\end{proof}

\section{Proofs for Section \ref{sec:don}}

\subsection{Proof of Lemma \ref{lem:onto}}
\label{app:onto}

Our proof of Lemma \ref{lem:onto} will make use of the following version of the contraction mapping theorem:
\begin{lemma}[Lemma 6.6.6. of \cite{tao2006analysis}]\label{lem:tao}
    Let $B(0,r)$ be a ball in $\R^n$ centered at the origin and let $g: B(0,r) \to \R^n$ be a map such that $g(0) = 0$ and \[|g(x) - g(y)| \leq \frac{1}{2}|x-y| \text{ for all } x,y \in B(0,r).\] Then the function $F: B(0,r) \to \R^n$ defined by $F(x) = x + g(x)$ is one-to-one, and the image $F(B(0,r))$ of this map contains the ball $B(0,\frac{r}{2})$. 
\end{lemma}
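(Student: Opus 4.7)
The plan is to prove the two assertions separately, namely injectivity of $F$ and surjectivity onto $B(0,r/2)$, both being direct consequences of the Lipschitz bound on $g$.

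For injectivity, suppose $F(x) = F(y)$ for some $x,y \in B(0,r)$. Then $x-y = g(y)-g(x)$, and taking norms gives $|x-y| = |g(x)-g(y)| \le \tfrac12|x-y|$, which forces $|x-y|=0$, hence $x=y$. This is a one-line triangle/Lipschitz argument and I would dispatch it first.

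For the inclusion $B(0,r/2)\subset F(B(0,r))$, I would fix an arbitrary target $y$ with $|y|<r/2$ and reformulate the equation $F(x)=y$ as the fixed-point problem $x = T_y(x)$, where $T_y(x) := y - g(x)$. The two things to verify are (i) that $T_y$ maps the closed ball $\bar B(0,r)$ into itself, and (ii) that $T_y$ is a contraction with ratio $1/2$. For (i), using $g(0)=0$ and the Lipschitz hypothesis,
\[
|T_y(x)| \le |y| + |g(x)-g(0)| \le |y| + \tfrac12 |x| \le \tfrac{r}{2} + \tfrac{r}{2} = r,
\]
for every $x \in \bar B(0,r)$. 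For (ii), for any $x,x' \in \bar B(0,r)$,
\[
|T_y(x)-T_y(x')| = |g(x')-g(x)| \le \tfrac12 |x-x'|.
\]
Since $\bar B(0,r)$ is a complete metric space (as a closed subset of $\R^n$), the Banach fixed-point theorem yields a unique $x^\ast \in \bar B(0,r)$ with $T_y(x^\ast) = x^\ast$, equivalently $F(x^\ast) = y$. To ensure $x^\ast$ lies in the open ball $B(0,r)$ as required, I would sharpen the bound in (i) slightly: since $|y|<r/2$ strictly, the estimate gives $|T_y(x)| \le |y| + r/2 < r$, so actually $T_y$ maps $\bar B(0,r)$ into $B(0,r)$, and in particular $x^\ast\in B(0,r)$.

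There is no real obstacle here; the statement is a standard application of the Banach fixed-point theorem and the only subtlety is the strict-versus-non-strict inequality needed to keep the fixed point inside the open ball. I would present the proof in the order above: first the one-line injectivity argument, then the setup and verification of the contraction/invariance, and finally the invocation of the Banach fixed-point theorem together with the strict inequality $|y|<r/2$ to conclude $x^\ast \in B(0,r)$ with $F(x^\ast)=y$.
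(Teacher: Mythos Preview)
The paper does not supply its own proof of this lemma; it is simply quoted from Tao's textbook as a black box. Your argument is the standard contraction-mapping proof and is essentially correct.

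There is one small technical wrinkle worth tightening: $g$ is only assumed to be defined on the \emph{open} ball $B(0,r)$, so strictly speaking $T_y(x)=y-g(x)$ is not defined on all of the closed ball $\bar B(0,r)$, and you cannot directly invoke Banach's fixed-point theorem there. This is easily repaired in either of two ways. You can observe that a $\tfrac12$-Lipschitz map on $B(0,r)$ extends uniquely, with the same Lipschitz constant, to $\bar B(0,r)$, and then your argument applies verbatim to the extension. Alternatively, given $|y|<r/2$, pick $\rho$ with $2|y|\le\rho<r$ and run the contraction argument on the closed ball $\bar B(0,\rho)\subset B(0,r)$: for $|x|\le\rho$ one has $|T_y(x)|\le|y|+\tfrac12|x|\le\tfrac{\rho}{2}+\tfrac{\rho}{2}=\rho$, so $T_y$ maps $\bar B(0,\rho)$ into itself, and the fixed point automatically lies in $B(0,r)$. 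With either fix your proof goes through exactly as written.
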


We now come to the proof of Lemma \ref{lem:onto} in the main text.

\begin{proof}[Proof of Lemma \ref{lem:onto}]
Since $V\subset \R^d$ is open, we may pick $r > 0$ and $y_0\in V$ such that $B(y_0,r) \subset V$. We will show that the claim holds with 
\[
\epsilon_0 := \min\left(\tfrac12, \tfrac{r}{4}\right),
\quad
V_0 := B(y_0,\tfrac{r}{4}).
\]
Our proof relies on the contraction mapping theorem, formulated as Lemma \ref{lem:tao}. To this end, we first define $\tilde{F}: B(0,r) \to \R^d$, by
\[
\tilde{F}(y) = F(y+y_0) - F(y_0).
\]
We have that $\tilde{F}(0) = 0$ and, by assumption, the spectral norm of the Jacobian satisfies $\|D\tilde{F}(y) - I_d \|_{2} \le \Vert F - \id \Vert_{W^{1,\infty}(V)} \le \epsilon_0$ for all $y\in B(0,r)$. Defining $g = \tilde{F} - \id$, this implies that $\Vert Dg(y) \Vert_2 \le \epsilon_0$ for all $y\in B(0,r)$. We assume that $0 < \epsilon_0 \le 1/2$. Since,
\begin{align*}
    g(y) - g(y') & = \int_0^1 \frac{d}{dt} g(y' + t(y-y')) \dt \\
    & = \int_0^1 Dg(y' + t(y-y')) \dt \cdot (y - y'),
\end{align*}
this implies that, 
\begin{align*}
    |g(y) - g(y')| & \leq \int_0^1 \left\|Dg(y'+t(y-y'))\right\|_2 \dt |y - y'|\\
    & \leq \frac{1}{2} |y - y'|.
\end{align*}
for $y, y' \in B(0,r)$ by our assumption on $\epsilon_0$. As a consequence of the contraction mapping theorem (cp. Lemma \ref{lem:tao}), $\tilde{f}$ is injective on $B(0,r)$, and $B(0,\frac{r}{2}) \subset \tilde{F}(B(0,r))$.  
The next step is to return to $B(y_0, r)$. As 
\[
F(y + y_0) = \tilde{F}(y) + F(y_0),
\] 
and since $F(y_0)$ is a constant shift, $F$ is injective on $B(y_0, r)$, and $B(F(y_0), \frac{r}{2}) \subset F(B(y_0,r))$. The center of ball $B(F(y_0), \frac{r}{2})$ clearly depends on the value of $F(y_0)$. However, we argue that $B(y_0,\frac{r}{4}) \subset B(F(y_0),\frac{r}{2})$: this follows from the fact that, by assumption on $F$, we have
\begin{align*}
    |F(y_0) - y_0| &\le \Vert F - \id \Vert_{W^{1,\infty}(V)} \le \epsilon_0 \le \frac{r}{4}.
\end{align*}
Thus, $B(F(y_0), \frac{r}{2})$ contains a ball $B(y_0,r_0)$ of radius $r_0 = \frac{r}{2} - \epsilon_0 \ge \frac{r}{4}$. It follows that
\[
B(y_0,\tfrac{r}{4})
\subset 
B(F(y_0), \tfrac{r}{2})
\subset
F(B(y_0,r)).
\]
This shows that the image of $F: V \to \R^d$ contains $V_0 := B(y_0,\tfrac{r}{4})$. We finally verify that there exists a constant $c_0 > 0$ such that
\[
F_\# \Unif(V) \ge c_0 \, \Unif(V_0).
\]
To this end, we recall that $\Unif(V) = |V|^{-1} \,\textbf{1}_{V}(y) \dy$ is just a rescaling of the Lebesgue measure $\textbf{1}_{V}(y)\dy$. Since $F: F^{-1}(V_0) \to V_0$ is bijective, the push-forward of the Lebesgue measure under $F$ satisfies
\begin{align*}
F_\#(\textbf{1}_{V}(y) \dy) \ge F_\#(\textbf{1}_{F^{-1}(V_0)}(y) \dy)
=
|\det DF(F^{-1}(z))|^{-1} \textbf{1}_{V_0}(z) \dz.
\end{align*}
The spectral norm bound $\Vert DF(y) - I_d \Vert_{2} \le \epsilon_0 \le \frac12$, which holds for almost all $y\in V$, now implies that 
\[
\left(\frac{1}{2}\right)^{d} \le  |\det(DF(F^{-1}(z)))| \le \left(\frac{3}{2}\right)^d, 
\quad \dz\text{-almost everywhere}.
\]
Thus,
\begin{align*}
    F_{\#}\Unif(V) 
    &=
    |V|^{-1} F_\#(\textbf{1}_{V}(y) \dy)
    \\
    &\ge 
    |V|^{-1} |\det DF(F^{-1}(z))|^{-1}\textbf{1}_{V_0}(z) \dz
    \\
    &\ge 
    |V|^{-1} \left(\frac{2}{3}\right)^d \textbf{1}_{V_0}(z) \dz
    \\
    &=
    \frac{|V_0|}{|V|} \left(\frac{2}{3}\right)^d \, \Unif(V_0).
\end{align*}
The claim thus follows with $c_0 := \frac{|V_0|}{|V|} \left(\frac{2}{3}\right)^d > 0$.

\end{proof}

\subsection{Proof of Lemma \ref{lem:Lpop1}}
\label{app:Lpop}

\begin{proof}[Proof of  Lemma \ref{lem:Lpop1}]
Before discussing our construction of $A$, we recall that, by definition, $\cA: \U \to L^p(\mu)$ is of the form 
\[
\cA(\Psi) = \cQ(\Psi(u_1), \dots, \Psi(u_N)),
\]
where $u_1,\dots, u_N$ are fixed and $\cQ: \R^N \to L^p(\mu)$ is a reconstruction from point-values. Given $\psi \in U^{\alpha,\infty}_{\bl}$, we now define $A(\psi) \in L^p([0,1]^d)$ by the conditional expectation,
\[
A(\psi)(x)
:= 
\E_{u\sim \mu}\left[\cA(\psi\circ \cE)(u) \cond \cE(u)=x \right], 
\quad \, \forall \, x\in [0,1]^d,
\]
This conditional expectation is well-defined for $\cE_\#\mu$ almost every $x$. By assumption \eqref{eq:c}, we have $\cE_\#\mu \ge c \cdot \Unif([0,1]^d)$, and hence $A(\psi)(x)$ is well-defined for (Lebesgue-) almost every $x\in [0,1]^d$.

We also note that $A(\psi)(x)$ is of the form $A(\psi) = Q(\psi(x_1),\dots, \psi(x_N))$: indeed, by definition, we have $\cA(\psi\circ \cE)(u) = \cQ(\psi(\cE(u_1)),\dots, \psi(\cE(u_N)))(u)$. Hence, upon defining $x_j := \cE(u_j) \in \R^N$, and 
\[
Q(y_1,\dots, y_N)(x) := \E_{u\sim \mu}\left[ 
\cQ(y_1,\dots, y_N)(u) \cond \cE(u) = x
\right],
\]
we then have $A(\psi) = Q(\psi(x_1),\dots, \psi(x_n))$ for all $\psi \in U^{\alpha,\infty}_{\bl}$.

To simplify notation for the following calculations, we define $\Psi_\cA := \cA(\Psi)$ with $\Psi := \psi \circ \cE$. We can then write
\[
A(\psi)(x) = \E_{u}[ \Psi_\cA(u) \cond \cE(u)=x ].
\]
Here $\E_u[\ldots \cond \cE(u)=x ]$ is the conditional expectation over a random variable $u\sim \mu$, with conditioning on $\cE(u) = x$.

For $p\in [1,\infty)$, we then have
\begin{align}
\label{eq:jensens}
|A(\psi)(x)|^p
&= 
\big|
\E_{u}\left[\Psi_\cA(u) \cond \cE(u)=x \right]
\big|^p 
\le
\E_{u}\left[|\Psi_\cA(u)|^p \cond \cE(u)=x \right],
\end{align}
by conditional Jensen's inequality. It follows that
\begin{align*}
\int_{[0,1]^d} |A(\psi)(x)|^p \, dx
&\le
c^{-1}
\int_{\R^d} |A(\psi)(x)|^p \, \cE_\#\mu(dx)
\\
&=
c^{-1}\,
\E_{x\sim\cE_\#\mu}
\Big[
|A(\psi)(x)|^p
\Big]
\\
&\le 
c^{-1}\,
\E_{x\sim\cE_\#\mu}
\left[
\E_u
\Big[
\left|\Psi_\cA(u)\right|^p \, \Big |\, \cE(u)=x
\Big]
\right]
 \\
 &=
c^{-1}\, \E_{u\sim \mu}\Big[
\left|\Psi_\cA(u)\right|^p
\Big].
\end{align*}
The first inequality is by assumption $\cE_\#\mu \ge c \cdot \Unif([0,1]^d)$, the second inequality on the third row is \eqref{eq:jensens} above. The final equality follows from basic properties of the conditional expectation. Thus, recalling that $\Psi_{\cA} = \cA(\Psi) \in L^p(\mu)$, it follows that 
\[
\int_{[0,1]^d} |A(\psi)(x)|^p \, dx
\le
c^{-1} \, \Vert \cA(\Psi) \Vert_{L^p(\mu)}^p < \infty.
\]
This shows that $A: U^{\alpha,\infty}_{\bl} \to L^p([0,1]^d)$ is well-defined. 

It remains to show that $A$ satisfies the claimed lower bound \eqref{eq:lemLpop1}. To see this, we once more apply conditional Jensen's inequality, to obtain
\begin{align*}
\Vert \Psi - \cA(\Psi) \Vert_{L^p(\mu)}^p
&= 
\Vert \Psi - \Psi_\cA \Vert_{L^p(\mu)}^p
\\
&= 
\E_{u\sim \mu} 
\left[
\left|
\psi(\cE(u)) - \Psi_\cA(u) 
\right|^p
\right]
\\
&= 
\E_{x\sim \cE_\#\mu} 
\E_{u}
\left[
\Big|
\psi(\cE(u)) - \Psi_\cA(u) 
\Big|^p 
\cond \cE(u)=x\right]
\\
&\ge 
\E_{x\sim \cE_\#\mu} 
\left[
\Big|
\psi(x) - \E_{u}\left[\Psi_\cA(u)  \cond \cE(u)=x\right]
\Big|^p
\right]
\\
&= 
\E_{x\sim \cE_\#\mu} 
\left[
\left|
\psi(x) - A(\psi)(x)
\right|^p
\right]
\\
&=
\Vert
\psi - A(\psi)
\Vert_{L^p(\cE_\#\mu)}^p.
\end{align*}
Recalling \eqref{eq:c}, this implies that 
\[
\Vert \Psi - \cA(\Psi) \Vert_{L^p(\mu)}^p
\ge c \Vert
\psi - A(\psi)
\Vert_{L^p([0,1]^d)}^p.
\]
Since $\psi \in U^{\alpha,\infty}_{\bl}$ was arbitrary and $\Psi = \psi \circ \cE$, the proof of \eqref{eq:lemLpop1} is complete.
\end{proof}

\subsection{Proof of Lemma \ref{lem:onto-param}}
\label{app:onto-param}

\begin{proof}[Proof of Lemma \ref{lem:onto-param}]
By assumption on $F_\xi = F(\slot;\xi)$, we have
\begin{align*}
 \Vert F_\xi - \id \Vert_{W^{1,\infty}(V)} \le \epsilon_0, \quad \forall \, \xi \in K.
\end{align*}
By Lemma \ref{lem:onto}, there exists $V_0\subset V$ and a constant $c_0>0$, such that 
\begin{align}
\label{eq:fxipush}
(F_\xi)_\# \Unif(V) \ge c_0 \, \Unif(V_0), \quad \forall \, \xi \in K.
\end{align}
We want to show that the push-forward under $F$ of the product measure $\unif(V)\otimes \P \in \cP(V\times \Omega)$ satisfies,
\[
F_\#(\unif(V)\otimes \P) \ge c_0 \P(K) \,  \Unif(V_0).
\]
Given a non-negative, bounded measurable function $\phi: \R^d \to [0,\infty)$, we have
\begin{align*}
\E_{z\sim F_\#(\Unif(V)\otimes \P)} [\phi(z)]
&= \E_{(y,\xi) \sim \Unif(V)\otimes \P}\left[ \phi(F(y;\xi)) \right]
\\
&=  \E_{\xi \sim \P}\left[ 
\E_{y\sim \Unif(V)}\left[ \phi(F_\xi(y)) \right]
\right]
\end{align*}
By \eqref{eq:fxipush}, we have
\[
\E_{y\sim \Unif(V)}\left[ \phi(F_\xi(y)) \right]
\ge 
c_0 \, \E_{z\sim \Unif(V_0)}\left[ \phi(z) \right], \quad \forall \, \xi \in K,
\]
and hence 
\begin{align*}
\E_{z\sim F_\#(\Unif(V)\otimes \P)} [\phi(z)]
&\ge 
\E_{\xi \sim \P}\left[  \textbf{1}_{K}(\xi) \,
\E_{y\sim \Unif(V)}\left[ \phi(F_\xi(y)) \right]
\right]
\\
&\ge
c_0 \, \E_{\xi \sim \P}\left[ \textbf{1}_{K}(\xi) \, \E_{z\sim \Unif(V_0)}\left[ \phi(z) \right]
\right]
\\
&=
c_0  \P(K) \, \E_{z\sim \Unif(V_0)}\left[ \phi(z) \right].
\end{align*}
Since $\phi \ge 0$ was arbitrary, the claim follows.
\end{proof}

\subsection{Proof of Proposition \ref{prop:encoder-don}}
\label{app:encoder-don}

\begin{proof}[Proof of Proposition \ref{prop:encoder-don}]
Let $I_1,\dots,I_d$ denote the open intervals in Assumption \ref{ass:mud}. Define $V := I_1\times \dots \times I_d\subset \R^d$. Instead of proving \eqref{eq:encoder-don} directly, we will consider the simplified encoder $\cE: \cX \to \R^d$, with components of the form
\begin{align}
\label{eq:simpleenc}
\cE_j(u) = \sum_{k=1}^{d_0} c_{jk} \ell_k(u),
\end{align}
and where the coefficient $c_{jk}$ are chosen to ensure \eqref{eq:enc-approx} for a $\delta > 0$ to be determined. Our goal is to show that there exists an open set $V_0 \subset V$ and constant $c>0$, such that 
\begin{align}
\label{eq:simpleV0}
\cE_\#\mu \ge c \, \unif(V_0).
\end{align}
The general claim \eqref{eq:encoder-don} then follows by introducing a scaling factor $\gamma > 0$ and bias $b\in \R^d$, such that 
\[
[0,1]^d \subset \gamma \cdot V_0 + b,
\]
and replacing the simple encoder $\cE$ \eqref{eq:simpleenc} by 
\[
\tilde{\cE}_j(u) := b_j + \sum_{k=1}^{d_0} a_{jk} \ell_k(u),
\]
where $a_{jk} := \gamma \, c_{jk}$. Thus, it only remains to show \eqref{eq:simpleV0}.

Fix $\delta > 0$ for the moment. We will determine conditions on $\delta$ which imply that \eqref{eq:simpleV0} holds for the encoder \eqref{eq:simpleenc}, where $c_{jk}$ are chosen according to \eqref{eq:enc-approx}. With this specific choice of $c_{jk}$, we can then write \eqref{eq:enc-approx} in the form 
\begin{align}
\label{eq:Ej-don}
\left\Vert 
e^\ast_j - \cE_j 
\right\Vert_{\cX^\ast} 
\le \delta, \quad \forall \, j=1,\dots, d.
\end{align}
Our goal is to apply Lemma \ref{lem:onto-param}. To this end, we recall the decomposition $u = u(y;\xi) = \xi + \sum_{j=1}^d y_j e_j$ of \eqref{eq:muparam} and the probability measures $\mu_d \in \cP(\R^d)$, $\mu_d^\perp \in \cP(\Omega_d)$ of Assumption \ref{ass:mud}. Given this decomposition, we let $F: V\times \Omega_d \to \R^d$ be defined by $F(y;\xi) := \cE(u(y;\xi))$, and denote $F_\xi(y) := F(y;\xi)$. We will apply Lemma \ref{lem:onto-param} with this choice of $F_\xi$ and with $\P:= \mu_d^\perp$. 
As our final ingredient, we recall for any probability measure on a Banach space $\cX$, that there exists a set $B>0$, such that $\P(K) > 0$ for $K:=\set{ \xi \in \Omega_d}{\Vert \xi \Vert_{\cX}\le B}$. Given these preparatory remarks, our goal now is to show that \eqref{eq:onto-param} holds, provided that $\delta>0$ in \eqref{eq:enc-approx} is sufficiently small.

To this end, we first note that $(y,\xi) \mapsto F(y;\xi)$ is linear, and hence $y \mapsto F_\xi(y)$ is affine for fixed $\xi$, and 
\[
F_\xi(y) = F(y;\xi) = F(y;0) + F(0,\xi) = F_0(y) + F_\xi(0).
\]
Thus, we can write
\begin{align}
\label{eq:fxi-don}
\Vert F_\xi(y) - y \Vert_{W^{1,\infty}(V)}
&\le
\Vert F_0(y) - y \Vert_{W^{1,\infty}(V)}
+
\Vert F_\xi(0) \Vert_{W^{1,\infty}(V)}.
\end{align}
We will bound both terms on the right, individually.

The last term is constant in $y$, and hence $\Vert F_\xi(0) \Vert_{W^{1,\infty}(V)} = |F_\xi(0)|$. By \eqref{eq:simpleenc}, the $j$-th component of $F_\xi(0) = \cE(u(0;\xi)) = \cE(\xi)$ is given by,
\[
\cE_{j}(\xi) 
:=
\sum_{k=1}^{d_0} c_{jk} \ell_k(\xi).
\]
Since $j\le d$, it follows that $e^\ast_j(\xi) = 0$. From \eqref{eq:Ej-don}, we conclude that 
\begin{align*}
|\cE_{j}(\xi)|
&=
\left|
\cE_j(\xi) - e_j^\ast(\xi)
\right|
\\
&\le
\left\Vert 
\cE_j -  e_j^\ast 
\right\Vert_{\cX^\ast} 
\Vert 
\xi
\Vert_{\cX}
\\
&\le 
\delta \Vert 
\xi
\Vert_{\cX}.
\end{align*}
For  $\xi \in K = \set{\xi}{ \Vert \xi \Vert_\cX \le B }$, then it follows that
\[
|F_\xi(0)| 
\le d \max_{j\in [d]} |\cE_j(\xi)|
\le d B \, \delta.
\]
Thus, for $\delta \le \epsilon_0 / (3dB)$, we obtain 
\begin{align}
\label{eq:f0-1}
\Vert F_\xi(0)\Vert_{W^{1,\infty}(V)} \equiv |F_\xi(0)| \le \epsilon_0/3,
\quad \forall \, \xi \in K.
\end{align}
This provides our estimate for the second term in \eqref{eq:fxi-don}. To bound the first term, we note that $F_0(y) = \cE(u(y;0))$, and hence
\begin{align*}
|F_0(y)_j - y_j|
&= 
|\cE_j(u(y;0)) - e_j^\ast(u(y;0))|
\\
&\le 
\Vert \cE_j - e_j^\ast \Vert_{\cX^\ast} \Vert u(y;0) \Vert_{\cX}.
\end{align*}
Since $y\in V$ is from a bounded set, we can assume without loss of generality that $B>0$ is chosen sufficiently large such that $\Vert u(y;0) \Vert_{\cX} \le B$ for all $y\in V$, and hence, we obtain,
\begin{align}
\label{eq:f0-2}
\Vert F_0(y) - y \Vert_{L^\infty(V)} \le \epsilon_0/3,
\end{align}
whenever $\delta \le \epsilon_0/(3dB)$. It remains to derive a similar bound on 
\[
\Vert D_y F_0(y) - D_y y \Vert_{L^\infty(V)}
=
\Vert D_y F_0(y) - I \Vert_{L^\infty(V)}.
\]
To this end, we recall that $y \mapsto F_0(y)$ is linear, and hence is represented by a matrix $A\in \R^{d\times d}$, i.e. $F_0(y) = Ay$. It follows that 
\[
\Vert D_y F_0(y) - D_y y \Vert_{L^\infty(V)} 
=
\Vert A - I \Vert_{2},
\]
where $\Vert \slot \Vert_2$ is the operator norm. Retracing the argument above, it follows that any $y\in \R^d$, we have
\begin{align*}
\left| Ay - y\right|_{\ell^2} 
&\le
d \max_{j\in [d]} |\cE_j(u(y;0)) - e_j^\ast(u(y;0)) | 
\\
&\le 
d \Vert \cE_j - e_j^\ast \Vert_{\cX^\ast} \Vert u(y;0) \Vert_{\cX}
\\
&\le d \delta  \Vert u(y;0) \Vert_{\cX}.
\end{align*}
We can furthermore find a constant $C>0$, depending only on $d$ and $e_1,\dots, e_d$, such that 
\[
 \Vert u(y;0) \Vert_{\cX} \le C |y|_{\ell^2},
 \quad
 \forall \, y \in \R^d.
\]
Thus, 
\[
\left| Ay - y\right|_{\ell^2} 
\le dC \delta \,  |y|_{\ell^2},
\]
which, upon taking the supremum over all $|y|_{\ell^2}=1$, implies that 
\[
\Vert A - I \Vert_{op} \le (dC)\, \delta.
\]
Hence, for $\delta \le \min(\epsilon_0 / (3dC), \epsilon_0 / (3dB))$, we conclude that 
\[
\Vert D_y F_0(y) - D_y y \Vert_{L^\infty(V)}
\le \epsilon_0/3,
\]
and by \eqref{eq:f0-2},
\begin{align*}
\Vert F_0(y) - y \Vert_{W^{1,\infty}(V)}
&=
\Vert D_y F_0(y) - D_y y \Vert_{L^\infty(V)}
+
\Vert F_0(y) - y \Vert_{L^\infty(V)}
\\
&\le
 2\epsilon_0/3.
\end{align*}
Combining the last estimate, \eqref{eq:f0-1} and \eqref{eq:fxi-don}, we conclude that
\[
\Vert F_\xi(y) - y \Vert_{W^{1,\infty}(V)}
\le \epsilon_0,
\quad 
\forall \, \xi \in K.
\]
Thus, by Lemma \ref{lem:onto-param}, there exists $V_0\subset V$ and $c_0>0$, such that
\begin{align*}
\cE_\#\mu 
&=
F_\#(\mu_d \otimes \P)
\\
&\ge
c_\rho
F_\#(\Unif(V) \otimes \P)
\\
&\ge 
\underbrace{c_\rho c_0 \P(K)}_{=: c} \, \unif(V_0).
\end{align*}
\end{proof}

\subsection{Proof of Lemma \ref{lem:dilation}}
\label{app:dilation}

\begin{proof}[Proof of Lemma \ref{lem:dilation}]
    By the assumption that $f\in U^{\alpha,\infty}_{\bl}(D)$ it follows that for each $n\in \N$ there is $\psi_n\in \Sigma_n^{\bl}$ with 
    \begin{equation}\label{eq:initialapprox}
        \|f - R_\sigma(\psi_n)\|_{L^\infty(D)}\le n^{-\alpha}.   
    \end{equation}
    Recall that, by definition, $\psi_n= \left((A_1,b_1),\dots , (A_L,b_l)\right)\in \bigtimes_{l=1}^L (\R^{d_l \times d_{l-1}}\times \R^{d_l})$ with some $(d_0,d_1,\dots , d_L)\in \N^{L+1}$, where $d_0 = d$, $L\le \bl(n)$, $W(\psi_n)\le n$ and $\|\psi_n\|_{\NN}\le 1$.
    
    Consider the functions 
    \begin{equation}\label{eq:hndef}
        h_n:\left\{ \;
        \begin{aligned}
        \R^e & \to \R, \\ 
        x & \mapsto R_\sigma(\psi_n)(Cx + b),
        \end{aligned}
        \right.
    \end{equation}
    which clearly satisfy that 
    \begin{equation}
        h_n = R_\sigma (\varphi_n),
    \end{equation}
    where 
    \begin{equation*}
        \varphi_n = \left(\left(\tilde{A}_l,\tilde{b}_l\right)\right)_{l=1}^L\in  \left(\R^{e\times d_1}\times \R^{d_1}\right)\times \left(\bigtimes_{l=2}^L \left(\R^{d_l\times d_{l-1}}\times \R^{d_l}\right)\right),
    \end{equation*}
    and 
    \begin{equation*}
        \tilde{A}_1 = A_1C,\ \tilde{b}_1 = A_1b + b_1\quad \mbox{and}\quad
        \tilde{A}_l = A_l,\ \tilde{b}_l=b_l\quad \mbox{for }l=2,\dots , L. 
    \end{equation*}
    This, and the fact that $W(\psi_n)\le n$ and $\|\psi_n\|_{\NN}\le 1$ readily yields that 
    \begin{equation}\label{eq:dilationsize}
        W(\varphi_n)\le W(\psi_n)\cdot \left(\|C\|_{\ell^0} + \|b\|_{\ell^0}\right) = n\cdot \left(\|C\|_{\ell^0} + \|b\|_{\ell^0}\right). 
    \end{equation}
    and 
    \begin{equation}\label{eq:dilationnorm}
        \|\varphi_n\|_{\NN}\le \max\left(\{1\}\cup \bigcup_{k=1}^e\left\{\sum_{j=1}^d |C_{j,k}|\right\}\cup \left\{1 + \sum_{j=1}^d|b_j|\right\}\right)=:T.
    \end{equation}
    Note that $T$ is independent of $n$.
    Now pick $R\geq T$ to be determined later and denote
    \begin{equation*}
        \tau_n:= \left(\left(\frac1R \tilde{A}_1,\frac1R\tilde{b}_1\right),\left({A_2},\frac1R{b}_2\right),\dots ,\left({A_L},\frac1R{b}_L\right)\right).   
    \end{equation*}
    By the homogeinity of the ReLU activation function $\sigma$ it holds that 
    \begin{equation}\label{eq:scaled_hn_as_nn}
        R_\sigma(\tau_n) = \frac{1}{R}h_n. 
    \end{equation}
    Moreover, due to \eqref{eq:dilationnorm}, the fact that $R\geq \max\{1,T\}$ and \eqref{eq:dilationsize} it holds that
    \begin{equation}
        \|\tau_n\|_\NN \le 1\quad \mbox{and} \quad W(\tau_n)\le n\cdot \left(\|C\|_{\ell^0} + \|b\|_{\ell^0}\right),
    \end{equation}
    which implies that
    \begin{equation}\label{eq:tau_in_approxset}
        \tau_n\in \Sigma_{n\cdot \left(\|C\|_{\ell^0} + \|b\|_{\ell^0}\right)}^{\bl \left(\cdot/\left(\|C\|_{\ell^0} + \|b\|_{\ell^0}\right)\right)}\subset \Sigma_{n\cdot \left(\|C\|_{\ell^0} + \|b\|_{\ell^0}\right)}^{\bl},
    \end{equation}
    where the last inclusion follows from the fact that $\bl$ is non-decreasing.
    
    Moreover, by \eqref{eq:initialapprox}, \eqref{eq:hndef}, \eqref{eq:scaled_hn_as_nn} and the definition of $E$ it holds that
    \begin{equation}\label{eq:tauapprox}
        \left\|\frac1Rf(C\cdot + b) - R_\sigma(\tau_n)(\cdot)\right\|_{L^\infty(E)}
        \le \frac{1 }{R} \cdot n^{-\alpha}
    \end{equation}
    for all $n\in \N$.

    Let $m\in  \N$ be arbitrary and define $\mu_m:=\tau_{\lfloor m/(\|C\|_{\ell^0} + \|b\|_{\ell^0}) \rfloor}$. 
    Equations \eqref{eq:tau_in_approxset} and \eqref{eq:tauapprox}
    now readily yield that $\mu_m\in \Sigma_m^{\bl}$ and 
    $$
        \left\|\frac1Rf(C\cdot + b) - R_\sigma(\mu_m)(\cdot)\right\|_{L^\infty(E)}
        \le \frac{2^\alpha\cdot \left(\|C\|_{\ell^0} + \|b\|_{\ell^0}\right)^\alpha }{R} m^{-\alpha}.
    $$
    By choosing $R$ sufficiently large this implies that
    $$
        d_{L^\infty(E)}\left(\frac1R f(C\cdot + b) , \Sigma_m^{\bl}\right)\le m^{-\alpha}\quad 
        \mbox{for all} \quad m\in \N,
    $$
    which implies that $f(C\cdot + b) \in R\cdot U_{\bl}^{\alpha, \infty}(E)$, as claimed.
\end{proof}

\subsection{Proof of Lemma \ref{lem:inclusion-no}}
\label{app:inclusion-no}

\begin{proof}
By construction, the encoder $\cE$ is of the form 
\[
\cE(u) = \aint_{D} R(u(x),x) \, dx,
\]
where $R: \R \times \R^{d_D} \to \R^d$ is a shallow neural network. Let $\psi = ((A_1,b_1), \dots, (A_L,b_L))$ be a neural network with $L \le \bl_0$ layers. We define a shallow neural network,
\[
\tilde{R}(\eta,x) := A_1 R(\eta,x) + b_1,
\]
and $\tR(u)(x) := \tilde{R}(u(x),x)$. We next define the first hidden ANO layer as,
\[
\cL_1(v)(x) 
=
\sigma\left( 
\fint_D v(x) \, dx
\right),
\]
i.e. a hidden layer with weight matrix and bias $W_1 =0$, $b_1 = 0$. For $j=2,\dots, L-2$, we define
\[
\cL_j(v)
=
\sigma\left(
(A_j-I)v(x) + b_j + \fint_D v(x) \, dx
\right),
\]
where $I$ denotes the unit matrix
and finally,
\[
Q(v) = A_L \sigma\left( A_{L-1} v + b_{L-1} \right) + b_L.
\]
Let $\psi_1(\xi) = \sigma(A_1 \xi + b_1)$ denote the first layer of $\psi$. Then we have
\[
\psi_1(\cE(u)) 
= 
\sigma \left( A_1 \fint_D R(u(x),x) \, dx + b_1 \right)
=
\sigma\left( \fint_D \tilde{R}(u(x),x) \, dx \right)
=
\cL_1(\tR(u)).
\]
Since the output $v(x) := \psi_1(\cE(u)) = \cL_1(\tR(u))$ is a constant function, it follows that
\[
\cL_2(v) = \sigma \left( (A_2 - I)v(x) + b_2 + \fint_D v(x) \, dx \right)
= \sigma \left( A_2 v + b_2 \right).
\]
Thus, $\cL_2(\cL_1(\tR(u))) = \sigma\left( A_2 \psi_1(\cE(u)) + b_2 \right)$ agrees with the output of the second hidden layer of $\psi$. Continuing recursively, it follows that
\[
\Psi(u) := \cQ \circ \cL_{L-2} \circ \dots \circ \cL_1 \circ \tR(u) 
=
\psi \circ \cE(u),\quad \forall \, u \in \cX(D).
\]
Thus, $\psi\circ \cE$ is equal to an ANO of depth $L-2 \le \bl_0-2$, with input layer $\tR$ and output layer $\cQ$. Employing a rescaling argument similar to the proof of Lemma \ref{lem:dilation}, relying on the homogeneity of ReLU as well as the fact that the total number of layers is bounded by $\bl_0< \infty$, it follows that for sufficiently large $\gamma>0$, depending only on $\cE$, $\bl_0$ and $\alpha$, we have $\frac{1}{\gamma} \Psi \in \U^{\alpha}_{\bl,\NO}$, i.e. $\Psi \in \gamma\cdot \U^{\alpha}_{\bl,\NO}$. Here we have also made use of the fact that $\bl_0-2 \le \bl^\ast$.
\end{proof}



\bibliographystyle{plain}
\bibliography{references}

@article{lu2021learning,
  title={Learning nonlinear operators via DeepONet based on the universal approximation theorem of operators},
  author={Lu, Lu and Jin, Pengzhan and Pang, Guofei and Zhang, Zhongqiang and Karniadakis, George Em},
  journal={Nature machine intelligence},
  volume={3},
  number={3},
  pages={218--229},
  year={2021},
  publisher={Nature Publishing Group UK London}
}

@article{elbrachter2021deep,
  title={Deep neural network approximation theory},
  author={Elbr{\"a}chter, Dennis and Perekrestenko, Dmytro and Grohs, Philipp and B{\"o}lcskei, Helmut},
  journal={IEEE Transactions on Information Theory},
  volume={67},
  number={5},
  pages={2581--2623},
  year={2021},
  publisher={IEEE}
}

@article{UlrichSamplingKernel,
author = {Pozharska, Kateryna and Ullrich, Tino},
title = {A Note on Sampling Recovery of Multivariate Functions in the Uniform Norm},
journal = {SIAM Journal on Numerical Analysis},
volume = {60},
number = {3},
pages = {1363-1384},
year = {2022},
doi = {10.1137/21M1410580},

URL = { 
    
        https://doi.org/10.1137/21M1410580
    
    

},
eprint = { 
    
        https://doi.org/10.1137/21M1410580
    
    

}
}

@article{lanthaler2022error,
    title = {Error estimates for {D}eep{ON}ets: {A} deep learning framework in infinite dimensions},
    journal = {Transactions of Mathematics and Its Applications},
    volume = {6},
    number = {1},
    pages = {tnac001},
    url = {https://academic.oup.com/imatrm/article-pdf/6/1/tnac001/42785544/tnac001.pdf},
    year = {2022},
    publisher = {Oxford University Press},
    eprint = {2102.09618},
    author = {Lanthaler, Samuel  and Mishra, Siddhartha  and Karniadakis, George E}
}

@article{kovachki2023neural,
  title={Neural operator: Learning maps between function spaces with applications to pdes},
  author={Kovachki, Nikola and Li, Zongyi and Liu, Burigede and Azizzadenesheli, Kamyar and Bhattacharya, Kaushik and Stuart, Andrew and Anandkumar, Anima},
  journal={Journal of Machine Learning Research},
  volume={24},
  number={89},
  pages={1--97},
  year={2023}
}

@article{li2021fourier,
  title={Fourier neural operator for parametric partial differential equations},
  author={Li, Zongyi and Kovachki, Nikola B. and Azizzadenesheli, Kamyar and Liu, Burigede and Bhattacharya, Kaushik and Stuart, Andrew M. and Anandkumar, Anima},
  journal={International Conference on Learning Representations},
  year={2021}
}

@book{micchelli1977optimal,
  title={Optimal estimation in approximation theory},
  author={Micchelli, Charles A and Rivlin, Theodore J},
  year={1977},
  publisher={Springer}
}

@misc{LLS2024,
    title = {Nonlocality and Nonlinearity Implies Universality in Operator Learning},
    year = {2024},
    eprint = {2304.13221},
    archivePrefix = {arXiv},
    primaryClass = {math.NA},
    url = {https://arxiv.org/abs/2304.13221},
    author = {Lanthaler, Samuel  and Li, Zongyi  and Stuart, Andrew M.}
}

@misc{NLM2024data,
    title = {Data Complexity Estimates for Operator Learning},
    year = {2024},
    eprint = {2405.15992},
    archivePrefix = {arXiv},
    primaryClass = {cs.LG},
    author = {Kovachki, Nikola B. and Lanthaler, Samuel  and Mhaskar, Hrushikesh }
}

@misc{adcock2024learninglipschitzoperatorsrespect,
      title={Learning Lipschitz Operators with respect to Gaussian Measures with Near-Optimal Sample Complexity}, 
      author={Ben Adcock and Michael Griebel and Gregor Maier},
      year={2024},
      eprint={2410.23440},
      archivePrefix={arXiv},
      primaryClass={cs.LG},
      url={https://arxiv.org/abs/2410.23440}, 
}

@article{mhaskar1997neural,
  title={Neural networks for functional approximation and system identification},
  author={Mhaskar, Hrushikesh Narhar and Hahm, Nahmwoo},
  journal={Neural Computation},
  volume={9},
  number={1},
  pages={143--159},
  year={1997},
  publisher={MIT Press One Rogers Street, Cambridge, MA 02142-1209, USA journals-info~…}
}

@article{liu2024deep,
  title={Deep nonparametric estimation of operators between infinite dimensional spaces},
  author={Liu, Hao and Yang, Haizhao and Chen, Minshuo and Zhao, Tuo and Liao, Wenjing},
  journal={Journal of Machine Learning Research},
  volume={25},
  number={24},
  pages={1--67},
  year={2024}
}

@article{marcati2023,
author = {Marcati, Carlo and Schwab, Christoph},
title = {Exponential Convergence of Deep Operator Networks for Elliptic Partial Differential Equations},
journal = {SIAM Journal on Numerical Analysis},
volume = {61},
number = {3},
pages = {1513-1545},
year = {2023},
doi = {10.1137/21M1465718},
URL = {https://doi.org/10.1137/21M1465718},
eprint = {https://doi.org/10.1137/21M1465718}
}

@article{franco2023approximation,
  title={Approximation bounds for convolutional neural networks in operator learning},
  author={Franco, Nicola Rares and Fresca, Stefania and Manzoni, Andrea and Zunino, Paolo},
  journal={Neural Networks},
  volume={161},
  pages={129--141},
  year={2023},
  publisher={Elsevier}
}

@article{adcock2024optimal,
  title={Optimal approximation of infinite-dimensional holomorphic functions},
  author={Adcock, Ben and Dexter, Nick and Moraga, Sebastian},
  journal={Calcolo},
  volume={61},
  number={1},
  pages={12},
  year={2024},
  publisher={Springer}
}

@article{KLM_JMLR2020,
    title = {On Universal Approximation and Error Bounds for Fourier Neural Operators},
    journal = {Journal of Machine Learning Research},
    year = {2021},
    volume = {22},
    number = {290},
    pages = {1-76},
    url = {http://jmlr.org/papers/v22/21-0806.html},
    pdf = {https://jmlr.org/papers/volume22/21-0806/21-0806.pdf},
    eprint = {2107.07562},
    author = {Kovachki, Nikola  and Lanthaler, Samuel  and Mishra, Siddhartha }
}

@article{herrmann2024neural,
  title={Neural and spectral operator surrogates: unified construction and expression rate bounds},
  author={Herrmann, Lukas and Schwab, Christoph and Zech, Jakob},
  journal={Advances in Computational Mathematics},
  volume={50},
  number={4},
  pages={72},
  year={2024},
  publisher={Springer}
}

@misc{lanthaler2024lipschitzoperators,
    title = {Operator Learning of Lipschitz Operators: An Information-Theoretic Perspective},
    year = {2024},
    eprint = {2406.18794},
    archivePrefix = {arXiv},
    primaryClass = {cs.LG},
    url = {https://arxiv.org/abs/2406.18794},
    author = {Samuel  Lanthaler}
}

@misc{lanthaler2024parametric,
    title = {The Parametric Complexity of Operator Learning},
    year = {2024},
    eprint = {2306.15924},
    archivePrefix = {arXiv},
    primaryClass = {cs.LG},
    author = {Lanthaler, Samuel  and Stuart, Andrew M.}
}

@article{lanthaler2023pcanet,
    title = {Operator learning with PCA-Net: upper and lower complexity bounds},
    journal = {Journal of Machine Learning Research},
    year = {2023},
    volume = {24},
    number = {318},
    pages = {1--67},
    url = {http://jmlr.org/papers/v24/23-0478.html},
    author = {Samuel  Lanthaler}
}

@article{chen1995universal,
  title={Universal approximation to nonlinear operators by neural networks with arbitrary activation functions and its application to dynamical systems},
  author={Chen, Tianping and Chen, Hong},
  journal={IEEE transactions on neural networks},
  volume={6},
  number={4},
  pages={911--917},
  year={1995},
  publisher={IEEE}
}

@incollection{kls2024hna,
    title = {Operator learning: Algorithms and analysis},
    series = {Handbook of Numerical Analysis},
    publisher = {Elsevier},
    year = {2024},
    issn = {1570-8659},
    doi = {https://doi.org/10.1016/bs.hna.2024.05.009},
    url = {https://www.sciencedirect.com/science/article/pii/S1570865924000097},
    keywords = {Neural operator, Approximation theory},
    eprint = {2402.15715},
    author = {Kovachki, Nikola B. and Lanthaler, Samuel  and Stuart, Andrew M.}
}

@misc{abdeljawad2023,
      title={Sampling Complexity of Deep Approximation Spaces}, 
      author={Ahmed Abdeljawad and Philipp Grohs},
      year={2023},
      eprint={2312.13379},
      archivePrefix={arXiv},
      primaryClass={cs.LG},
      url={https://arxiv.org/abs/2312.13379}, 
}

@article{devore2021neural,
  title={Neural network approximation},
  author={DeVore, Ronald and Hanin, Boris and Petrova, Guergana},
  journal={Acta Numerica},
  volume={30},
  pages={327--444},
  year={2021},
  publisher={Cambridge University Press}
}

@article{mas2012lower,
  title={Lower bound in regression for functional data by representation of small ball probabilities},
  author={Mas, Andr{\'e}},
  year={2012}
}

@article{chen2026,
title= {Convergence rates for learning pseudo-differential operators},
author={Chen, Jiaheng and Sanz-Alonso, Daniel},
year={2026},
journal={arXiv preprint arXiv:2601.04463}
}

@article{subedi2025operator,
  title={Operator learning: A statistical perspective},
  author={Subedi, Unique and Tewari, Ambuj},
  journal={Annual Review of Statistics and Its Application},
  volume={13},
  year={2025},
  publisher={Annual Reviews}
}

@article{reinhardt2024statistical,
  title={Statistical learning theory for neural operators},
  author={Reinhardt, Niklas and Wang, Sven and Zech, Jakob},
  journal={arXiv preprint arXiv:2412.17582},
  year={2024}
}

@article{JMLR:v25:22-0719,
  author  = {Hao Liu and Haizhao Yang and Minshuo Chen and Tuo Zhao and Wenjing Liao},
  title   = {Deep Nonparametric Estimation of Operators between Infinite Dimensional Spaces},
  journal = {Journal of Machine Learning Research},
  year    = {2024},
  volume  = {25},
  number  = {24},
  pages   = {1--67},
  url     = {http://jmlr.org/papers/v25/22-0719.html}
}

@article{de2023convergence,
  title={Convergence rates for learning linear operators from noisy data},
  author={de Hoop, Maarten V and Kovachki, Nikola B and Nelsen, Nicholas H and Stuart, Andrew M},
  journal={SIAM/ASA Journal on Uncertainty Quantification},
  volume={11},
  number={2},
  pages={480--513},
  year={2023},
  publisher={SIAM}
}

@article{brugiapaglia2026short,
  title={A short tour of operator learning theory: Convergence rates, statistical limits, and open questions},
  author={Brugiapaglia, Simone and Franco, Nicola Rares and Nelsen, Nicholas H},
  journal={arXiv preprint arXiv:2603.00819},
  year={2026}
}

@article{PtO,
title={An operator learning perspective on parameter-to-observable maps},
author={Huang, Daniel Zhengyu and Nelsen, Nicholas H. and Trautner, Margaret},
journal={Foundations of Data Science},
volume={7},
number={1},
pages={163-225},
year={2025},
doi={10.3934/fods.2024037},
}

@article{YAROTSKY2017103,
title = {Error bounds for approximations with deep ReLU networks},
journal = {Neural Networks},
volume = {94},
pages = {103-114},
year = {2017},
issn = {0893-6080},
doi = {https://doi.org/10.1016/j.neunet.2017.07.002},
author = {Dmitry Yarotsky},
keywords = {Deep ReLU networks, Approximation complexity}
}

@article{bolcskei2019optimal,
  title={Optimal approximation with sparsely connected deep neural networks},
  author={Bolcskei, Helmut and Grohs, Philipp and Kutyniok, Gitta and Petersen, Philipp},
  journal={SIAM Journal on Mathematics of Data Science},
  volume={1},
  number={1},
  pages={8--45},
  year={2019},
  publisher={SIAM}
}

@article{hornik1989multilayer,
  title={Multilayer feedforward networks are universal approximators},
  author={Hornik, Kurt and Stinchcombe, Maxwell and White, Halbert},
  journal={Neural networks},
  volume={2},
  number={5},
  pages={359--366},
  year={1989},
  publisher={Elsevier}
}

@article{cybenko1989approximation,
  title={Approximation by superpositions of a sigmoidal function},
  author={Cybenko, George},
  journal={Mathematics of control, signals and systems},
  volume={2},
  number={4},
  pages={303--314},
  year={1989},
  publisher={Springer}
}

@article{cucker2002mathematical,
  title={On the mathematical foundations of learning},
  author={Cucker, Felipe and Smale, Steve},
  journal={Bulletin of the American mathematical society},
  volume={39},
  number={1},
  pages={1--49},
  year={2002}
}

@article{bodnar2025foundation,
  title={A foundation model for the Earth system},
  author={Bodnar, Cristian and Bruinsma, Wessel P and Lucic, Ana and Stanley, Megan and Allen, Anna and Brandstetter, Johannes and Garvan, Patrick and Riechert, Maik and Weyn, Jonathan A and Dong, Haiyu and others},
  journal={Nature},
  volume={641},
  number={8065},
  pages={1180--1187},
  year={2025},
  publisher={Nature Publishing Group UK London}
}

@article{price2024probabilistic,
  title={Probabilistic weather forecasting with machine learning},
  author={Price, Ilan and Sanchez-Gonzalez, Alvaro and Alet, Ferran and Andersson, Tom R and El-Kadi, Andrew and Masters, Dominic and Ewalds, Timo and Stott, Jacklynn and Mohamed, Shakir and Battaglia, Peter and others},
  journal={Nature},
  pages={1--7},
  year={2024},
  publisher={Nature Publishing Group}
}

@inproceedings{kurth2023fourcastnet,
  title={Fourcastnet: Accelerating global high-resolution weather forecasting using adaptive fourier neural operators},
  author={Kurth, Thorsten and Subramanian, Shashank and Harrington, Peter and Pathak, Jaideep and Mardani, Morteza and Hall, David and Miele, Andrea and Kashinath, Karthik and Anandkumar, Anima},
  booktitle={Proceedings of the platform for advanced scientific computing conference},
  pages={1--11},
  year={2023}
}

@article{degrave2022magnetic,
  title={Magnetic control of tokamak plasmas through deep reinforcement learning},
  author={Degrave, Jonas and Felici, Federico and Buchli, Jonas and Neunert, Michael and Tracey, Brendan and Carpanese, Francesco and Ewalds, Timo and Hafner, Roland and Abdolmaleki, Abbas and de Las Casas, Diego and others},
  journal={Nature},
  volume={602},
  number={7897},
  pages={414--419},
  year={2022},
  publisher={Nature Publishing Group}
}

@article{jumper2021highly,
  title={Highly accurate protein structure prediction with AlphaFold},
  author={Jumper, John and Evans, Richard and Pritzel, Alexander and Green, Tim and Figurnov, Michael and Ronneberger, Olaf and Tunyasuvunakool, Kathryn and Bates, Russ and {\v Z}{\'i}dek, Augustin and Potapenko, Anna and others},
  journal={nature},
  volume={596},
  number={7873},
  pages={583--589},
  year={2021},
  publisher={Nature Publishing Group}
}

@misc{goodfellow2016deep,
  title={Deep learning},
  author={Goodfellow, Ian},
  year={2016},
  publisher={MIT press}
}

@article{grohs2023proof,
  title={Proof of the theory-to-practice gap in deep learning via sampling complexity bounds for neural network approximation spaces},
  author={Grohs, Philipp and Voigtlaender, Felix},
  journal={Foundations of Computational Mathematics},
  pages={1--59},
  year={2023},
  publisher={Springer}
}

@article{Pinkus1999, 
    title={Approximation theory of the MLP model in neural networks}, 
    volume={8}, 
    DOI={10.1017/S0962492900002919}, 
    journal={Acta Numerica}, 
    author={Pinkus, Allan}, 
    year={1999}, 
    pages={143–195}
}

@conference{berner2023learning,
title = "Learning ReLU networks to high uniform accuracy is intractable",
author = "Julius Berner and Philipp Grohs and Felix Voigtlaender",
year = "2023",
language = "English",
note = "The Eleventh International Conference on Learning Representations: ICLR 2023 ; Conference date: 01-05-2023 Through 05-05-2023",
}

@book{tao2006analysis,
  title={Analysis ii},
  author={Tao, Terence},
  volume={1},
  year={2016},
  edition={Third},
  publisher={Springer}
}


\end{document}